\documentclass[twoside]{article}

%\usepackage{aistats2022}
% If your paper is accepted, change the options for the package
% aistats2022 as follows:
%
\usepackage[accepted]{aistats2022}
%
% This option will print headings for the title of your paper and
% headings for the authors names, plus a copyright note at the end of
% the first column of the first page.

% If you set papersize explicitly, activate the following three lines:

\setlength{\pdfpageheight}{11in}
\setlength{\pdfpagewidth}{8.5in}
% If you use natbib package, activate the following three lines:
%\usepackage[round]{natbib}
%\renewcommand{\bibname}{References}
%\renewcommand{\bibsection}{\subsubsection*{\bibname}}

% If you use BibTeX in apalike style, activate the following line:
%\bibliographystyle{apalike}

% For citations
\usepackage{natbib}

\usepackage{xcolor}

\usepackage{wrapfig}
\usepackage{balance}

\usepackage{amsmath,mathrsfs,amsfonts,amsthm,amscd,amssymb,graphicx,color,cite}
\usepackage{bbm}
\numberwithin{equation}{section}

\usepackage{upgreek}
\usepackage[titletoc,title]{appendix}
\usepackage{enumitem}

\usepackage{mathtools}
%\setlength{\evensidemargin}{0in} \setlength{\oddsidemargin}{0in}
%\setlength{\textwidth}{6.5in} \setlength{\topmargin}{0in}
%\setlength{\textheight}{8in}

 %distance

\newcommand{\R}{{\mathbb R}}

\newcommand{\G}{{\mathbb G}}

\newcommand{\N}{{\mathbb N}}

\newcommand\norm[1]{\left\| #1\right\|}

\newcommand{\calP}{{\mathcal P}}

\newcommand{\calS}{{\mathcal S}}

\newcommand{\calW}{{\mathcal W}}

\newcommand{\calK}{{\mathcal K}}

\newcommand{\Let}{\coloneqq}

\usepackage{xcolor}

\def\longequals{\mathbin{=\kern-2pt=}}

\newtheorem{theorem}{Theorem}[section]
\newtheorem{assumption}[theorem]{Assumption}
\newtheorem{corollary}[theorem]{Corollary}
\newtheorem{definition}[theorem]{Definition}
\newtheorem{remark}[theorem]{Remark}
\newtheorem{lemma}[theorem]{Lemma}
\newtheorem{proposition}[theorem]{Proposition}

\numberwithin{equation}{section}

\newcommand{\beq}{\begin{equation}}
\newcommand{\eeq}{\end{equation}}

\definecolor{darkred}{rgb}{.70,.12,.20}

\definecolor{darkgreen}{rgb}{.20,.52,.14}

 % remove this will turn off \myclearpage
%\def{\myclearpage}{\cleardoublepage}

%\def\mynewline{\par}

\usepackage{caption}
 \usepackage{subcaption}

\newcommand\blfootnote[1]{%
  \begingroup
  \renewcommand\thefootnote{}\footnote{#1}%
  \addtocounter{footnote}{-1}%
  \endgroup
}

\usepackage{hyperref}

\let\emptyset\varnothing

\begin{document}

% If your paper is accepted and the title of your paper is very long,
% the style will print as headings an error message. Use the following
% command to supply a shorter title of your paper so that it can be
% used as headings.
%
\runningtitle{Sobolev Transport: A Scalable Metric for Probability Measures with Graph Metrics}

% If your paper is accepted and the number of authors is large, the
% style will print as headings an error message. Use the following
% command to supply a shorter version of the authors names so that
% they can be used as headings (for example, use only the surnames)
%
\runningauthor{Tam Le, Truyen Nguyen, Dinh Phung, Viet Anh Nguyen}

\twocolumn[

\aistatstitle{Sobolev Transport: A Scalable Metric for Probability Measures with Graph Metrics}

\aistatsauthor{Tam Le$^{*}$ \And Truyen Nguyen$^{*}$ \And  Dinh Phung \And Viet Anh Nguyen }

\aistatsaddress{ RIKEN AIP \And  The University of Akron \And Monash University \And VinAI Research} ]

\begin{abstract}
Optimal transport (OT) is a popular measure to compare probability distributions. However, OT suffers a few drawbacks such as (i) a high complexity for computation, (ii) indefiniteness which limits its applicability to kernel machines. In this work, we consider probability measures supported on a graph metric space and propose a novel Sobolev transport metric. We show that the Sobolev transport metric yields a \emph{closed-form} formula for fast computation and it is negative definite. 
We show that the space of probability measures endowed with this transport distance is isometric to a bounded convex set in a Euclidean space with a weighted $\ell_p$ distance. We further exploit the negative definiteness of the Sobolev transport to design positive-definite kernels, and evaluate their performances against other baselines in document classification with word embeddings and in topological data analysis. 
\end{abstract}

%%%%%%%%%%%%%%%%%%%%%%%%%%
\section{INTRODUCTION} \label{sec:intro}

Optimal transport (OT) is a powerful tool to compare probability measures. OT is widely used in machine learning~\citep{courty2017joint, bunne2019, nadjahi2019asymptotic, peyre2019computational, ref:kuhn2019wasserstein, titouan2019optimal, janati2020entropic, muzellec2020missing,  paty2020regularity, altschuler2021averaging, fatras2021unbalanced, klicpera2021scalable, le2021adversarial, mukherjee2021outlier, nguyen2021optimal,  scetbon2021low, si2021testing}, statistics~\citep{mena2019statistical, pmlr-v99-weed19a, ref:blanchet2021statistical}, computer graphics and vision~\citep{rabin2011wasserstein, solomon2015convolutional, lavenant2018dynamical, nguyen2021point}. However, evaluating the OT incurs a high computational complexity in general~\citep{peyre2019computational} which leads to several proposals in the recent literature to address this drawback of OT, e.g., approximate using entropic regularization~\citep{Cuturi-2013-Sinkhorn}, or exploit geometric structure of supports~\citep{rabin2011wasserstein, ref:le2019tree, pmlr-v130-le21a}. Among them, tree-Wasserstein~\citep{ref:evans2012phylogenetic, ref:le2019tree} (TW) leverages the tree structure over supports to obtain a closed-form for fast computation. However, the requirement about tree structure for supports may be restricted in applications. In this work, we exploit the graph structure, which appears in several applications, and propose a scalable variant of OT to compare probability measures supported on a graph metric space.

%There are several approaches to address this drawback of OT, 

Given  any two distributions $\mu$ and $\nu$ supporting on nodes of a tree with nonnegative weights, it is known from  \citet{ref:evans2012phylogenetic, ref:le2019tree} that the $1$-Wasserstein distance $\calW_1$ w.r.t.~the tree distance (i.e., TW) admits a closed-form expression, which allows a fast computation (i.e., its complexity is linear to the number of edges in the tree). The key techniques in deriving this formula are to leverage the dual formulation of $\calW_1$ and exploit the fact that there is a unique path between any two nodes on the tree. Due to a different nature of the dual formulation between $p=1$ and $p>1$, it is, unfortunately, unknown whether the closed-form expression still holds for the $p$-Wasserstein distance with ground tree metric when $p>1$. It is also not known if the closed-form for $\calW_1$ with ground tree metric can be extended to general graphs where there are multiple paths connecting two nodes (i.e., graph metric ground cost). The approaches proposed in~\citet{ref:evans2012phylogenetic, ref:le2019tree, pmlr-v130-le21a} do not resolve these questions, either. 

%Here, we propose a novel family of transportation distances, namely Sobolev transport $\calS_p$ of any order $p \geq 1$,  between two probability distributions supported on a graph metric space which yields a closed-form for fast computation. In case $p=1$ and the graph is a tree, we show that this distance coincides with the Wasserstein distance $\calW_1$.

%\textbf{Related Work.}
\paragraph{Related Work.} 
Our proposed Sobolev transport\blfootnote{$^*$: Two authors contributed equally.} is an instance of the integral probability metric~\citep{muller1997integral} and closely related to $\calW_1$ for probability measures supported on a graph metric space. Similar to TW, the Sobolev transport exploits the structure of supports for a fast computation and has similar properties as the TW (e.g., both of them are negative definite which is the key to build positive definite kernels for applications with kernel machines). Moreover, Sobolev transport has more flexibility and degrees of freedom than TW since it requires a graph structure rather than tree structure over supports.

We further note that the Sobolev transport leverages a graph structure for probability measures supported on a \emph{graph metric space}, rather than a \emph{general graph} over supports. For example, the edge weight in the graph, corresponding with a graph metric space, is a cost to move from one node to the other node of that edge (i.e., the distance between two edge nodes), rather than an affinity between these edge nodes of the graph used in diffusion earth mover's distance~\citep{ref:tong2021diffusion}.

% Literature review:
% \begin{itemize}

%     \item sliced-IPM \citep{NEURIPS2020_SlicedIPM}

% \end{itemize}

\paragraph{Contributions.}
%\textbf{Contributions. } 
We propose a novel distance, named the Sobolev transport $\calS_p$ of any order $p \geq 1$, to measure the distance between probability measures supported on a graph metric space. Moreover,
\begin{itemize}%[leftmargin = 5mm]
 %   \vspace{-0.5em}
    \item we show that $\calS_p$ (i) admits a fast closed-form computation and (ii) is negative definite. Consequently, we can derive positive-definite kernels using our proposed Sobolev transport distance $\calS_p$, which can be applied for many kernel-dependent frameworks in machine learning.
    
    \item when $p=1$ and with a tree structure, we draw a connection of our proposed Sobolev transport $\calS_1$ to the $1$-Wasserstein distance $\calW_1$. 
    \item we also prove that the space of probability measures with Sobolev transport metric $\calS_p$ is isometric to a bounded convex set in a Euclidean space with a weighted $\ell_p$ distance.
\end{itemize}

In Section \ref{sec:preli}, we provide the setup of our problem. The Sobolev transport is formally introduced in Section~\ref{sec:distance}, and we discuss its nice properties in Section \ref{sec:properties}. In Section \ref{sec:numerical}, we illustrate empirically that the kernel machines using our proposed Sobolev transport distance perform favorably compared to other baselines in real-world applications. Proofs are placed in the supplementary (Section \ref{appsec:proofs}). Furthermore, we have released code for our proposals.\footnote{\url{https://github.com/lttam/SobolevTransport}}

% Due to space limit

%%%%%%%%%%%%%%%%%%%%%%
\section{PRELIMINARIES} \label{sec:preli}

Let $\G =(V,E)$ be an undirected  and connected graph  with positive edge lengths $\{w_e\}_{e\in E}$. We consider a physical graph in the sense that $V$ is a subset of the vector  space $\R^n$ and each edge $e \in E$ is the standard line segment in $\R^n$ connecting the two end-points of $e$. The most important  case for our applications is when  $w_e$ coincides with the Euclidean length of the edge $e$. 
%However, the weight  $w_e$ is only assumed to be a positive number. 
%On the other hand, the weight $w_e$ represents the physical length of the edge even though it does not have to be any Minkowski length of $e$. 
%For example, $w_e$ can be any %positive multiple of the %Euclidean length of $e$.  

Henceforth, by mentioning the graph $\G$, we mean the set of  \textit{all}  nodes $V$ together with \textit{all} points forming the edges $E$.\footnote{I.e., the collection of all points in $\R^n$ belongs to one of the edges.} This general consideration allows us to work with a continuous setting to derive a closed-form formula for a newly proposed transport distance. Notice that we can canonically measure the  weighted length for any path in $\G$ whose end-points might not be  nodes in $V$. Indeed, for any two points $x$ and $y$  belonging to the same edge $e= \langle u, v\rangle$ connecting two nodes $u$ and $v$,  we can express $x = (1-s) u + s v$ and $y = (1-t)u + t v$ for some numbers $t,s\in [0,1]$. Then, the length of the path  connecting $x$ and $y$ along edge $e$ (i.e., the line segment $\langle x, y\rangle$)  is defined by $|t-s| w_e$. The length for  an arbitrary path in $\G$ is defined similarly by breaking down into pieces and summing over their corresponding lengths.

% Let us first  recall the definition of a graph metric.
% \begin{definition}[Graph metric]
% Given a graph $\G$ and a set $\Omega\subseteq \G$. A function  $d_{\G} : \Omega \times \Omega \rightarrow [0,\infty)$ is called a \textit{graph metric} on $\Omega$  associated to $\G$   if for every $x, y \in \Omega$ we have that $d_{\G}(x, y)$ equals to the length of the shortest path on $\G$ between $x$ and $y$.
% \end{definition}

We impose on $\G$ the following graph metric $d$: for every $x, y \in \G$, $d(x, y)$ equals to the length of the shortest path on $\G$ between $x$ and $y$. Because the edges are undirected and the lengths $\{w_e\}_{e \in E}$ are positive, it is easy to show that $d$ satisfies the non-negativity, the symmetry and the triangle inequality properties. Thus, $d$, by construction, is a metric.

% \begin{definition}[Graph metric]
% Given a graph $\G$. A function  $d : \G \times \G \rightarrow [0,\infty)$ is a \textit{graph metric} on $\Omega$  associated to $\G$ if for every $x, y \in \G$ we have that $d(x, y)$ equals to the length of the shortest path on $\G$ between $x$ and $y$.
% \end{definition}

%Let $d(x,y)$ denote the shortest weighted length among all possible paths in $\G$ connecting $x$ and $y$ (\dinh{use $d^*(.,.)$ instead to emphasize 'shortest' aspect!?}). 

Further, we assume that $\G$ satisfies the following uniqueness property of the shortest paths.
\begin{assumption}[Unique-path root node] \label{a:unique}
There exists a root node $z_0\in V$ such that for every $x \in \G$,  $d(x,z_0)$ is attained by a \emph{unique} shortest path connecting $x$ and $z_0$. 
\end{assumption}

% \bigskip
% {\bf Assumption 1}: We assume that there exists a node $z_0\in V$ with the property that for every $x$ in $\G$,  $d(x,z_0)$ is achieved by a unique  path connecting $x$ and $z_0$. 
% % In particular, $\G$ is connected as the whole %graph is reachable from $z_0$.
% \bigskip

Recall that a graph is geodetic if for every pair of nodes  the shortest path between them is unique. Thus, geodetic graphs are special examples satisfying  Assumption~\ref{a:unique}. An example of geodetic graph is given in Figure~\ref{fg:GeodeticGraph}.

For $1\leq p\leq \infty$ and for a  nonnegative Borel measure $\lambda$ on $\G$, let   $L^p( \G, \lambda)$ denote the space of all Borel measurable functions  $ f:\G\to \R$ satisfying $\int_\G |f(y)|^p \lambda(\mathrm{d}y) <\infty$. Two functions $f_1, f_2 \in L^p( \G, \lambda)$ are considered to be the same  if $f_1(x) =f_2(x)$ for $\lambda$ almost every $x$ in $\G$. Then, $L^p( \G, \lambda)$ is a normed space with the norm  defined by
\[
%\vspace{-1em}
\|f\|_{L^p(\G, \lambda)} \coloneqq  \Big(\int_\G |f(y)|^p \lambda(\mathrm{d}y)\Big)^\frac1p.
\]

\begin{figure}
%\begin{wrapfigure}{r}{0.22\textwidth}
%  \vspace{-6pt}
  \begin{center}
    \includegraphics[width=0.2\textwidth]{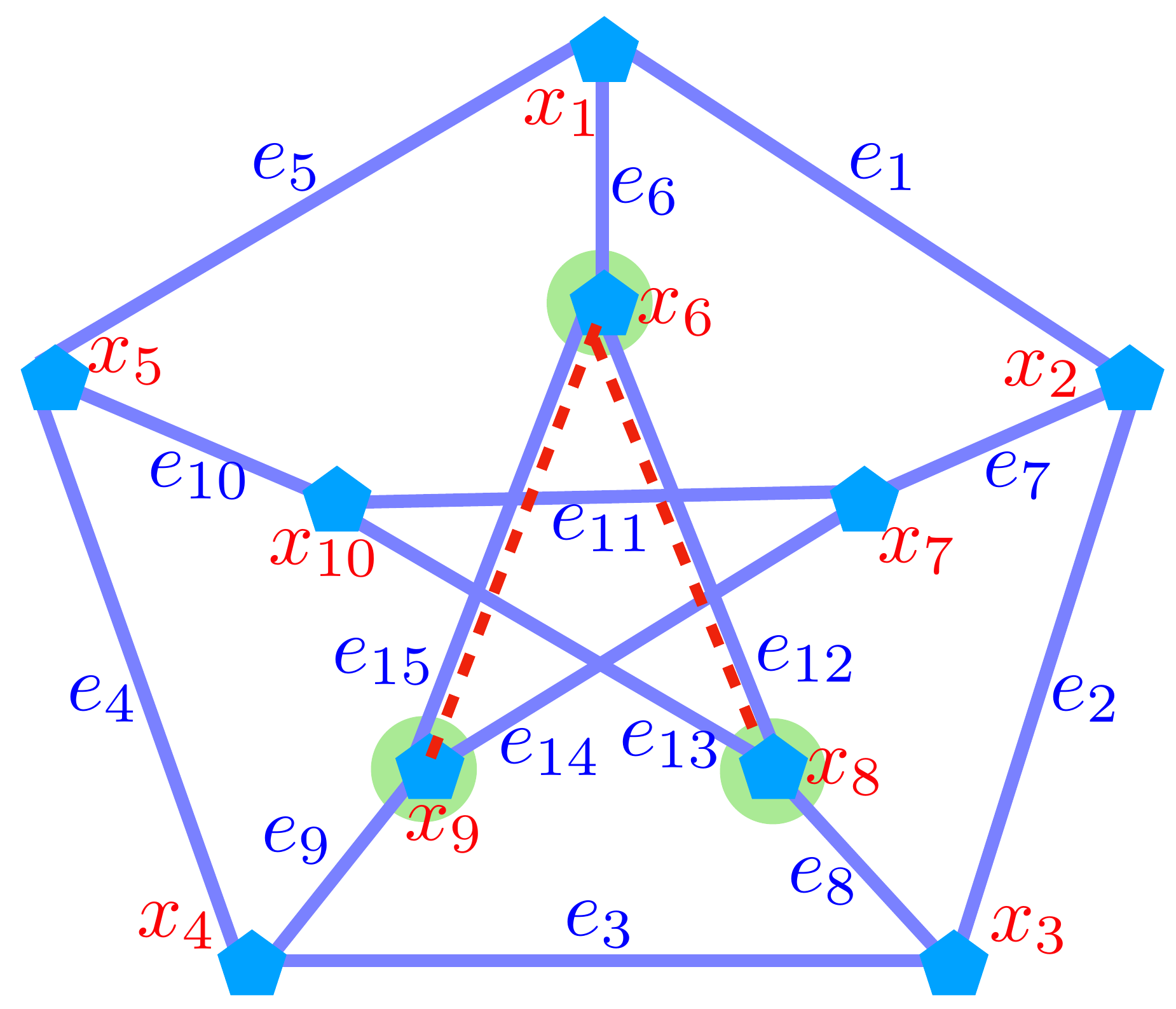}
  \end{center}
  \vspace{-6pt}
  \caption{An illustration for a geodetic graph with 10 nodes $\left\{x_i\right\}_{i=1}^{10}$ and 15 edges $\left\{e_j\right\}_{j=1}^{15}$ where each edge weight equals to one, i.e., $w_{e_j}=1, \forall j$. For any $x_i, x_j$, there is a unique shortest path between them, with a length $2$. Let $x_1$ be the unique-path root node (i.e., $z_0 = x_1$) and $\widetilde{\G}$ be a subgraph containing 3 nodes $\left\{x_6, x_8, x_9\right\}$ and 2 edges $\left\{e_{12}, e_{15}\right\}$, then $\Lambda(x_6) = \gamma(e_6) = \widetilde{\G}$.}
  \label{fg:GeodeticGraph}
 \vspace{-6pt}
\end{figure}
%\end{wrapfigure}

Throughout the paper, we use $\langle x_1, x_2\rangle$ to denote the line segment in $\R^n$ connecting two points $x_1, x_2$, while $( x_1, x_2)$ means the same line segment but without its two end-points. The symbol $[z_0,y]$ denotes the shortest path in $\G$ connecting $z_0$ and $y\in\G$. Under Assumption~\ref{a:unique}, $[z_0, y]$ is a unique path. Also, $\calP(\G)$ represents the  set  of all Borel probability measures on $\G$. The conjugate of  a number $1\leq p\leq \infty$ is denoted by $p'$. This is the  number in $ [1,\infty]$ satisfying $\frac1p +\frac{1}{p'}=1$. In case $p=1$, we have $p' = \infty$. Given $x\in \G$,  define  \begin{equation}\label{sub-graph}
\Lambda(x) \coloneqq \{y\in \G: \, x\in [z_0,y]\}.
\end{equation}
% Hereafter, $[z_0,y]$ denotes the unique shortest path in $\G$ connecting $z_0$ and $y$ (see our assumption). 
 %Thus, $\Lambda(x)$ is the collection of all points $y$ on $\G$ such that the unique shortest path connecting $y$ and $z_0$ passes through $x$.
 Notice that $\Lambda(x)$ is always non-empty, and  $x \in \Lambda(x)$.
% \begin{itemize}
    % \item $\Lambda(x)$ is always non-empty, and $x \in \Lambda(x)$
% \end{itemize}
 On the other hand, let  $\gamma_e$ denote  the collection of all points $y\in\G$ such that the unique shortest path connecting $y$ and $z_0$ contains the edge $e$. That is, 
\begin{equation}\label{sub-graph-over-edge}
\gamma_e \coloneqq \{y\in \G: \, e\subset  [z_0,y]\}.
\end{equation}
In Figure~\ref{fg:GeodeticGraph}, we show a computation for  $\Lambda(x)$ and $\gamma_e$. Furthermore, we write $|E|$ and $|V|$ for the cardinality of sets $E$ and $V$ respectively. For a measure $\mu$, let $\text{supp}(\mu)$ denote the set of supports of $\mu$.

%%%%%%%%%%%%%%%%%%%%%%%%%%%%%%%%%%%%
\section{SOBOLEV TRANSPORT DISTANCE}
\label{sec:distance}

%\viet{what defines a Sobolev distance?}

%\tam{Is there any standard "Sobolev distance"? In case, we name it, we may consider "Sobolev Transport distance" since we consider a specific Sobolev distance with graph metric.}

In this section, we define an instance of integral probability metrics  between probability distributions on the graph. Our definition is inspired by  the dual form of the $1$-Wasserstein distance $\calW_1$, and by \citet{Mroueh-2018-Sobolev,Xu-2021-Towards}. Instead of using the Lipschitz constraint for the critic as in $\calW_1$, we relax it by considering the constraint in a Sobolev space. We first propose a generalized version of the fundamental theorem of calculus, which defines the derivative of a function at any point $x \in \G$ dependent on the shortest path from the root node $z_0$ to $x$. 

%Unlike on Euclidean space where Sobolev spaces are well known, we need to introduce a new Sobolev space on the graph $\G$. The key point is to have a  suitable notion of derivative on the graph. The following definition of derivative is based on the idea of the fundamental theorem of calculus.

\begin{definition}[Graph-based Sobolev space] \label{def:Sobolev}
Let $\lambda$ be a nonnegative Borel measure on $\G$, and let  $1\leq p\leq \infty$. A continuous function $f: \G \to \R$ is said to belong to the Sobolev space $W^{1,p}(\G, \lambda)$ if there exists a  function $h\in L^p( \G, \lambda) $ satisfying 
\begin{equation}\label{FTC}
f(x) -f(z_0) =\int_{[z_0,x]} h(y) \lambda(\mathrm{d}y)  \quad \forall x\in \G.
%\vspace{-0.8em}
\end{equation}
Such function  $h$ is unique in $L^p(\G, \lambda) $ and is called the graph derivative of $f$ w.r.t.~the measure $\lambda$. Hereafter, this graph derivative of $f$ is denoted $f'$.
\end{definition}

The integral in Definition~\ref{def:Sobolev} is a line integral. We now formally define the Sobolev transport distance between two  distributions supported on $\G$.
\begin{definition}[Sobolev transport distance on graphs]
\label{def:distance}
Let $\lambda$ be a nonnegative Borel measure on $\G$.
Let  $1\leq p\leq \infty$  and let  $p'$ be its conjugate, i.e., the number $p'\in [1,\infty]$ satisfying $\frac1p +\frac{1}{p'}=1$.    For $\mu, \nu\in \calP(\G)$, we define 
\begin{equation} \notag \label{eq:distance}
%\vspace{-0.8em}
\calS_p(\mu,\nu ) \! \coloneqq \! \left\{
\begin{array}{cl}
\hspace{-0.4em} \sup & \Big[\int_\G f(x) \mu(\mathrm{d}x) - \int_\G f(x) \nu(\mathrm{d}x)\Big] \\
\hspace{-0.4em} \mathrm{s.t.} & f\in W^{1,p'}(\G, \lambda),  \, \|f'\|_{L^{p'}(\G, \lambda)}\leq 1.
\end{array}
\right.
\end{equation}
\end{definition}

By definition, the quantity $\calS_p(\mu,\nu )$ depends on the measure $\lambda$ and 
on the choice of the unique-path root node $z_0$ via the graph derivative $f'$; however, we omit these dependencies 
%on $z_0$ 
when no confusion may arise. The role of $\lambda$  will be displayed in Section \ref{sec:properties} when we make a connection between our transport distance and the Wasserstein distance. Specifically, if $p=1$ and $\lambda([z_0,x])= d(z_0,x)$, then the constrain for $f$  in Definition~\ref{def:distance} is the same as $|f(x)-f(z_0)|\leq d(z_0,x)$ for every $x\in \G$.  Thus, the Sobolev transport distance $\calS_1$ coincides with the $1$-Wasserstein distance in this particular case.
The next result asserts that $\calS_p(\mu,\nu )$ is an integral probability metric on the graph $\G$.

\begin{lemma}[Metrization] \label{lem:metrize}
For any $1\leq p\leq \infty$, the Sobolev transport $\calS_p$ is a metric on the space $\calP(\G)$.
\end{lemma}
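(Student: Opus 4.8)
The plan is to verify directly that $\calS_p$ satisfies the four defining properties of a metric, exploiting that it is an integral probability metric: a supremum over the fixed, symmetric, convex class $\mathcal{F}\coloneqq\{f\in W^{1,p'}(\G,\lambda):\|f'\|_{L^{p'}(\G,\lambda)}\le 1\}$ of the \emph{linear} functional $f\mapsto \int_\G f\,\mathrm{d}\mu-\int_\G f\,\mathrm{d}\nu$. Three of the properties are then structural. For symmetry, note $\mathcal{F}$ is invariant under $f\mapsto -f$ (since $(-f)'=-f'$ has the same $L^{p'}$-norm); substituting $-f$ in the supremum swaps the roles of $\mu$ and $\nu$, giving $\calS_p(\mu,\nu)=\calS_p(\nu,\mu)$. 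Non-negativity holds because the constant $f\equiv 0$ lies in $\mathcal{F}$ and gives objective value $0$, so the supremum is $\ge 0$. For the triangle inequality I would use that, for each fixed $f\in\mathcal{F}$, the objective splits as $\int_\G f\,\mathrm{d}\mu-\int_\G f\,\mathrm{d}\sigma = \big(\int_\G f\,\mathrm{d}\mu-\int_\G f\,\mathrm{d}\nu\big)+\big(\int_\G f\,\mathrm{d}\nu-\int_\G f\,\mathrm{d}\sigma\big)\le \calS_p(\mu,\nu)+\calS_p(\nu,\sigma)$, and then take the supremum over $f$.

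Before invoking these I must check $\calS_p(\mu,\nu)<\infty$, so that it is genuinely real-valued. Here I would use that adding a constant to $f$ leaves the objective unchanged (the two added masses cancel, as $\mu,\nu$ are probability measures), so I may normalize $f(z_0)=0$. Then the identity \eqref{FTC} together with H\"older's inequality gives $|f(x)|=\big|\int_{[z_0,x]}f'\,\lambda(\mathrm{d}y)\big|\le \|f'\|_{L^{p'}(\G,\lambda)}\,\lambda([z_0,x])^{1/p}\le \lambda(\G)^{1/p}$ for every $x\in\G$, so the objective is bounded in absolute value by $2\,\lambda(\G)^{1/p}$. This is finite provided $\lambda(\G)<\infty$, which holds in our setting since $\G$ is a finite union of bounded edges.

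The crux, and the step I expect to be the main obstacle, is the identity of indiscernibles. The direction $\mu=\nu\Rightarrow\calS_p(\mu,\nu)=0$ is immediate. For the converse, I would first observe that $\calS_p(\mu,\nu)=0$ forces $\int_\G f\,\mathrm{d}\mu=\int_\G f\,\mathrm{d}\nu$ for \emph{every} $f\in W^{1,p'}(\G,\lambda)$: the supremum being $0$ bounds the sign-symmetric objective above by $0$ on $\mathcal{F}$, and positive scaling of $f$ then removes the norm constraint. It remains to show that $W^{1,p'}(\G,\lambda)$ is measure-determining on $\G$, after which the conclusion $\mu=\nu$ follows because $(\G,d)$ is a compact metric space and a Borel probability measure on it is determined by its integrals against continuous functions (Riesz representation). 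My plan for this richness statement is to manufacture enough test functions by prescribing their graph derivatives: for $a\in\G$ the choice $h=\mathbbm{1}_{\Lambda(a)}\in L^\infty(\G,\lambda)\subseteq L^{p'}(\G,\lambda)$ produces, via \eqref{FTC}, a function in $W^{1,p'}(\G,\lambda)$ whose integrals recover the masses $\mu(\Lambda(a))$; alternatively, and more cleanly, I would argue that every Lipschitz function $g$ on $(\G,d)$ lies in $W^{1,p'}(\G,\lambda)$ — its outward arc-length derivative, reweighted by the density of $\lambda$, is bounded hence in $L^{p'}$ and satisfies \eqref{FTC} along each unique path $[z_0,x]$ — since Lipschitz functions already determine Borel probability measures on a compact metric space.

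The delicate points I would watch most carefully lie in this last step. First, the candidate derivative $h$ must be a single well-defined element of $L^{p'}(\G,\lambda)$, consistent across all the overlapping paths $[z_0,x]$; this is exactly where Assumption~\ref{a:unique} is essential, as it fixes a unique outward direction at each point and so makes the graph derivative unambiguous (matching the uniqueness already asserted in Definition~\ref{def:Sobolev}). Second, I would need $\lambda$ to charge every edge (full support) and to be non-atomic, without which the constructed functions are flat on a $\lambda$-null edge, fail to separate points there, and $\calS_p$ degenerates to a pseudometric; I expect the metric claim to rely on such a regularity hypothesis on $\lambda$ being in force.
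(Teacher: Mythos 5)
Your argument for nonnegativity ($f\equiv 0$), symmetry (invariance of the constraint class under $f\mapsto -f$), and the triangle inequality (splitting the linear objective through an intermediate measure $\sigma$ and taking suprema) is exactly the paper's proof, and your preliminary finiteness check via H\"older is a sound addition that the paper omits. Where you genuinely diverge is at the identity of indiscernibles. The paper stops after the step you also take --- namely that $\calS_p(\mu,\nu)=0$ forces $\int_\G f\,\mathrm{d}\mu=\int_\G f\,\mathrm{d}\nu$ for every $f\in W^{1,p'}(\G,\lambda)$ --- and then simply asserts that this gives $\mu=\nu$, with no argument that the graph Sobolev class is measure-determining. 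You attempt to prove precisely that richness statement, and in doing so you correctly expose that, for the completely arbitrary nonnegative Borel measure $\lambda$ hypothesized in Definition~\ref{def:distance} and Lemma~\ref{lem:metrize}, it is false: if $\lambda$ vanishes on an edge $e$, then every $f\in W^{1,p'}(\G,\lambda)$ is constant along $e$ (its increments are integrals against $\lambda$), so $\calS_p(\delta_x,\delta_y)=0$ for distinct $x,y\in e$ reached through the same endpoint, and $\calS_p$ is only a pseudometric. So the nondegeneracy hypothesis you flag (e.g., $\lambda=\lambda^*$, or $\lambda$ non-atomic and charging every edge) is not excess caution on your part; it is a gap in the lemma as stated, which the paper's own terse proof glosses over. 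Your proof makes the needed hypothesis visible, which is a real improvement.

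One technical slip in your construction: with $h=\mathbbm{1}_{\Lambda(a)}$, the function $f_a(x)=\int_{[z_0,x]}h\,\mathrm{d}\lambda$ does not have $\int_\G f_a\,\mathrm{d}\mu=\mu(\Lambda(a))$; the Fubini computation (as in the proof of Proposition~\ref{prop:closed-form}) gives $\int_\G f_a\,\mathrm{d}\mu=\int_{\Lambda(a)}\mu(\Lambda(y))\,\lambda(\mathrm{d}y)$. The cleaner route is to use the unconstrained integral identity for \emph{all} $h\in L^{p'}(\G,\lambda)$: by that same Fubini identity it reads $\int_\G h(y)\bigl[\mu(\Lambda(y))-\nu(\Lambda(y))\bigr]\lambda(\mathrm{d}y)=0$, which forces $\mu(\Lambda(\cdot))=\nu(\Lambda(\cdot))$ $\lambda$-almost everywhere; one then concludes $\mu=\nu$ under your regularity hypothesis on $\lambda$. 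Your alternative Lipschitz route needs the additional assumption $\lambda^*\ll\lambda$ (so that arc-length increments can be rewritten as integrals against $\lambda$), so the prescribed-derivative route is the one to keep.
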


The next  result gives a  comparison between Sobolev transport distances with different exponent $p$.

\begin{proposition}[Upper bound] \label{prop:upper}
Assume that $\lambda$ is a finite and nonnegative Borel measure on $\G$. Then, for any $1\leq p < q\leq \infty$ with conjugates $1 \le q' < p' \le \infty$,  we have 
\[
%\vspace{-0.8em}
\calS_p(\mu,\nu ) 
\leq \lambda(\G)^{\frac{1}{q'} - \frac{1}{p'} } \, \, \calS_q(\mu,\nu ). 
\]
\end{proposition}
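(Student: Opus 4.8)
The plan is to reduce the inequality to the elementary fact that, on a \emph{finite} measure space, higher-exponent Lebesgue norms control lower-exponent ones. Concretely, since $p<q$ forces $q'<p'$, I would first establish the interpolation bound
\[
\|g\|_{L^{q'}(\G,\lambda)} \leq \lambda(\G)^{\frac{1}{q'}-\frac{1}{p'}}\,\|g\|_{L^{p'}(\G,\lambda)}
\qquad\text{for every } g\in L^{p'}(\G,\lambda).
\]
When $p'<\infty$ this is just H\"older's inequality applied to the pair $|g|^{q'}\cdot 1$ with exponents $p'/q'$ and its conjugate $p'/(p'-q')$, after which one takes the $q'$-th root and uses $\tfrac{p'-q'}{p'q'}=\tfrac{1}{q'}-\tfrac{1}{p'}$; when $p=1$ (so $p'=\infty$) it follows directly from $|g|\le\|g\|_{L^\infty}$ $\lambda$-a.e., with the convention $1/p'=0$. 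This is the only place where the finiteness assumption $\lambda(\G)<\infty$ is used, and it is essential.

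The second step is to transfer this to the feasible sets of the two variational problems. Write $C\coloneqq \lambda(\G)^{\frac{1}{q'}-\frac{1}{p'}}$ (with $C>0$; the degenerate case $\lambda(\G)=0$ forces every admissible $f$ to be constant, so both sides vanish). Take any $f$ admissible for $\calS_p$, i.e.\ $f\in W^{1,p'}(\G,\lambda)$ with $\|f'\|_{L^{p'}(\G,\lambda)}\le 1$. Since $q'<p'$ and $\lambda$ is finite, the inclusion $L^{p'}(\G,\lambda)\subset L^{q'}(\G,\lambda)$ gives $f'\in L^{q'}(\G,\lambda)$, hence $f\in W^{1,q'}(\G,\lambda)$; the interpolation bound then yields $\|f'\|_{L^{q'}(\G,\lambda)}\le C$. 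Consequently the rescaled function $\tilde f\coloneqq f/C$ satisfies $\|\tilde f'\|_{L^{q'}(\G,\lambda)}\le 1$, so $\tilde f$ is admissible for $\calS_q$.

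Finally I would compare the objective values. By linearity of the integral,
\[
\int_\G f\,\mathrm{d}\mu-\int_\G f\,\mathrm{d}\nu
= C\Big(\int_\G \tilde f\,\mathrm{d}\mu-\int_\G \tilde f\,\mathrm{d}\nu\Big)
\le C\,\calS_q(\mu,\nu),
\]
the last inequality holding because $\tilde f$ is feasible for the $\calS_q$-problem. Taking the supremum over all $f$ admissible for $\calS_p$ gives $\calS_p(\mu,\nu)\le C\,\calS_q(\mu,\nu)$, which is exactly the claim. I do not expect a serious obstacle: the result is essentially a one-line consequence of H\"older once the two variational formulations are aligned. The only points requiring care are the bookkeeping of the conjugate exponents (verifying $q'<p'$ and that $\tfrac{1}{q'}-\tfrac{1}{p'}\ge 0$), the Sobolev-space inclusion $W^{1,p'}(\G,\lambda)\subset W^{1,q'}(\G,\lambda)$ that legitimizes the rescaling, and the endpoint cases $p=1$ and $q=\infty$, which are covered by the usual $L^\infty$ and $L^1$ conventions.
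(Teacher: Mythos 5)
Your proposal is correct and follows essentially the same route as the paper: both arguments rescale an admissible function for $\calS_p$ by the factor $\lambda(\G)^{\frac{1}{q'}-\frac{1}{p'}}$ so that it becomes admissible for $\calS_q$ (the paper writes this as multiplication by $a=\lambda(\G)^{\frac{1}{p'}-\frac{1}{q'}}$), then compare objective values and take the supremum. The only cosmetic difference is that you justify the norm comparison $\|f'\|_{L^{q'}}\leq \lambda(\G)^{\frac{1}{q'}-\frac{1}{p'}}\|f'\|_{L^{p'}}$ via H\"older's inequality, whereas the paper uses Jensen's inequality; these are interchangeable here.
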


Our proposed Sobolev transport distance $\calS_p$ admits  a closed-form formula as follows.
\begin{proposition}[Closed-form formula]\label{prop:closed-form}
Let $\lambda$ be any nonnegative Borel measure on $\G$, and let  $1\leq p\leq \infty$. Then, we have 
\[
%\vspace{-0.6em}
\calS_{p}(\mu,\nu )^p
= \int_{\G} | \mu(\Lambda(x)) -  \nu(\Lambda(x))|^p \, \lambda(\mathrm{d}x),
\]
where $\Lambda(x)$ is the subset of $\G$  defined by \eqref{sub-graph}. 
\end{proposition}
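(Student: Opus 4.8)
The plan is to unfold the supremum in Definition~\ref{def:distance} by substituting the fundamental-theorem-of-calculus representation \eqref{FTC}, which converts the linear objective into a pairing between the graph derivative $h=f'$ and a fixed function built from $\mu$ and $\nu$, and then to invoke the sharp form of Hölder's inequality ($L^{p'}$--$L^p$ duality). Concretely, fix a feasible $f\in W^{1,p'}(\G,\lambda)$ and set $h=f'$. Since $\mu,\nu\in\calP(\G)$ are probability measures, the constant $f(z_0)$ integrates identically against both and cancels, so by \eqref{FTC},
\[
\int_\G f\,\mathrm{d}\mu-\int_\G f\,\mathrm{d}\nu
=\int_\G\Big(\int_{[z_0,x]}h(y)\,\lambda(\mathrm{d}y)\Big)(\mu-\nu)(\mathrm{d}x).
\]
The decisive observation is the incidence identity $y\in[z_0,x]\iff x\in\Lambda(y)$, immediate from the definition \eqref{sub-graph} of $\Lambda$. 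Rewriting the inner line integral as $\int_\G h(y)\,\mathbbm{1}_{\{x\in\Lambda(y)\}}\,\lambda(\mathrm{d}y)$ and swapping the order of integration by Fubini's theorem yields
\[
\int_\G f\,\mathrm{d}\mu-\int_\G f\,\mathrm{d}\nu
=\int_\G h(y)\,\big[\mu(\Lambda(y))-\nu(\Lambda(y))\big]\,\lambda(\mathrm{d}y).
\]

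Writing $g(y):=\mu(\Lambda(y))-\nu(\Lambda(y))$ (which is $\lambda$-measurable as a byproduct of Fubini and satisfies $|g|\le1$), the problem reduces to evaluating $\sup\{\int_\G hg\,\mathrm{d}\lambda:\ h=f',\ \|h\|_{L^{p'}(\G,\lambda)}\le1\}$. For the upper bound, Hölder's inequality gives $\int_\G hg\,\mathrm{d}\lambda\le\|h\|_{L^{p'}}\,\|g\|_{L^p}\le\|g\|_{L^p(\G,\lambda)}$ for every feasible $f$, hence $\calS_p(\mu,\nu)\le\|g\|_{L^p(\G,\lambda)}$, which is precisely the $p$-th root of the claimed quantity. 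For the matching lower bound I would exhibit the Hölder extremizer: in the nondegenerate case $0<\|g\|_{L^p}<\infty$ with $1<p<\infty$, take $h:=\sgn(g)\,(|g|/\|g\|_{L^p})^{p-1}$ (and $h:=\sgn(g)$ when $p=1$), which satisfies $\|h\|_{L^{p'}}=1$ and $\int_\G hg\,\mathrm{d}\lambda=\|g\|_{L^p}$; the endpoint $p=\infty$ is treated with the usual essential-supremum convention, and $\|g\|_{L^p}\in\{0,\infty\}$ are degenerate.

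It remains to check that this extremal $h$ is admissible, i.e.\ that it arises as $f'$ for some $f\in W^{1,p'}(\G,\lambda)$; this is the main obstacle. The natural candidate is $f(x):=\int_{[z_0,x]}h(y)\,\lambda(\mathrm{d}y)$, and I must verify that $f$ is continuous on $\G$ and that $h$ is genuinely its graph derivative per Definition~\ref{def:Sobolev}. Continuity is exactly where Assumption~\ref{a:unique} enters: since the shortest path $[z_0,x]$ is unique, as $x'\to x$ the paths $[z_0,x']$ and $[z_0,x]$ agree except on a short terminal segment of vanishing $\lambda$-measure, so $f(x')\to f(x)$ by absolute continuity of the integral (for $p'>1$ one bounds $\int_A|h|\,\mathrm{d}\lambda\le\|h\|_{L^{p'}}\lambda(A)^{1/p}$ on small sets $A$; for $p'=\infty$ one uses $|h|\le\|h\|_{L^\infty}$). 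Granting this, $f$ obeys \eqref{FTC} with derivative $h$ by construction, so it is feasible and attains $\|g\|_{L^p}$; combined with the upper bound this gives $\calS_p(\mu,\nu)=\|g\|_{L^p(\G,\lambda)}$, and raising to the $p$-th power finishes the proof. The two places that warrant care in a full write-up are the joint measurability of the incidence set $\{(x,y):y\in[z_0,x]\}$ that legitimizes the Fubini swap, and the continuity/membership claim for the constructed $f$ just described.
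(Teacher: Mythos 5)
Your proof follows essentially the same route as the paper's: substitute the representation \eqref{FTC}, use Fubini's theorem to swap the order of integration and reduce the supremum to an $L^{p'}$--$L^p$ pairing against $\mu(\Lambda(\cdot))-\nu(\Lambda(\cdot))$, then conclude via H\"older's inequality together with the explicit extremizer (your $h=\sgn(g)\,(|g|/\|g\|_{L^p})^{p-1}$ is exactly the paper's $g^*$), and the surjectivity step $g=f'$ with $f(x)=\int_{[z_0,x]}g\,\mathrm{d}\lambda$. The only difference is that you explicitly flag and sketch the continuity/admissibility verification for this constructed $f$ and the measurability needed for Fubini, points the paper asserts without proof; this is added care rather than a different method.
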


\textbf{Sketch of Proof of Proposition~\ref{prop:closed-form}.}~By using representation \eqref{FTC} and employing Fubini's theorem to interchange the order of integration, we have for any function 
$f\in W^{1,p'}(\G, \lambda)$ and any measure $\sigma \in \calP(\G)$ that
\begin{align*}
\int_\G f(x) \sigma(\mathrm{d}x)= f(z_0) \sigma(\G)   + \int_{\G} f'(y)  \sigma(\Lambda(y)) \, \lambda(\mathrm{d}y).
\end{align*}
This together with the definition of distance $\calS_{p}$ and by taking $g = f'$, we deduce that
$\calS_{p}(\mu,\nu )$ is the same as
\begin{align*}
\sup \limits_{\|g\|_{L^{p'}(\G, \lambda)}\leq 1} \int_{\G} g(x) \big[ \mu(\Lambda(x)) -  \nu(\Lambda(x))\big] \, \lambda(\mathrm{d}x).
\end{align*}
This last optimization problem admits a  maximizer  $g^*(x) = \frac{|r(x)|^{p-2} r(x)}{\|r\|_{L^p(\G, \lambda)}^{p-1}}$ with $r(x) \coloneqq \mu(\Lambda(x)) -  \nu(\Lambda(x))$, and the conclusion of the proposition follows.

In the particular case where the probability distributions $\mu$ and $\nu$ are supported only on nodes $V$, the expression in Proposition~\ref{prop:closed-form} can be rewritten more explicitly using the definition of $\gamma_e$ in~\eqref{sub-graph-over-edge}.
%let $\gamma_e$ be the collection of all points $y\in\G$ such that the unique shortest path connecting $y$ and $z_0$ contains the edge $e$. That is,  $\gamma_e \coloneqq \{y\in \G: \, e\subset  [z_0,y]\}$.
\begin{corollary}[Discrete case]\label{cor:discrete}
Assume that the measure $\lambda$ has no atom, i.e., $\lambda(\{x\}) = 0$ for every $x\in \G$. 
Then, if  both measures $\mu$  and $\nu$ in $\calP(\G)$ are supported on $V$,  we have 
\begin{equation}
\label{eq:DiscreteSobolevTransport}
\calS_{p}(\mu,\nu )^p 
= \sum_{e\in E} \lambda(e)\, \big| \mu(\gamma_e) -  \nu(\gamma_e)\big|^p.
\end{equation}
\end{corollary}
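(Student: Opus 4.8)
The plan is to start from the integral formula in Proposition~\ref{prop:closed-form}, namely $\calS_p(\mu,\nu)^p = \int_\G |\mu(\Lambda(x)) - \nu(\Lambda(x))|^p \,\lambda(\mathrm{d}x)$, and to collapse the integral over the whole graph into a sum of integrals over the open edges. First I would write $\G$ as the disjoint union $V \sqcup \bigsqcup_{e\in E}(e\setminus\partial e)$, where $e\setminus\partial e$ denotes the edge $e$ with its two endpoints removed; the interiors of distinct edges are disjoint and each is disjoint from the node set $V$, so this is a genuine measurable partition of $\G$. Since $\lambda$ has no atom and $V$ is finite, $\lambda(V)=\sum_{v\in V}\lambda(\{v\})=0$, so the contribution over $V$ vanishes and $\calS_p(\mu,\nu)^p = \sum_{e\in E}\int_{e\setminus\partial e}|\mu(\Lambda(x)) - \nu(\Lambda(x))|^p\,\lambda(\mathrm{d}x)$. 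For the same non-atomicity reason, $\lambda(e\setminus\partial e)=\lambda(e)$ for each $e$.

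The key step is to show that on each open edge the integrand is constant and equal to $|\mu(\gamma_e)-\nu(\gamma_e)|^p$. To this end I would prove that for every $x$ in the interior of an edge $e$ one has $\Lambda(x)\cap V = \gamma_e\cap V$, with $\Lambda(x)$ and $\gamma_e$ as in \eqref{sub-graph} and \eqref{sub-graph-over-edge}. The inclusion $\gamma_e\cap V \subseteq \Lambda(x)\cap V$ is immediate, since $e\subset[z_0,y]$ forces the interior point $x\in e$ to lie on $[z_0,y]$. For the reverse inclusion, take a node $y$ with $x\in[z_0,y]$. By Assumption~\ref{a:unique} the shortest path $[z_0,y]$ is unique, and since the edge lengths are positive it is necessarily \emph{simple} (any repeated point would allow removing a positive-length loop and strictly shortening it). This simple path contains the interior point $x$ of $e$ but must terminate at the node $y\ne x$; hence it cannot turn back inside $e$ and must instead exit $e$ at the opposite endpoint, so it traverses $e$ in full, giving $e\subset[z_0,y]$, i.e. $y\in\gamma_e$. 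This establishes $\Lambda(x)\cap V=\gamma_e\cap V$.

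Because both $\mu$ and $\nu$ are supported on $V$, for every interior point $x$ of $e$ we get $\mu(\Lambda(x))=\mu(\Lambda(x)\cap V)=\mu(\gamma_e\cap V)=\mu(\gamma_e)$, and likewise $\nu(\Lambda(x))=\nu(\gamma_e)$. Therefore the integrand $|\mu(\Lambda(x))-\nu(\Lambda(x))|^p$ is constant on $e\setminus\partial e$, equal to $|\mu(\gamma_e)-\nu(\gamma_e)|^p$, so $\int_{e\setminus\partial e}|\mu(\Lambda(x))-\nu(\Lambda(x))|^p\,\lambda(\mathrm{d}x) = \lambda(e)\,|\mu(\gamma_e)-\nu(\gamma_e)|^p$. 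Summing over $e\in E$ yields the claimed formula \eqref{eq:DiscreteSobolevTransport}.

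I expect the main obstacle to be precisely the reverse inclusion in the identity $\Lambda(x)\cap V=\gamma_e\cap V$: one must argue rigorously that a shortest path which reaches an interior point of an edge necessarily crosses the whole edge. This is exactly where the uniqueness hypothesis of Assumption~\ref{a:unique}, together with positivity of the weights, is essential, as it guarantees the path is simple and cannot double back within $e$. The remaining ingredients—partitioning the domain and discarding the measure-zero node set via non-atomicity—are routine bookkeeping.
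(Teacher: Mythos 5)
Your proof is correct and follows essentially the same route as the paper's: both start from Proposition~\ref{prop:closed-form}, use non-atomicity of $\lambda$ to reduce the integral to a sum over open edges, and then evaluate the integrand there via the set identity relating $\Lambda(x)$ to $\gamma_e$ (the paper states $\Lambda(x)\setminus(u,v)=\gamma_e$ as an observation, while you prove its restriction to $V$, which suffices since $\mu,\nu$ are supported on $V$). Your simplicity argument for why the unique shortest path must traverse $e$ in full merely supplies detail that the paper leaves implicit.
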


\begin{remark}[Two-step computational procedure]\label{rm:computeSobolev} Our calculation of the Sobolev transport distance between $\mu$ and $\nu$ can be  split into two separate steps. The first step is the preprocessing process involving only the graph structure and nothing about the probability distributions, and is done only once regardless how many pairs $(\mu,\nu)$ that we have to measure. In this step by identifying shortest paths (e.g., Dijkstra algorithm), we calculate the set $\gamma_e$ for each edge $e \in E$. In fact, any edge $e$ with $\gamma_e = \emptyset$ does not contribute to the computation of the Sobolev transport. Therefore, we can remove such edge $e$ for the summation over edges in $E$ (in Equation \eqref{eq:DiscreteSobolevTransport}). In the second step, we just simply use the result in Step~1 and Corollary~\ref{cor:discrete} to compute the Sobolev transport distance.
\end{remark}
%\viet{(resulting in a tree??? If %not, give a concrete example)}.

\paragraph{Complexity.} For preprocessing, the complexity of Dijkstra for shortest paths from the root node $z_0$ to all other supports (or vertices) is $\mathcal{O}(|E| + |V| \log{|V|})$. A key observation is that for any support point $z$ of $\mu$, i.e., $z \in \text{supp}(\mu)$, its mass contributes to $\mu(\gamma_e)$ if and only if the edge $e$ is a subset of the shortest path from the root node $z_0$ to $z$, i.e., $e \subset [z_0, z]$. Let $E_{\mu, \nu}$ be a subset of $E$, defined as:
\[
E_{\mu, \nu} \hspace{-0.2em}:=\hspace{-0.2em} \left\{e \! \in \! E \mid \exists z \! \in \! (\text{supp}(\mu) \cup \text{supp}(\nu)), e \subset [z_0, z] \right\}\!,
\]
then we can rewrite $\calS_{p}(\mu,\nu )^p$ in \eqref{eq:DiscreteSobolevTransport} as
\begin{equation}
\label{eq:DiscreteSobolevTransport_Opt}
\calS_{p}(\mu,\nu )^p 
= \sum_{e\in E_{\mu, \nu}} \lambda(e)\, \big| \mu(\gamma_e) -  \nu(\gamma_e)\big|^p.
\end{equation}
Therefore, the computation of Sobolev transport $\calS_{p}(\mu, \nu)$ is linear to the number of edges in $E_{\mu, \nu}$.

%\footnote{$E(\mu, \nu)$ is a subset of $E$ and depends on $\text{supp}(\mu)$ and $\text{supp}(\nu)$.}

%%%%%%%%%%%%%%%%%%%%%%%%%%%%%%%%
%\section{Properties of Sobolev Transport}\label{sec:properties}
\section{PROPERTIES OF SOBOLEV TRANSPORT}\label{sec:properties}

This section shows a connection between our Sobolev transport distance and the Wasserstein distance when the measure $\lambda$ is chosen as the length measure of the graph. We also demonstrate that the space of distributions $\calP(V)$ is  isometric to a bounded convex set in a Euclidean space. We then prove that for $1 \le p \le 2$, both $\calS_p$ and its $p$-power $\calS_p^p$ are negative definite which allows us to build positive definite kernels upon Sobolev transport. We also propose a slice variant for Sobolev transport.

\subsection{A Connection to Wasserstein Distance}
\label{sub:connection}
%A measure on $\G$ which has no atom}
%In  this section we display a connection between the Sobolev transport $\calS_1$ and the $1$-Wasserstein distance $\mathcal{W}_1$ w.r.t.~the graph metric $d$.
We will specifically construct a measure $\lambda^*$ under which the distance $\calS_1$ is the same as  the $1$-Wasserstein distance $\mathcal{W}_1$ w.r.t.~the graph metric $d$.
 
 %two distances coincide.
%e will work with the following specific measure on $\G$.

\begin{definition}[Length measure] \label{def:measure} 
Let $ \lambda^*$ be the unique Borel measure on $\G$ such that the restriction of $\lambda^*$ on any edge is the length measure of that edge. That is, $\lambda^*$  satisfies:
\begin{enumerate}
%\vspace{-1em}
\item[i)] For  any edge $e$ connecting two nodes $u$ and $v$, we have 
 $\lambda^*(\langle x,y\rangle) = (t-s) w_e$ 
 whenever $x = (1-s) u + s v$ and $y = (1-t)u + t v$ for $s,t \in [0,1)$ with $s \leq t$. Here, $\langle x,y\rangle$ is the line segment in $e$ connecting $x$ and $y$.
 %\vspace{-0.5em}
 \item[ii)] For any Borel set $F \subset \G$, we have
 \[
 %\vspace{-0.5em}
 \lambda^*(F) = \sum_{e\in E} \lambda^*(F\cap e).
 \]
\end{enumerate}
\end{definition}

%\begin{definition}[Length measure] \label{def:measure} 
%A Borel measure $\lambda^*$ on $\G$ is a length measure if it satisfies %\viet{unique}
%\begin{enumerate}
%\item[i)] For  any edge $e$ connecting two nodes $u$ and $v$, we have 
% $\lambda^*(\langle x,y\rangle) = (t-s) w_e$ 
% whenever $x = (1-s) u + s v$ and $y = (1-t)u + t v$ for $s,t \in [0,1)$ with $s \leq t$. Here, $\langle x,y\rangle$ is the line segment in $e$ connecting $x$ and $y$.
 %\item[ii)] For any Borel set $F \subset \G$, we have
 %\[
 %\lambda^*(F) = \sum_{e\in E} %\lambda^*(F\cap e).
 %\]
%\end{enumerate}
%\end{definition}

%\begin{lemma}[$\lambda^*$ is the length measure on graph] \label{lem:length-measure}
%Suppose that the graph $\G$ has no short cuts, namely, any edge $e$ is a shortest path connecting its two vertices. Then \[
%\lambda^*([x,y]) = d(x,y)
%\]
%for  any  shortest path   $[x,y]$ connecting $x$ and $y$. In particular, $\lambda^*$ has no atom.
%\end{lemma}

%We next construct explicitly a specific length measure. Let $\lambda^*$ be a measure defined for any shortest path $[x,y]$ connecting $x$ and $y$ by $\lambda^*([x,y]) = d(x,y)$, and 0 otherwise. 
The next lemma asserts that $\lambda^*$ is closely connected to the graph metric $d$, and thus justifies the terminology of a length measure.
\begin{lemma}[$\lambda^*$ is the length measure on graph] \label{lem:length-measure}
Suppose that $\G$ has no short cuts, namely, any edge $e$ is a shortest path connecting its two end-points. Then, $\lambda^*$ is a length measure in the sense that
\[
%\vspace{-0.5em}
\lambda^*([x,y]) = d(x,y)
\]
for  any  shortest path   $[x,y]$ connecting $x$ and $y$. In particular, $\lambda^*$ has no atom.
\end{lemma}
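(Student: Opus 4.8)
The plan is to establish the identity $\lambda^*([x,y]) = d(x,y)$ for shortest paths, then deduce the no-atom property as a corollary. First I would handle the base case where $x$ and $y$ lie on a common edge $e$. Here the shortest path $[x,y]$ is exactly the line segment $\langle x,y\rangle$ inside $e$ (this uses the no-short-cuts hypothesis, which guarantees the edge is itself a shortest path, so no detour through other edges can be shorter). Writing $x=(1-s)u+sv$ and $y=(1-t)u+tv$, the definition of the graph metric gives $d(x,y)=|t-s|\,w_e$, while part (i) of Definition~\ref{def:measure} gives $\lambda^*(\langle x,y\rangle)=|t-s|\,w_e$ as well, so the two agree on a single edge.

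For the general case, I would use the additivity of $\lambda^*$ over edges from part (ii) of Definition~\ref{def:measure}. A shortest path $[x,y]$ passes through a finite sequence of nodes and traverses a finite collection of edges $e_1,\dots,e_k$; on the two endpoint edges it covers a sub-segment, and on the intermediate edges it covers the whole edge. The key structural point is that a shortest path cannot enter and leave the same edge twice, so $[x,y]$ decomposes as a disjoint union (up to shared endpoints, which are $\lambda^*$-null) of these segments. Applying part (ii) and then the single-edge computation to each piece yields
\[
\lambda^*([x,y]) = \sum_{i=1}^{k} \lambda^*([x,y]\cap e_i) = \sum_{i=1}^{k} (\text{length of the } i\text{-th segment}) = d(x,y),
\]
where the last equality is just the definition of the weighted length of the shortest path as the sum of its edge-contributions.

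For the atomlessness claim, I would observe that a single point $\{x\}$ is the degenerate shortest path from $x$ to itself, so $\lambda^*(\{x\}) = d(x,x) = 0$; alternatively, a single point is a segment with $s=t$ on whichever edge contains it, giving $\lambda^*$-measure $0$ directly from part (i). The main obstacle I anticipate is the careful bookkeeping in the general case: verifying that the shortest path really does split into segments that are individually handled by part (i), that the no-short-cuts hypothesis rules out a shortest path re-using a portion of an edge, and that the finitely many shared endpoints contribute nothing (which again follows from $\lambda^*$ having no atom on each edge, consistent with the length-measure structure). None of the individual steps is deep, but making the decomposition rigorous—especially when $x$ or $y$ is an interior point of an edge rather than a node—requires some care so that the endpoint edges are treated correctly.
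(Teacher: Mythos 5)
Your proposal is correct and takes essentially the same route as the paper's own proof: decompose the shortest path through the nodes it traverses, use the edge-additivity in part (ii) of Definition~\ref{def:measure} together with the single-edge length computation from part (i), and sum the segment contributions to recover $d(x,y)$. The only difference is cosmetic—you spell out the endpoint/atom bookkeeping and the single-edge base case that the paper's shorter argument leaves implicit.
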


%\[
% \lambda^*([x,y]) = \begin{cases}
 %    d(x,y) & \\
 %    0 & \text{otherwise,}
 %    \end{cases}
 %\]

The measure $\lambda^*$  is special as it is  linked to the metric distance. For trees, $\calS_1$ defined w.r.t. $\lambda^*$ is the same as the Wasserstein distance with cost $d(x,y)$.
%This property also holds for graph $\G$ provided %that Assumption~1 is strengthen as: 
%there is exactly one path from any node to node %$z_0$.

\begin{corollary}[Tree topology] \label{cor:tree}
Suppose that the graph $\G$ is a tree and the distance $\calS_1$ is defined  w.r.t.~the measure $\lambda^*$. Then, we have 
\[
\calS_1 \equiv \calW_1,
\]
where $\calW_1$ is the Wasserstein distance\footnote{The definition of  $\calW_1$ is recalled in the supplementary.} with cost $d$.
\end{corollary}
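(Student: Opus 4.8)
The plan is to reduce the identity $\calS_1\equiv\calW_1$ to an equality of the two function classes over which the respective variational formulas are taken. Recall that by Kantorovich--Rubinstein duality the $1$-Wasserstein distance with ground cost $d$ can be written as
\[
\calW_1(\mu,\nu)=\sup\Big\{\int_\G f\,\mathrm{d}\mu-\int_\G f\,\mathrm{d}\nu:\ f\ \text{is}\ 1\text{-Lipschitz w.r.t.}\ d\Big\},
\]
whereas, specialising Definition~\ref{def:distance} to $p=1$ (so $p'=\infty$), $\calS_1$ is the same supremum taken over the Sobolev ball $\mathcal{B}\coloneqq\{f\in W^{1,\infty}(\G,\lambda^*):\ \|f'\|_{L^\infty(\G,\lambda^*)}\le 1\}$. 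Since a tree has no short cuts, Lemma~\ref{lem:length-measure} applies and gives $\lambda^*([x,y])=d(x,y)$ for every shortest path, together with the fact that $\lambda^*$ has no atoms (so the finite node set $V$ is $\lambda^*$-null). As the objective is invariant under adding a constant to $f$ and both classes are closed under such shifts, it suffices to prove $\mathcal{B}=\mathrm{Lip}_1(\G,d)$.

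First I would prove $\mathcal{B}\subseteq\mathrm{Lip}_1(\G,d)$. Fix $f\in\mathcal B$ and $x,y\in\G$, and let $m$ be the point at which the unique paths $[z_0,x]$ and $[z_0,y]$ last coincide; on a tree $m$ lies on the geodesic $[x,y]$ and $[z_0,x]=[z_0,m]\cup[m,x]$, similarly for $y$. Subtracting the corresponding instances of representation \eqref{FTC} gives $f(x)-f(m)=\int_{[m,x]}f'\,\mathrm{d}\lambda^*$ and $f(y)-f(m)=\int_{[m,y]}f'\,\mathrm{d}\lambda^*$, whence
\[
|f(x)-f(y)|\le\int_{[m,x]}|f'|\,\mathrm{d}\lambda^*+\int_{[m,y]}|f'|\,\mathrm{d}\lambda^*\le\lambda^*([m,x])+\lambda^*([m,y])=d(m,x)+d(m,y)=d(x,y),
\]
using $\|f'\|_{L^\infty}\le1$ and the fact that $m$ splits the geodesic $[x,y]$. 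Hence $f\in\mathrm{Lip}_1(\G,d)$, which already yields $\calS_1\le\calW_1$.

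For the reverse inclusion $\mathrm{Lip}_1(\G,d)\subseteq\mathcal B$ I would take a $1$-Lipschitz $f$ and construct its graph derivative edgewise. On each edge, parametrised by arc length $t$ in the direction pointing away from $z_0$, the map $t\mapsto f(\gamma(t))$ is $1$-Lipschitz on an interval (because arc length along an edge equals $d$, again by the no-short-cut property), hence absolutely continuous; set $h(\gamma(t))\coloneqq\frac{\mathrm d}{\mathrm dt}f(\gamma(t))$, which exists a.e.\ and satisfies $|h|\le1$. Since $V$ is $\lambda^*$-null and every interior edge point has a well-defined outward direction, $h$ is a well-defined element of $L^\infty(\G,\lambda^*)$ with $\|h\|_{L^\infty}\le1$. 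The one-dimensional fundamental theorem of calculus on each edge, concatenated along the finitely many edges composing $[z_0,x]$, then yields $f(x)-f(z_0)=\int_{[z_0,x]}h\,\mathrm{d}\lambda^*$ for every $x$, so $f\in W^{1,\infty}(\G,\lambda^*)$ with $f'=h$ and $f\in\mathcal B$. Combined with the first inclusion this gives $\mathcal B=\mathrm{Lip}_1(\G,d)$ and therefore $\calS_1=\calW_1$.

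The step I expect to be the main obstacle is this reverse inclusion: one must verify that the edgewise derivatives assemble into a genuine element $h$ of $L^\infty(\G,\lambda^*)$ (measurability across edges, consistent outward orientation, irrelevance of the null node set) and that the concatenated one-dimensional fundamental theorem of calculus reproduces the \emph{line-integral} representation \eqref{FTC} along an arbitrary $[z_0,x]$, including when $x$ is interior to an edge. It is precisely here that the tree hypothesis is essential: the meeting-point decomposition and the unique outward direction both rely on there being a single path between any two points, and the argument breaks down for graphs carrying multiple geodesics, which is why the equality $\calS_1\equiv\calW_1$ is claimed only for trees.
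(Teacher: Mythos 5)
Your proof is correct, but it follows a genuinely different route from the paper. The paper's own proof is a two-line corollary: it specializes Proposition~\ref{prop:closed-form} to $p=1$, giving $\calS_1(\mu,\nu)=\int_\G|\mu(\Lambda(x))-\nu(\Lambda(x))|\,\lambda^*(\mathrm{d}x)$, and then invokes the known closed-form expression for the tree-Wasserstein distance from \citet{ref:evans2012phylogenetic} (their Equation~(5)) and \citet{ref:le2019tree} (proof of their Proposition~1), which is the identical integral; equality follows by inspection. You instead work entirely on the dual side, never touching the closed-form formula: you show that on a tree the Sobolev ball $\{f\in W^{1,\infty}(\G,\lambda^*):\|f'\|_{L^\infty}\le 1\}$ literally equals $\mathrm{Lip}_1(\G,d)$, via the meeting-point decomposition of geodesics (for the inclusion $\mathcal{B}\subseteq\mathrm{Lip}_1$) and an edgewise absolute-continuity/FTC construction of the graph derivative (for the converse), then conclude by Kantorovich--Rubinstein duality. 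Both arguments are sound; your inclusion $\mathcal{B}\subseteq\mathrm{Lip}_1$ correctly uses that $m$ splits the geodesic $[x,y]$ and Lemma~\ref{lem:length-measure}, and your reverse inclusion correctly exploits that on a tree each edge is traversed away from the root along any path $[z_0,x]$ and that $d$ restricted to an edge is the weighted arc length. What the paper's route buys is brevity, at the cost of outsourcing the substance to the cited tree-Wasserstein closed form. What your route buys is self-containedness and insight: it rigorously substantiates the paper's informal remark after Definition~\ref{def:distance} that, with $\lambda=\lambda^*$, the Sobolev constraint is a Lipschitz-type constraint, and it makes transparent exactly where the tree hypothesis enters (uniqueness of paths is needed in both inclusions), which the paper's citation-based proof leaves implicit.
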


We do not know the exact relationship between $\calS_p$ and the $p$-Wasserstein distance  $ \calW_p
$ when $p>1$. However, the following result shows that $\calS_p$ is always lower bounded by $\calW_1$.

\begin{lemma}[Bounds]\label{w1-vs-sp}
 Suppose the graph $\G$ is a tree and the distance $\calS_p$ is defined  w.r.t.~the measure $\lambda^*$. Then, for any $1\leq p \leq \infty$, we have
\[
\calW_1(\mu,\nu) \leq \lambda^*(\G)^{\frac{1}{p'}} \calS_p(\mu,\nu).
\]
\end{lemma}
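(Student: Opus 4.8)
The plan is to obtain this inequality as an immediate consequence of the two preceding results: the identity $\calS_1 \equiv \calW_1$ on trees (Corollary~\ref{cor:tree}) and the comparison between Sobolev transports of different orders (Proposition~\ref{prop:upper}). The only points requiring a moment's care are verifying that $\lambda^*$ meets the finiteness hypothesis of Proposition~\ref{prop:upper}, and tracking the conjugate exponents so that the prefactor comes out exactly as $\lambda^*(\G)^{1/p'}$. First I would record that since $\G$ is a (finite) tree, the length measure satisfies $\lambda^*(\G) = \sum_{e\in E} w_e < \infty$; together with its nonnegativity (Definition~\ref{def:measure}), this makes $\lambda^*$ a finite nonnegative Borel measure, so Proposition~\ref{prop:upper} applies with $\lambda = \lambda^*$.

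Next I would dispose of the endpoint $p=1$ directly: here $p' = \infty$, so $\lambda^*(\G)^{1/p'} = \lambda^*(\G)^0 = 1$, and the claimed inequality reduces to $\calW_1 \leq \calS_1$, which in fact holds with equality by Corollary~\ref{cor:tree}. For $1 < p \leq \infty$, I would apply Proposition~\ref{prop:upper} to the pair of exponents $(1, p)$ (legitimate since $1 < p$), whose conjugates are $(\infty, p')$ with $p' < \infty$. This yields
\[
\calS_1(\mu,\nu) \leq \lambda^*(\G)^{\frac{1}{p'} - \frac{1}{\infty}} \, \calS_p(\mu,\nu) = \lambda^*(\G)^{\frac{1}{p'}} \, \calS_p(\mu,\nu).
\]
Finally, invoking $\calS_1 \equiv \calW_1$ from Corollary~\ref{cor:tree} replaces the left-hand side by $\calW_1(\mu,\nu)$, which is exactly the assertion.

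I do not anticipate any genuine obstacle: the statement is essentially a repackaging of Corollary~\ref{cor:tree} and Proposition~\ref{prop:upper}. The only items to watch are the bookkeeping of the conjugate indices (ensuring the exponent collapses to $1/p'$) and, in passing, that Lemma~\ref{lem:length-measure} guarantees $\lambda^*$ has no atom, so the closed-form machinery underlying both cited results genuinely applies in this setting.
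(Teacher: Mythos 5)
Your proposal is correct and follows exactly the paper's own argument: combine Proposition~\ref{prop:upper} (applied with $q=p$ and $p=1$, so the exponent is $\frac{1}{p'}$) with the identity $\calS_1 \equiv \calW_1$ on trees from Corollary~\ref{cor:tree}, handling $p=1$ as the trivial endpoint. Your extra checks (finiteness of $\lambda^*(\G)$ and the conjugate-index bookkeeping) are sound and only make the paper's terse proof more explicit.
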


%%%%%%%%%%%%%%%%%%%%%%%%%%%%%%%%%%%%%%%%%
\subsection{Isometry Between $\calP(V)$ and a Bounded Convex Set in a Euclidean Space} \label{sec:isometry}
Assume that $V=\{z_0 \equiv x_1, \, x_2,..., \,x_n\}$. For a node $x_i$, let $N(x_i)$ denote the collection of all neighbor nodes of $x_i$, and let
\[
%\vspace{-0.5em}
N'(x_i) \coloneqq \Big\{v\in N(x_i): \, d(v, z_0) = d(x_i, z_0) + w_{\langle x_i,v\rangle}\Big\}.
\] 
Also, for $2\leq i \leq n$, let  $\hat x_i$ denote  the unique node $x \in N(x_i)$ such that the shortest path $[z_0, x_i]$ passes through $x$, i.e., $x \in [z_0, x_i]$.
%In this section, we work with the ~specific measure $\lambda^*$ defined in Section~\ref{sub:connection}. In particular, the distance $\calS_{p}$ is defined w.r.t. this measure.
%, and thanks to 
%Corollary~\ref{cor:discrete}  we have for any $\mu, \, \nu\in \calP(\G)$ supporting on $V$ that
%\begin{equation*}
%\calS_{p}(\mu,\nu )
%= \Big(\sum_{e\in E} w_e\, \big| \mu(\gamma_e) -  \nu(\gamma_e)\big|^p\Big)^\frac1p.
%\end{equation*}

%\viet{aTam, please write the equivalent form} \tam{aTruyen has already written. The one with $\Lambda(x)$ is the subtree version, and the one with $\gamma(e)$ is the corresponding edge version which can be implemented with dynamic programming.}

Let us now  take a closer look at the feature map
\[
%\vspace{-0.5em}
\rho \in \calP(V) \longmapsto \alpha \coloneqq (\rho(\gamma_e\cap V) )_{e\in E}\in \R^m.
\]

%\viet{Currently, from the definition of $\gamma_e$ as in~\eqref{sub-graph-over-edge}, we have $\gamma_e$ is a subset of $\G$. The measure $\rho \in \calP(V)$ will have problem when measure $\rho(\gamma_e)$. Maybe we need to use $\gamma_e \cap \G$   }

Observe that the representation  vector $\alpha =(\alpha_e)_{e\in E}$ satisfies $\alpha_e \geq 0$, $\alpha_e = 0 $ if $\gamma_e =\emptyset$ and
\begin{align*}
&\sum_{e= \langle x_1,v\rangle: v\in N(x_1)} \alpha_e\leq 1, \\
&\sum_{e=\langle x_i, v\rangle: v\in N'(x_i)}\alpha_e \leq \alpha_{\langle \hat x_i, x_i\rangle} \,\,\, \forall i=2,...,n.
\end{align*}
 Hereafter we use the convention that if $N'(x_i) =\emptyset$, then the corresponding summation is interpreted as zero. We note that $N'(x_i) =\emptyset$ happens precisely when there is no shortest path from other nodes to $z_0$ that passes through $x_i$
 (this, in particular,  occurs  for nodes in the ``last level'').
 
Let $\calK$ denote the set of all vectors $\alpha\in  \R^m$ having the above specified  properties. Clearly, this is a bounded and convex set which is closed w.r.t.~the Euclidean metric in $\R^m$.
In the next proposition, we assume that the distance $\calS_{p}$ is defined w.r.t.~the measure $\lambda^*$ defined in Section~\ref{sub:connection}. 
This result shows that  there is a one-to-one correspondence between $\calP(V)$ and the set $\calK$.
\begin{proposition}[$\calP(V) $ isometric to $\calK$]\label{prop:isometry}
The map
\begin{equation}\label{feature-map}
\rho \in \calP(V) \longmapsto \alpha \coloneqq (\rho(\gamma_e\cap V) )_{e\in E}\in \calK
%\vspace{-0.5em}
\end{equation}
is one-to-one and onto. In addition, for any $\alpha =(\alpha_e)_{e\in E}\in \calK$, if we let
\begin{equation}\label{a^i}
\begin{array}{rl}
a^1 &\coloneqq 1- \sum_{e= \langle x_1,v\rangle: v\in N(x_1)} \alpha_e, \\
a^i &\coloneqq \alpha_{\langle \hat x_i, x_i\rangle}  - \sum_{e=\langle x_i, v\rangle: v\in N'(x_i)}\alpha_e   
\end{array}
%\vspace{-0.5em}
\end{equation}
for $i=2,...,n,$
then  $\rho \coloneqq \sum_{i=1}^n a^i \delta_{x_i}\in \calP(V)$. Finally,  the  distance   $\calS_{p}$ on $\calP(V)$ is the same as the weighted $\ell_p$ distance on $\calK$, that is,  
\[
\calS_{p}(\rho_1, \rho_2) = \Big(\sum_{e\in E}    w_e \big| \alpha^1_e -  \alpha^2_e\big|^p\Big)^\frac1p,
\]
with $\alpha^i \coloneqq \alpha(\rho_i)$ for $i =1,2$.
\end{proposition}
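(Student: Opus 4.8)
The plan is to reduce the statement to the combinatorics of the shortest-path tree rooted at $z_0$ and then to read off the metric identity directly from Corollary~\ref{cor:discrete}. Under Assumption~\ref{a:unique} the union of the unique shortest paths $[z_0,x_i]$, $i=1,\dots,n$, forms a tree $T$ rooted at $x_1=z_0$: for $i\ge 2$ the node $\hat x_i$ is the parent of $x_i$ with parent edge $\langle \hat x_i,x_i\rangle$, while $N'(x_i)$ is exactly the set of children of $x_i$ (indeed $v\in N'(x_i)$ means that $[z_0,x_i]$ followed by $\langle x_i,v\rangle$ is a shortest path to $v$, hence by uniqueness the path $[z_0,v]$, so $\hat v=x_i$). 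A node edge $e=\langle a,b\rangle$ carries mass, i.e. $\gamma_e\cap V\neq\emptyset$, precisely when it is an edge of $T$; for every other edge one has $\alpha_e=0$ by the definition of $\calK$. From \eqref{sub-graph-over-edge} I would then extract the two disjoint decompositions
\[
V\setminus\{x_1\}=\bigsqcup_{v\in N(x_1)}\big(\gamma_{\langle x_1,v\rangle}\cap V\big),\qquad \gamma_{\langle \hat x_i,x_i\rangle}\cap V=\{x_i\}\sqcup\bigsqcup_{v\in N'(x_i)}\big(\gamma_{\langle x_i,v\rangle}\cap V\big),
\]
which simply say that the descendants lying below an edge split into the head node $x_i$ and the descendants below each of its child edges.

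\textbf{The map lands in $\calK$ and is injective.} Evaluating an arbitrary $\rho\in\calP(V)$ on the two decompositions gives $\rho(\{x_1\})=1-\sum_{v\in N(x_1)}\alpha_{\langle x_1,v\rangle}$ and $\rho(\{x_i\})=\alpha_{\langle \hat x_i,x_i\rangle}-\sum_{v\in N'(x_i)}\alpha_{\langle x_i,v\rangle}$; that is, the point masses of $\rho$ are exactly the quantities $a^i$ in \eqref{a^i}. Nonnegativity of these masses yields precisely the inequalities defining $\calK$, so the feature map \eqref{feature-map} indeed takes values in $\calK$, and since $\rho$ is recovered from $\alpha$ through $\rho(\{x_i\})=a^i$, the map is injective.

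\textbf{Surjectivity.} Conversely, given $\alpha\in\calK$ I define the $a^i$ by \eqref{a^i}; the defining inequalities force $a^i\ge 0$. It then remains to verify $\sum_{i=1}^n a^i=1$ and $\rho(\gamma_e\cap V)=\alpha_e$ for the reconstructed $\rho=\sum_i a^i\delta_{x_i}$, and both follow by telescoping over $T$: writing each $a^i$ as (its parent-edge weight) minus (the sum of its child-edge weights) and summing over all descendants of a fixed node, every interior tree edge is counted once with each sign and cancels, leaving only the top parent edge $\alpha_e$; summing over the whole tree leaves the constant $1$ contributed by $a^1$. Hence $\rho\in\calP(V)$ and its image under \eqref{feature-map} is $\alpha$, so the map is onto.

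\textbf{Isometry.} Finally, since $\rho_1,\rho_2$ are supported on $V$ we have $\rho_i(\gamma_e)=\rho_i(\gamma_e\cap V)=\alpha^i_e$, while the length measure $\lambda^*$ is atomless with $\lambda^*(e)=w_e$; substituting into Corollary~\ref{cor:discrete} yields $\calS_p(\rho_1,\rho_2)^p=\sum_{e\in E}w_e\,|\alpha^1_e-\alpha^2_e|^p$, which is the claimed weighted $\ell_p$ identity. I expect the only delicate part to be the bookkeeping in the telescoping and partition arguments; once the reduction to the shortest-path tree is in place (so that non-tree edges drop out), each verification is a routine cancellation over subtrees.
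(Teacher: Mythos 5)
Your proposal is correct and follows essentially the same route as the paper's proof: both rest on the disjoint decomposition of $\gamma_e\cap V$ into the head node and the sets below its child edges (the paper encodes your tree $T$ via $\hat x_i$ and $N'(x_i)$), both recover the point masses as the $a^i$ to get injectivity and membership in $\calK$, both prove surjectivity by the same cancellation (the paper phrases your telescoping as an induction tracing back to the base case $N'(x_i)=\emptyset$), and both read off the isometry from Corollary~\ref{cor:discrete} with $\lambda^*(e)=w_e$. The only real difference is cosmetic: you make the shortest-path tree explicit, which incidentally corrects a small slip in the paper's write-up, where the head node of an edge $e$ is described as the endpoint \emph{closer} to $z_0$ rather than the farther one used in your (correct) decomposition.
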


The isometry is an useful properties of the Sobolev transport since any problem on the space of probability measures with Sobolev transport metric $\calS_p$ can be recasted as a corresponding problem on a bounded convex set of vectors in a Euclidean space with $\ell_p$ metric.

\subsection{Kernels for Sobolev Transport}
% \viet{What we need here:
% \begin{itemize}
%     \item why do we need kernel? Do you talk about Gram matrix? \tam{As in the abstract, the kernel is an application of our proposed distance, we can exploit the nd property of distance to build pd kernels.}
    
%     \item definition of a negative definite distance \tam{We will cite a reference for the definition and give a brief review in the supplementary}
%     \item how is negative definite distance related to kernel (and Gram)? \tam{After the proposition about n.d. for distance, we consequently propose a kernel (there is some theorems about building pd kernels based on nd functions. We will review them in the supplementary as well.)}
% \end{itemize}
% }\citep[

Our next result about negative definiteness\footnote{We follow the definition of  negative-definiteness in \citep[pp. 66--67]{berg1984harmonic}. A review about kernels is placed in the supplementary (Section \ref{appsec:review}).} is the key to build positive definite kernels upon Sobolev transport for kernel machines.

% Negative definite
\begin{proposition}[Negative definiteness]\label{prop:neg_def}
    Suppose that the Sobolev transport distance $\calS_p$ is defined w.r.t. the length measure $\lambda^*$ on graph $\G$ for probability measures in $\calP(V)$, then for $1 \le p \le 2$, $\calS_p$ and $\calS_p^p$ are negative definite. 
\end{proposition}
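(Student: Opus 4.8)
The plan is to reduce everything to the weighted $\ell_p$ geometry furnished by the isometry of Proposition~\ref{prop:isometry}, and then invoke three classical facts about negative definite kernels from \citep{berg1984harmonic}. Recall that a symmetric kernel $\psi$ is negative definite when $\sum_{i,j} c_i c_j \psi(x_i,x_j)\le 0$ whenever $\sum_i c_i = 0$. The key reduction is the identity $\calS_p(\rho_1,\rho_2)^p = \sum_{e\in E} w_e \,|\alpha^1_e - \alpha^2_e|^p$ from Proposition~\ref{prop:isometry}, where $\alpha^i \coloneqq (\rho_i(\gamma_e\cap V))_{e\in E}\in\calK\subset\R^m$ is the feature map. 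Thus $\calS_p^p$ is precisely the pullback, under $\rho\mapsto\alpha$, of the weighted $\ell_p^p$ kernel on $\R^m$.

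First I would establish that the kernel $K_p(a,b)\coloneqq \sum_{e\in E} w_e\,|a_e-b_e|^p$ is negative definite on $\R^m$ for $0<p\le 2$. This rests on the classical fact that the one-dimensional kernel $(s,t)\mapsto |s-t|^p$ is negative definite on $\R$ precisely when $0<p\le 2$. Since each coordinate kernel $(a,b)\mapsto |a_e-b_e|^p$ is the pullback of this one-dimensional kernel along the $e$-th coordinate projection, it is negative definite; and a nonnegative linear combination (with weights $w_e\ge 0$) of negative definite kernels is again negative definite, giving the claim for $K_p$. This is the step where the hypothesis $p\le 2$ is used, and it is exactly the threshold at which the restriction $1\le p\le 2$ becomes sharp.

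Next, because negative definiteness is preserved under composition with an arbitrary map into the index set, pulling $K_p$ back along $\rho\mapsto\alpha$ shows that $(\rho_1,\rho_2)\mapsto K_p(\alpha^1,\alpha^2)=\calS_p(\rho_1,\rho_2)^p$ is negative definite on $\calP(V)$; this settles the claim for $\calS_p^p$. For $\calS_p$ itself I would use the fact that if $\psi\ge 0$ is negative definite then so is $\psi^t$ for every $t\in(0,1]$. Applying this with $\psi=\calS_p^p\ge 0$ and $t=1/p$, which lies in $[\tfrac12,1]\subset(0,1]$ for $p\in[1,2]$, yields that $\calS_p=(\calS_p^p)^{1/p}$ is negative definite.

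The heart of the argument is the first step, namely the negative definiteness of $|s-t|^p$ for $0<p\le 2$; everything else is bookkeeping (pullback stability, nonnegative combinations, and the power trick). The cleanest route to that lemma is Schoenberg's theorem, or equivalently the L\'evy--Khintchine representation, which identifies $|t|^p$ as a negative definite function on $\R$ exactly for $0<p\le 2$ (and one can check the endpoint $p=2$ by hand, since $\sum_{i,j}c_ic_j|s_i-s_j|^2=-2\big(\sum_i c_i s_i\big)^2\le 0$ under $\sum_i c_i=0$). With this in hand, the remaining implications are immediate consequences of the isometry and standard stability properties of negative definite kernels.
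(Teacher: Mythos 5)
Your proof is correct and follows essentially the same route as the paper's: both reduce $\calS_p^p$ to a (weighted) $\ell_p^p$ kernel on $\R^m$ via the feature map $\rho \mapsto (\rho(\gamma_e \cap V))_{e\in E}$, establish negative definiteness of the one-dimensional kernel $(s,t)\mapsto|s-t|^p$ for $p\le 2$, sum over edges with nonnegative weights, and then obtain $\calS_p$ itself by the fractional-power stability of nonnegative negative definite kernels. The only cosmetic differences are that the paper absorbs the weights by working with the coordinates $w_e^{1/p}\mu(\gamma_e)$ instead of keeping $w_e$ as coefficients, and it derives the one-dimensional fact from the negative definiteness of $(a,b)\mapsto(a-b)^2$ via \citep[Corollary~2.10]{berg1984harmonic} rather than invoking Schoenberg's theorem.
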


From Proposition~\ref{prop:neg_def} and following 
\citep[Theorem~3.2.2, pp.74]{berg1984harmonic}, given $t > 0$, $1 \le p \le 2$ and $\mu, \, \nu\in \calP(V)$, the kernels 
\begin{eqnarray*}
&k_{\calS_p}(\mu, \nu) \coloneqq \exp(-t \calS_p(\mu, \nu)), \\
&k_{\calS^p_p}(\mu, \nu) \coloneqq \exp(-t \calS_p^p(\mu, \nu))
\end{eqnarray*}
are positive definite. 

%\viet{measures should be on finite support}\tam{We add it by considering discrete prob. measures}

\subsection{Sliced Sobolev Transport Distance} \label{sec:slice}

As remark after Definition~\ref{def:distance}, our Sobolev transport distance depends on the choice of the root node $z_0$ satisfying Assumption~\ref{a:unique}. When there are multiple possible root nodes, each choice of $z_0$ imposes its own geometry on the graph, which characterizes differently the graph derivative $f'$ of the function $f$. To alleviate the dependence in this case, and inspired by the slicing approach in optimal transport~\citep{rabin2011wasserstein, ref:le2019tree, pmlr-v130-le21a} for practical applications, we propose the \textit{sliced} Sobolev transport distance that fuses the Sobolev transport distance in Section~\ref{sec:distance}. Towards this end, let $\mathcal{Z}_0 \subset V$ be a (sub)set of unique-path root nodes:
\[
    \mathcal{Z}_0 \coloneqq \{ z_0 \in V : z_0 \text{ satisfies Assumption~\ref{a:unique}} \}.
\]
The sliced Sobolev transport averages over a sampling distribution $\eta$ on $\mathcal{Z}_0$, and is formally defined as follows.

\begin{definition}[Sliced Sobolev transport]\label{def:SlicedST} 
    Let $\eta$ be a probability measure on $\mathcal{Z}_0$. The sliced Sobolev transport is defined as
%   \begin{align*}
%        \calS_p^\eta(\mu, \nu) 
%        &\coloneqq \int_{\mathcal{Z}_0} \calS_p^{z_0}(\mu,\nu) \eta (\mathrm{d} z_0)\\
%        &= \sum_{ z_0\in \mathcal{Z}_0} \eta(\{z_0\})\, \calS_p^{z_0}(\mu,\nu),
%    \end{align*} 
   \begin{align*}
        \calS_p^\eta(\mu, \nu) \hspace{-0.2em} \coloneqq \hspace{-0.3em} \int_{\mathcal{Z}_0} \hspace{-0.3em} \calS_p^{z_0}(\mu,\nu) \eta (\mathrm{d} z_0) \hspace{-0.1em} = \hspace{-0.2em} \sum_{ z_0\in \mathcal{Z}_0} \hspace{-0.2em} \eta(\{z_0\})\, \calS_p^{z_0}(\mu,\nu),
    \end{align*} 
    where $\calS_p^{z_0}$ is the Sobolev transport distance in Definition~\ref{def:distance} that is specific to the choice of the unique-path root node $z_0$.
\end{definition}

Because $\calS_p^\eta$ is a convex combination of $\calS_p^{z_0}$, we can readily verify that $\calS_p^\eta$ is also a distance. The proof is relegated to the supplementary.
\begin{proposition}[Metric] \label{prop:slice-metric}
    The sliced Sobolev transport $\calS_p^\eta$ is a distance on $\calP(\G)$.
\end{proposition}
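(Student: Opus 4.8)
The plan is to verify the four metric axioms directly, exploiting that each slice $\calS_p^{z_0}$ is already a metric by Lemma~\ref{lem:metrize} and that the weights $\eta(\{z_0\})$ are nonnegative and sum to one. Throughout I work with the representation $\calS_p^\eta(\mu,\nu) = \sum_{z_0 \in \mathcal{Z}_0} \eta(\{z_0\}) \, \calS_p^{z_0}(\mu,\nu)$, viewing $\calS_p^\eta$ as a convex combination of the individual distances. Nonnegativity and symmetry are then immediate, since each $\calS_p^{z_0}(\mu,\nu) \ge 0$ and $\calS_p^{z_0}(\mu,\nu) = \calS_p^{z_0}(\nu,\mu)$, and both properties are preserved upon multiplying by the nonnegative weights $\eta(\{z_0\})$ and summing.

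Next I would transfer the triangle inequality termwise: for every $z_0$, Lemma~\ref{lem:metrize} gives $\calS_p^{z_0}(\mu,\rho) \le \calS_p^{z_0}(\mu,\nu) + \calS_p^{z_0}(\nu,\rho)$; multiplying by $\eta(\{z_0\}) \ge 0$ and summing over $z_0 \in \mathcal{Z}_0$ yields $\calS_p^\eta(\mu,\rho) \le \calS_p^\eta(\mu,\nu) + \calS_p^\eta(\nu,\rho)$. These three axioms are routine and make $\calS_p^\eta$ at least a pseudometric on $\calP(\G)$.

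The only axiom requiring genuine care is the identity of indiscernibles, which is the expected main obstacle. One direction is clear: if $\mu = \nu$ then every $\calS_p^{z_0}(\mu,\nu) = 0$, whence $\calS_p^\eta(\mu,\nu) = 0$. For the converse, suppose $\calS_p^\eta(\mu,\nu) = 0$. As this is a sum of nonnegative terms $\eta(\{z_0\}) \, \calS_p^{z_0}(\mu,\nu)$, each term must vanish. Since $\eta$ is a probability measure on $\mathcal{Z}_0$—and $\mathcal{Z}_0$ is nonempty because Assumption~\ref{a:unique} guarantees at least one admissible root node—we have $\sum_{z_0} \eta(\{z_0\}) = 1 > 0$, so there exists at least one $\bar z_0$ with $\eta(\{\bar z_0\}) > 0$. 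For this $\bar z_0$ the vanishing of the corresponding term forces $\calS_p^{\bar z_0}(\mu,\nu) = 0$, and since $\calS_p^{\bar z_0}$ is a genuine metric by Lemma~\ref{lem:metrize}, I conclude $\mu = \nu$. The conceptual heart of the argument is precisely this: a convex combination of metrics is automatically a pseudometric, and it separates points as soon as one slice carrying positive mass under $\eta$ does—a condition that the probability-measure assumption on $\eta$ supplies for free.
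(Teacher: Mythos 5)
Your proof is correct and follows essentially the same route as the paper's: both treat $\calS_p^\eta$ as a convex combination of the slice metrics $\calS_p^{z_0}$, handle nonnegativity, symmetry, and the triangle inequality termwise via Lemma~\ref{lem:metrize}, and settle the identity of indiscernibles by noting that since $\eta$ is a probability measure on the nonempty set $\mathcal{Z}_0$, some slice carries positive mass, forcing $\calS_p^{\tilde z_0}(\mu,\nu)=0$ and hence $\mu=\nu$. No gaps; your argument matches the paper's.
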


%%%%%%%%%%%%%%%%%%%%%%%%%%%%%%%%%%%
%\section{Numerical Experiments} \label{sec:numerical}
\section{NUMERICAL EXPERIMENTS} \label{sec:numerical}

We evaluate the performance of our proposed Sobolev transport on two applications: (i) document classification with word embedding and (ii) topological data analysis (TDA). 

\paragraph{Probability Measures Representation.} We first describe probability measure representation for documents with word embedding in document classification and persistent diagrams for geometric structured data in TDA.

\begin{itemize}
\item \textbf{Documents with Word Embedding.} We consider each document as a probability measure where each word and its frequency in the document are regarded as a support and a corresponding weight in the probability measure respectively. We then follow the approach in \citet{kusner2015word, ref:le2019tree} to use \textit{word2vec} word embedding \citep{mikolov2013distributed} pretrained on Google News\footnote{https://code.google.com/p/word2vec} for documents. The pretrained \textit{word2vec} contains about $3$ millions words/phrases. Consequently, each word in a document is mapped into a vector in $\R^{300}$. We also remove SMART stop words \citep{salton1988term} or words in documents which are not available in the pretrained \textit{word2vec}.

\item \textbf{Persistence Diagrams.} TDA provides a powerful toolkit to analyze complicated geometric structured data, e.g., object shape, material data, or linked twist maps \citep{adams2017persistence, ref:le2019tree}. TDA leverages algebraic topology methods (e.g., persistence homology) to extract robust topological features (e.g., connected components, rings, cavities) and yield a multiset of points in $\R^2$ which is also known as persistence diagram (PD). The two coordinates of a point in PD are corresponding to the birth and death time of a topological feature respectively. Therefore, each point in PD summarizes a life span of a particular topological feature. We regard each PD as an empirical measure where each 2-dimensional point in PD is considered as a support with a uniform weight in the empirical measure.
\end{itemize}

Note that supports in document classification are in a high-dimensional space (i.e., $\R^{300}$) while supports in TDA are in a low-dimensional space (i.e., $\R^2$). Therefore, these applications allow us to observe how the dimension of supports affects performances. We next describe various graphs with different sizes (i.e., given graph metrics which we assume in applications) considered in our experiments.

\begin{figure*}[ht]
  %\vspace{-6pt}
  \begin{center}
    \includegraphics[width=0.75\textwidth]{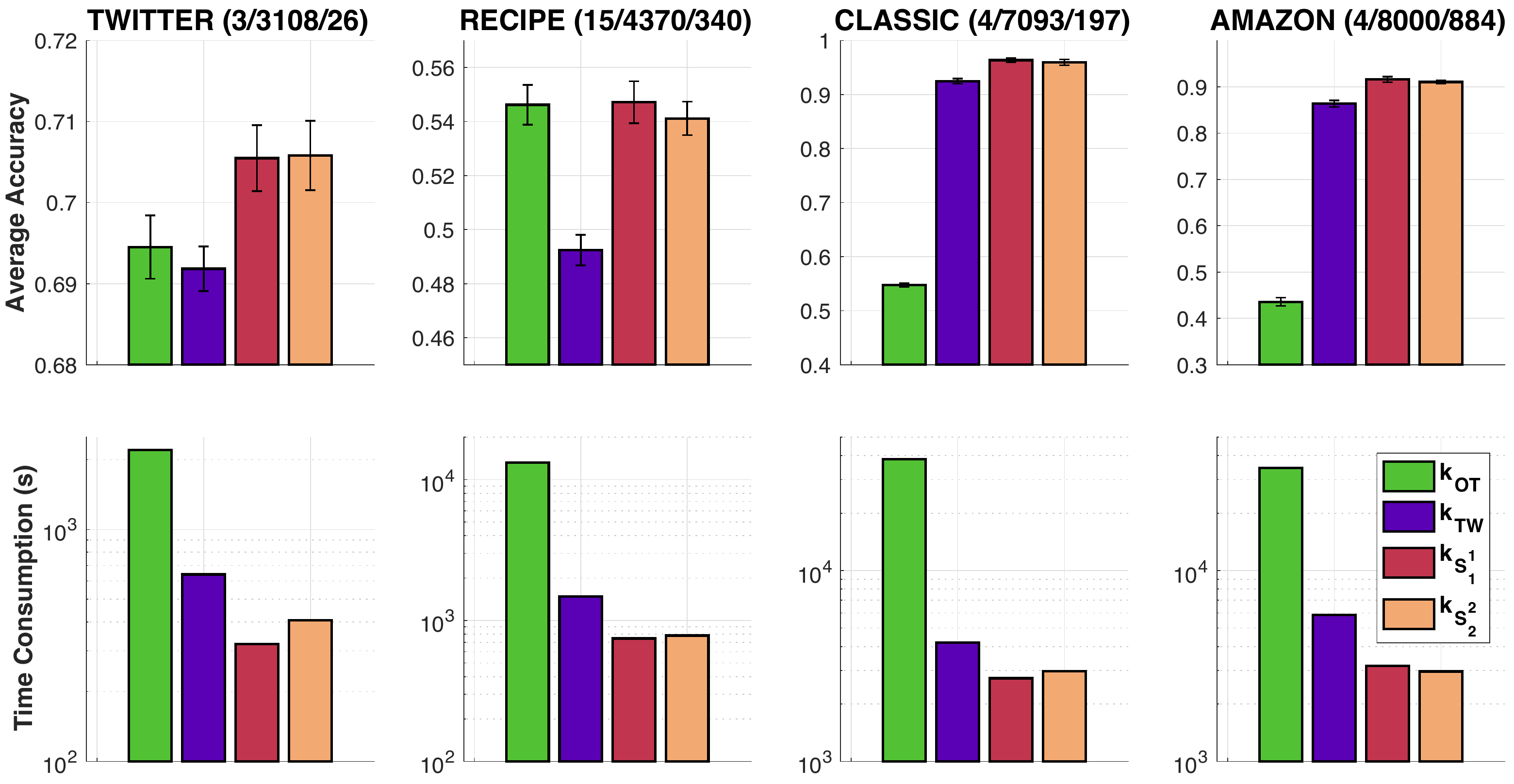}
  \end{center}
  \vspace{-6pt}
  \caption{SVM results and time consumption for kernel matrices in document classification with graph $\G_{\text{Log}}$. For each dataset, the numbers in the parenthesis are the number of classes; the number of documents; and the maximum number of unique words for each document respectively.}
  \label{fg:DOC_10KLog_main}
 \vspace{-6pt}
\end{figure*}

\begin{figure*}[ht]
%  \vspace{-16pt}
  \begin{center}
    \includegraphics[width=0.75\textwidth]{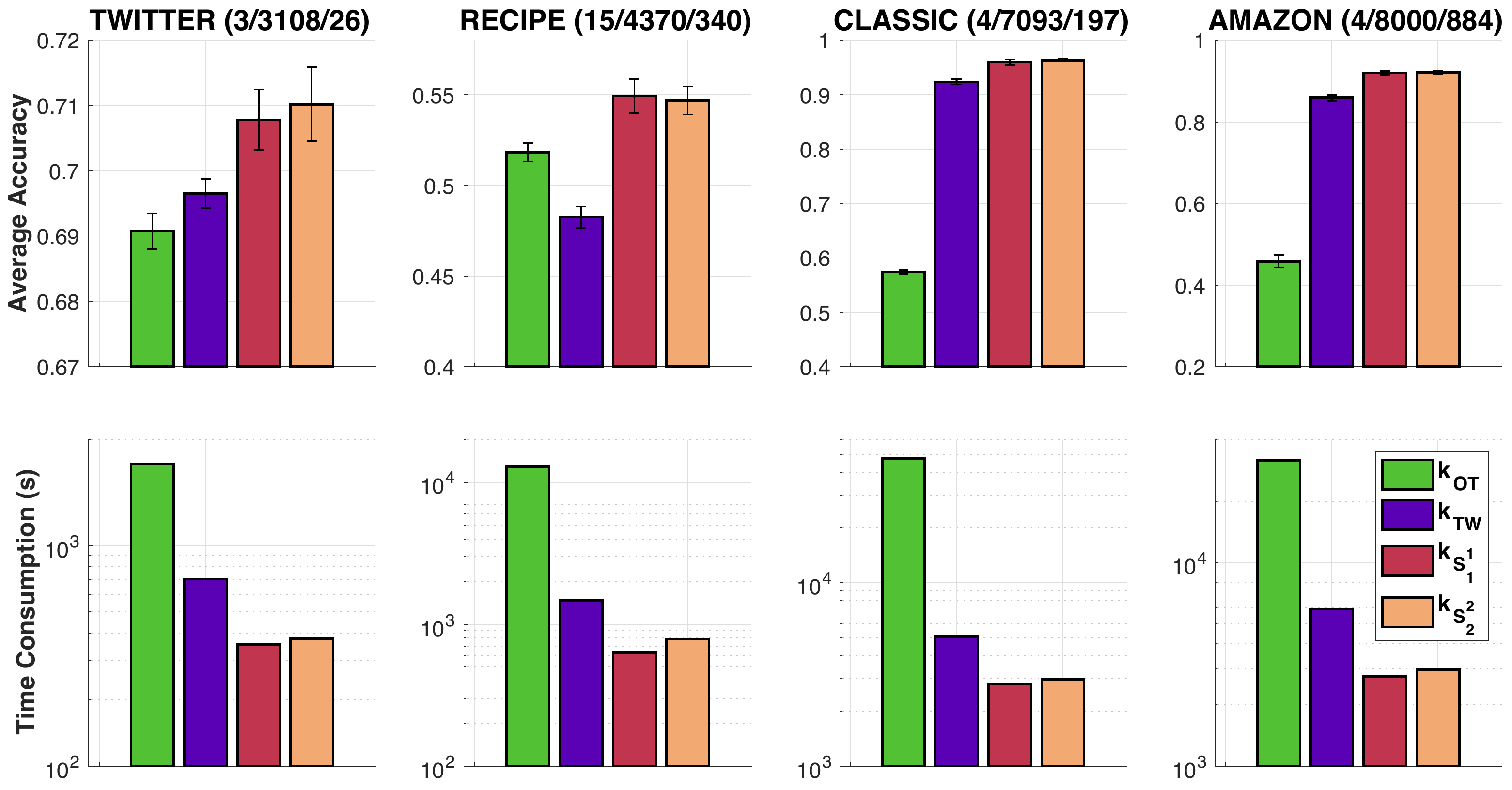}
  \end{center}
  \vspace{-6pt}
  \caption{SVM results and time consumption for kernel matrices in document classification with graph $\G_{\text{Sqrt}}$.}
  \label{fg:DOC_10KPow_main}
 \vspace{-6pt}
\end{figure*}

\paragraph{Graph Metric Construction.} For simplicity, we use a random graph metric for supports of probability measures as follow:

We first apply a clustering method, e.g., the farthest-point clustering, to partition supports of probability measures into at most $M$ clusters.\footnote{We set $M$ for the number of clusters when running the clustering method. Depending on input data, we obtain at most $M$ clusters.} We assign $V$ to be the set of centroids of these clusters. For edges, we consider two options: randomly choose (i) $M\log(M)$ edges or (ii) $M^{3/2}$ edges. For an edge $e$, its corresponding weight $w_e$ is computed by the Euclidean distance between the two nodes of that edge $e$. Let $\tilde{E}$ be the set of those randomly sampled edges and $n_c$ be the number of connected components in the graph $\tilde{\G}(V, \tilde{E})$, we then randomly add $(n_c - 1)$ more edges between these $n_c$ connected components to construct a connected graph $\G$ from $\tilde{\G}$. Denote $E_c$ as the set of these $(n_c - 1)$ added edges and $E = \tilde{E} \cup E_c$, then $\G(V, E)$ is the considered graph. 

We next describe baseline methods and detailed setup for our experiments.

\paragraph{Baselines and Setup.} We consider two typical baseline distances based on OT theory for probability measures supported on a graph metric space: (i) the optimal transport (OT) $d_{\text{OT}}$ with a graph metric cost (i.e., an instance of min-cost flow problem via Beckman formulation~~\citep[Section 6.3]{peyre2019computational}) and (ii) the tree-Wasserstein~\citep{ref:le2019tree} (TW) $d_{\text{TW}}$ where the tree structure is randomly sampled from the graph $\G$. In all experiments, we consider the kernels $k_{\calS_1}$ and $k_{\calS_2^2}$ for the proposed Sobolev transport distances and baseline kernels $k_{\text{OT}}(\cdot, \cdot) \coloneqq \exp(-td_{\text{OT}}(\cdot, \cdot))$ and $k_{\text{TW}}(\cdot, \cdot) \coloneqq \exp(-td_{\text{TW}}(\cdot, \cdot))$ for the corresponding OT distance $d_{\text{OT}}$ and TW distance $d_{\text{TW}}$ respectively. 

Following \citet{ref:le2019tree}, we evaluate those kernels with support vector machine (SVM) for document classification with word embedding and some tasks in TDA, e.g., the orbit recognition and object shape classification. Note that $k_{\calS_1}$, $k_{\calS_2^2}$ and $k_{\text{TW}}$ are positive definite, but $k_{\text{OT}}$ is empirically indefinite.\footnote{Generally, OT spaces are not Hilbertian~\citep[Section 8.3]{peyre2019computational}. Additionally, we also empirically observe that the Gram matrix for $k_{\text{OT}}$ has negative eigenvalues.} Similar as the approach in \citet{ref:le2019tree}, we regularize for the Gram matrix of $k_{\text{OT}}$ by adding a sufficiently large diagonal term. For multi-class classification, we employ 1-vs-1 strategy with Libsvm.\footnote{https://www.csie.ntu.edu.tw/$\sim$cjlin/libsvm/}

%\citet{Cuturi-2013-Sinkhorn, ref:le2019tree}

For each dataset, we randomly split it into $70\%/30\%$ for training and test with 10 repeats. We typically choose hyper-parameters via cross validation. For kernel hyperparameter, we choose $1/t$ from $\{q_{s}, 2q_{s}, 5q_{s}\}$ with $s = 10, 20, \dotsc, 90$ where $q_s$ is the $s\%$ quantile of a subset of corresponding distances observed on a training set. For SVM regularization hyperparameter, we choose it from $\left\{0.01, 0.1, 1, 10, 100\right\}$. We also consider a various number of nodes $M = 10^2, 10^3, 10^4, 4\times10^4$ for $\G$. Reported time consumption for all methods includes their corresponding preprocessing, e.g., compute shortest paths for Sobolev transport and OT, or sample random tree structure from graph for TW.

%Reported time consumption for all
% methods has already included their corresponding preprocessing, e.g., tree metric sampling for FlowAlign and
% DepthAlign, or one-dimensional projection for SGW

%%%%%%%%%%%%%
\subsection{Document Classification}

We consider 4 document datasets: \texttt{TWITTER}, \texttt{RECIPE}, \texttt{CLASSIC} and \texttt{AMAZON}. The statistical characteristics of these datasets are summarized in Figure~\ref{fg:DOC_10KLog_main}.

%\paragraph{Documents as probability measures.} We consider each document as a probability measure where each word and its frequency in the document are regarded as a support and a corresponding weight in the probability measure respectively.

%\paragraph{Word embedding for documents.} We follow the approach in \citep{kusner2015word, ref:le2019tree} to use the \textit{word2vec} word embedding \citep{mikolov2013distributed}, pretrained on Google News\footnote{https://code.google.com/p/word2vec} for documents. The pretrained \textit{word2vec} contains about $3$ millions words/phrases. Consequently, each word in a document is mapped into a vector in $\R^{300}$. We also remove SMART stop words \citep{salton1988term} or words in documents which are not available in the pretrained \textit{word2vec}.

\subsection{Topological Data Analysis (TDA)}
For TDA, we consider the orbit recognition and the object shape classification. 

% le2018persistence
%\paragraph{Persistence diagram (PD) for geometric structured data.} TDA provides a powerful toolkit to analyze complicated geometric structured data, e.g., object shape, material data, or linked twist maps \citep{adams2017persistence, ref:le2019tree}. TDA leverages algebraic topology methods (e.g., persistence homology) to extract robust topological features (e.g., connected components, rings, cavities) and yield a multiset of points in $\R^2$ which is also known as persistence diagram (PD). The two coordinates of a point in PD are corresponding to the birth and death time of a topological feature respectively. Therefore, each point in PD summarizes a life span of a particular topological feature. 

%\paragraph{PD as a probablity measure.} We regard each PD as an empirical measure where each 2-dimensional point in PD is considered as a support with a uniform weight in the empirical measure.

\subsubsection{Orbit Recognition}
We consider the synthesized dataset as in~\citet{adams2017persistence} for link twist map which are discrete dynamical systems to model flows in DNA microarrays~\citep{hertzsch2007dna}. There are $5$ classes of orbits in the dataset. Following \citet{le2018persistence}, for each class, we generated $1000$ orbits where each orbit contains $1000$ points. We consider the 1-dimensional topological features (i.e., connected components) for PD which are extracted with Vietoris-Rips complex filtration \citep{edelsbrunner2008persistent}. The statistical characteristics are summarized in Figure~\ref{fg:TDA_mix10KLog_main}.

\subsubsection{Object Shape Classification}

We consider a subset of \texttt{MPEG7} dataset \citep{latecki2000shape} having 10 classes and each class has 20 samples as in \citet{le2018persistence}. For simplicity, we follow the approach in \citet{le2018persistence} to extract $1$-dimensional topological features (i.e., connected components) for PD with Vietoris-Rips complex filtration~\citep{edelsbrunner2008persistent}. 

%\footnote{For an advanced filtration for the task, see \citep{turner2014persistent}.}

\begin{figure}[h]
%  \vspace{-6pt}
  \begin{center}
    \includegraphics[width=0.45\textwidth]{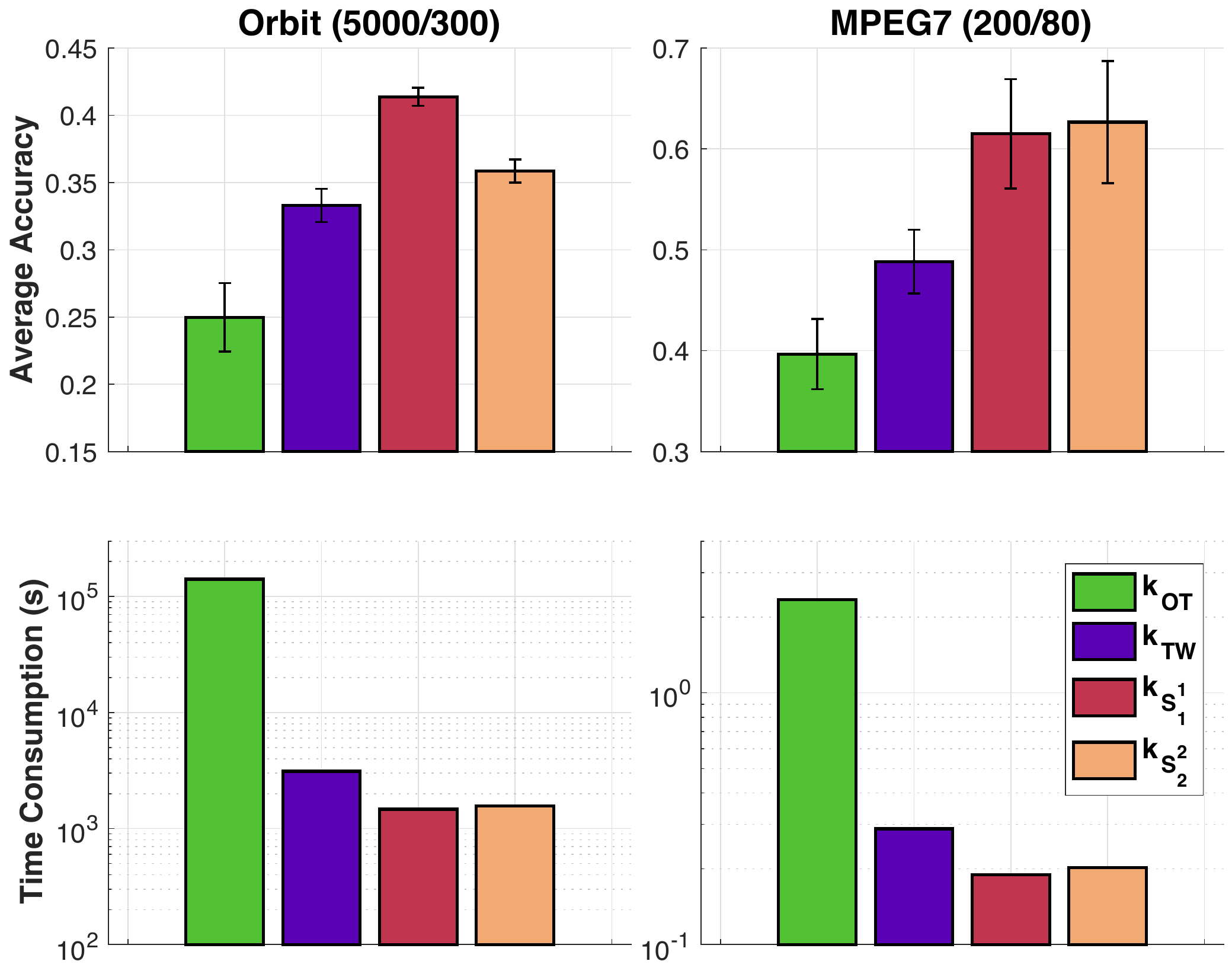}
  \end{center}
  \vspace{-6pt}
  \caption{SVM results and time consumption for kernel matrices in TDA with graph $\G_{\text{Log}}$. For each dataset, the numbers in the parenthesis are respectively the number of PD; and the maximum number of points in PD.}
  \label{fg:TDA_mix10KLog_main}
 \vspace{-6pt}
\end{figure}

\begin{figure}[h]
 % \vspace{-6pt}
  \begin{center}
    \includegraphics[width=0.45\textwidth]{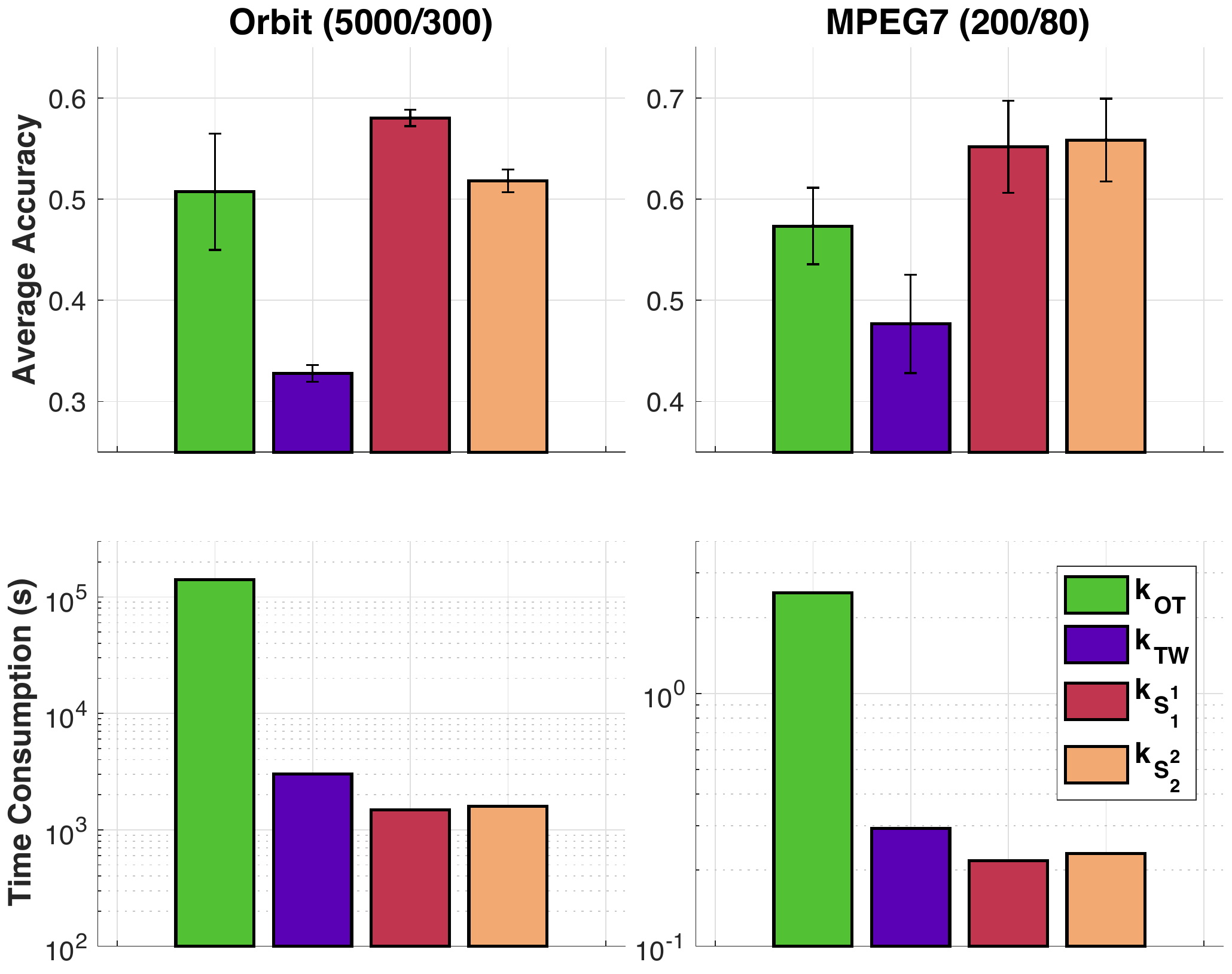}
  \end{center}
  \vspace{-6pt}
  \caption{SVM results and time consumption for kernel matrices in TDA with graph $\G_{\text{Sqrt}}$}
  \label{fg:TDA_mix10KPow_main}
 \vspace{-6pt}
\end{figure}

\begin{figure*}[ht]
%  \vspace{-6pt}
  \begin{center}
    \includegraphics[width=0.78\textwidth]{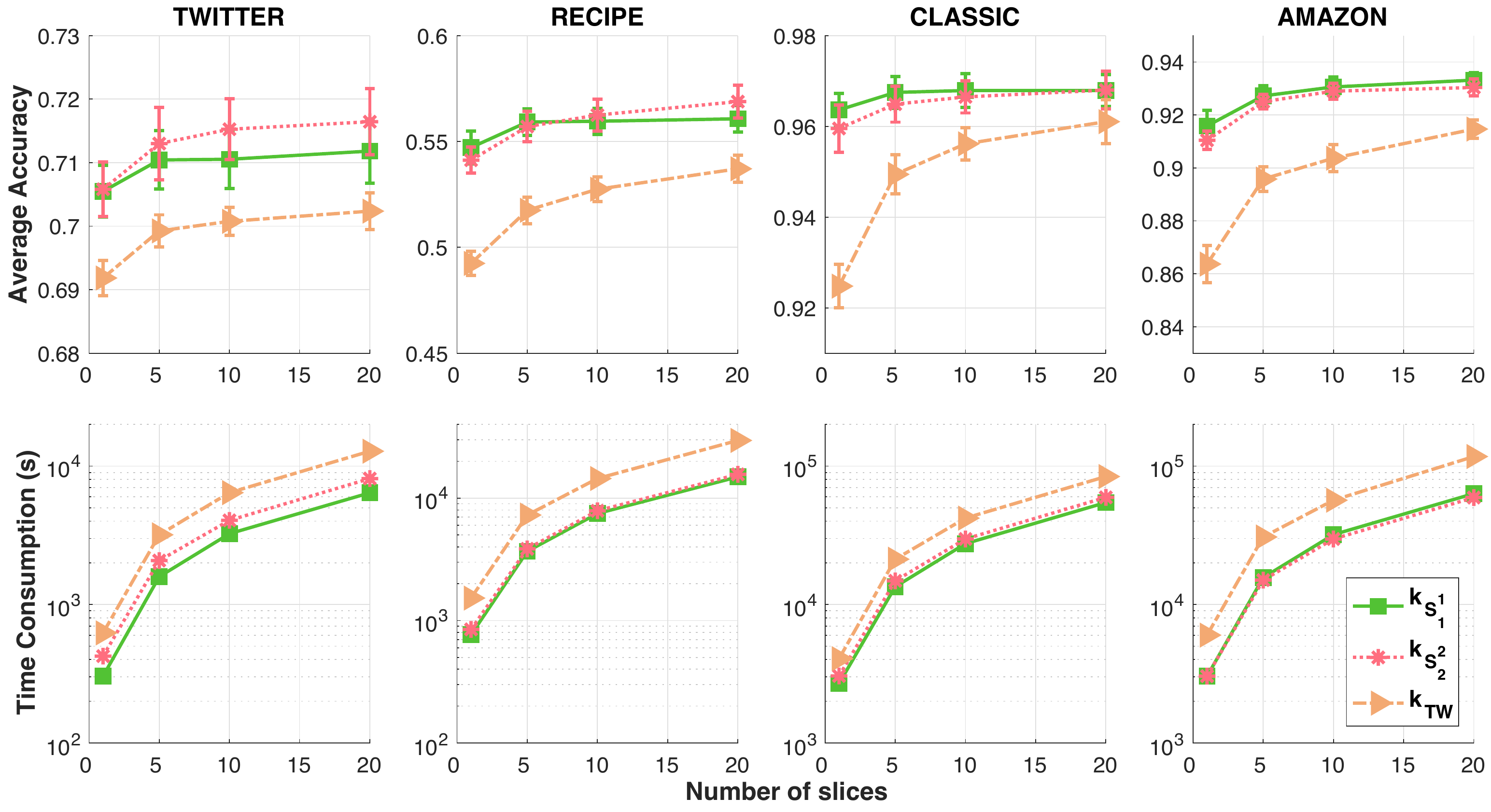}
  \end{center}
  \vspace{-6pt}
  \caption{SVM results and time consumption for kernel matrices of slice variants for Sobolev transport and tree-Wasserstein in document classification with graph $\G_{\text{Log}}$.}
  \label{fg:DOC_10KLog_SLICE_main}
 \vspace{-6pt}
\end{figure*}

\subsection{SVM Results, Time Consumption and Discussions}

We report results for graphs with $M=10^4$ for all datasets except \texttt{MPEG7} where $M=10^3$ due to its small size, and for both cases: (i) with $M\log(M)$ edges and (ii) with $M^{3/2}$ edges, and we denote those graphs as $\G_{\text{Log}}$ and $\G_{\text{Sqrt}}$ respectively.

In Figures~\ref{fg:DOC_10KLog_main} and \ref{fg:DOC_10KPow_main}, we illustrate the SVM results for document classification with word embedding with $\G_{\text{Log}}$ and $\G_{\text{Sqrt}}$, respectively. For TDA, we illustrate the results in Figures~\ref{fg:TDA_mix10KLog_main} and \ref{fg:TDA_mix10KPow_main} for $\G_{\text{Log}}, \G_{\text{Sqrt}}$ respectively. The performances of $k_{\calS_1}$, $k_{\calS_2^2}$ compare favorably with those of $k_{\text{OT}}$ and $k_{\text{TW}}$. Moreover, the time consumption of the Gram matrices for $k_{\calS_1}$, $k_{\calS_2^2}$ is comparative with that of $k_{\text{TW}}$ and is several-order faster than that of $k_{\text{OT}}$. Especially, in \texttt{Orbit} dataset, it took about more than \emph{$39$ hours} to compute the Gram matrix for $k_{\text{OT}}$, but only about \emph{$25$ minutes} for either $k_{\calS_1}$ or $k_{\calS_2^2}$. Recall that $k_{\text{OT}}$ is indefinite, this infiniteness may affect the performances of $k_{\text{OT}}$ in applications (in most of the experiments except the ones in \texttt{RECIPE} dataset with $\G_{\text{Log}}$ and in \texttt{Orbit} dataset with $\G_{\text{Sqrt}}$).

In Figure~\ref{fg:DOC_10KLog_SLICE_main}, we illustrate performances of slice variants for $k_{\calS_1}$, $k_{\calS_2^2}$ and $k_{\text{TW}}$ for document classification with word embedding with $\G_{\text{Log}}$. When we use more slices, the performances are improved. However, its computation is also linearly increased.

%Due to space limit, further results can be seen in the supplementary (Section  B).
Further results are placed in the supplementary (Section  B).

%%%%%%%%%%%%%%%%%%%%
\section{CONCLUSION}

In this paper, we have presented a scalable variant of optimal transport, namely the Sobolev transport, for probability measures supported on a graph (i.e., graph metric ground cost). By exploiting the graph-based Sobolev space structure, the proposed Sobolev transport distance admits a closed form solution for a fast computation. Moreover, the Sobolev transport is negative definite which allows to build positive definite kernels required in many kernel machine frameworks. We believe that exploiting local structures on supports such as tree or graph can improve the scalability for several optimal transport problems.

% Inspired by the optimal transport, we have developed the Sobolev transport and derived r which is an instance of the integral probability metric

\subsubsection*{Acknowledgements}
We thank anonymous reviewers and area chairs for their comments. TL acknowledges the support of JSPS KAKENHI Grant number 20K19873. The research of TN is supported in part by a grant from the Simons Foundation ($\#318995$). 
%The research of TN is supported in part by a grant from the Simons Foundation ($\#318995$).

%%%%%%%%%%
\balance
%\clearpage \newpage
\bibliographystyle{apalike}
\bibliography{aistats2022_CR_v2.bbl}

%%%%%%%%%%%%%%%%%%%%%%%%%%%%%%%%%%%
%%%%%% SUPPLEMENT (OPTIONAL) %%%%%%
%%%%%%%%%%%%%%%%%%%%%%%%%%%%%%%%%%%

\clearpage
\appendix

\thispagestyle{empty}

% For one-column format, uncomment the following:
\onecolumn \makesupplementtitle
% For two-column format, uncomment the following:
%\twocolumn[ \makesupplementtitle ]

The supplementary is organized into two parts.
\begin{itemize}
    \item In Section \ref{appsec:proofs}, we provide the proofs for the theoretical results in the main manuscript.
    \item In Section \ref{appsec:review}, we briefly review important aspects in our work, provide further experimental results and discussions about our proposed Sobolev transport.
\end{itemize} 

We note that we have released code for our proposals at
\begin{center}
\url{https://github.com/lttam/SobolevTransport}.
\end{center}

%%%%%%%%%%%%%%%%%%%
%\section{Proofs}\label{appsec:proofs}
\section{PROOFS}\label{appsec:proofs}

%%%%%%%%%%%%%%%%%%%
\subsection{Proofs and Results for Section~\ref{sec:distance}} \label{appendix:distance}

%%%%%%%%%%%%%%%%%%%
\paragraph{For Lemma~\ref{lem:metrize}.}
 
 \begin{proof}[Proof of Lemma~\ref{lem:metrize}]
By taking $f=0$ in Definition~\ref{def:distance}, we see that $\calS_p(\mu,\nu )\geq 0$ for any $(\mu, \nu)$, thus $\calS_p$ is non-negative. Assume that $\calS_p(\mu,\nu ) =0$. Then we must have 
\begin{align}\label{int_identity}
\int_\G f(x) \mu(\mathrm{d}x) - \int_\G f(x) \nu(\mathrm{d}x) =0
\end{align}
for all $f\in W^{1,p'}(\G, \lambda)$ satisfying $\|f'\|_{L^{p'}(\G, \lambda)}\leq 1$. Indeed, since otherwise there exists $\tilde f\in W^{1,p'}(\G, \lambda)$ with  $\|\tilde f'\|_{L^{p'}(\G, \lambda)}\leq 1$, and 
\[
\int_\G \tilde f(x) \mu(\mathrm{d}x) - \int_\G \tilde f(x) \nu(\mathrm{d}x) < 0.
\]
Then by  taking $f=-\tilde f $ in Definition~\ref{def:distance}, we see that $\calS_p(\mu,\nu )> 0$ which contradicts the assumption $\calS_p(\mu,\nu ) =0$. Thus \eqref{int_identity} holds. It now follows   from \eqref{int_identity} that 
\[
\int_\G f(x) \mu(\mathrm{d}x) = \int_\G f(x) \nu(\mathrm{d}x) 
\qquad \mbox{for every}\quad f\in W^{1,p'}(\G, \lambda),
\]
giving $\mu =\nu$ as desired. To prove the symmetry of $\calS_p(\mu,\nu )$, observe that if $f\in W^{1,p'}(\G, \lambda)$ with $\|f'\|_{L^{p'}(\G, \lambda)}\leq 1$, then we also have $-f \in W^{1,p'}(\G, \lambda)$ with $\|-f'\|_{L^{p'}(\G, \lambda)} = \|f'\|_{L^{p'}(\G, \lambda)} \leq 1$. As a consequence, $\calS_p(\mu,\nu ) = \calS_p(\nu,\mu)$.
%is symmetric. 
It remains to show that $\calS_p$ satisfies the triangle inequality. For this, let $\mu,\nu,\sigma\in \calP(\G)$. Then for any function $f\in W^{1,p'}(\G, \lambda)$ satisfying $\|f'\|_{L^{p'}(\G, \lambda)}\leq 1$, we have
\begin{align*}
\int_\G f(x) \mu(\mathrm{d}x) - \int_\G f(x) \nu(\mathrm{d}x) &= \Big[ \int_\G f(x) \mu(\mathrm{d}x) - \int_\G f(x) \sigma(\mathrm{d}x)\Big]  + \Big[\int_\G f(x) \sigma(\mathrm{d}x) - \int_\G f(x) \nu(\mathrm{d}x)\Big]\\
&\leq \calS_p(\mu,\sigma ) + \calS_p(\sigma,\nu ).
\end{align*}
This implies that $\calS_p(\mu,\nu )\leq \calS_p(\mu,\sigma ) + \calS_p(\sigma,\nu )$. We therefore conclude that $\calS_p$ is a metric on the space $\calP(\G)$.
\end{proof}

 \paragraph{For Proposition~\ref{prop:upper}.}
 %\footnote{We apologize for a typo in the Proposition~\ref{prop:upper} in the main manuscript. Instead of $\lambda(\G)^{\frac{1}{p'} - \frac{1}{q'}}$, it should be \textbf{$\lambda(\G)^{\frac{1}{q'} - \frac{1}{p'}}$}.}
 \begin{proof}[Proof of Proposition~\ref{prop:upper}]
Let $ f\in W^{1,p'}(\G, \lambda)$ be such that $\|f'\|_{L^{p'}(\G, \lambda)}\leq 1$. Since $q' < p'$ and $\lambda(\G) < +\infty$, it follows from Jensen's inequality that $f' \in L^{q'}(\G, \lambda)$.
Now define 
\[
g(x) \coloneqq  a f(x) \mbox{ with } a\coloneqq \lambda(\G)^{\frac{1}{p'} - \frac{1}{q'} }.
\]
Then according to Definition~\ref{def:distance} we have $g\in W^{1,q'}(\G, \lambda)$ with $g'(x) = a f'(x)$. Hence by using Jensen's inequality we obtain 
\begin{align*}
\Big(\frac{1}{\lambda (\G)}\int_{\G} |f'(x)|^{q'} \lambda(\mathrm{d}x) \Big)^{\frac{p'}{q'}}\leq  \frac{1}{\lambda(\G)} \int_{\G} |f'(x)|^{p'} \lambda(\mathrm{d}x),
\end{align*} 
which yields
 \begin{align*}
 \|g'\|_{L^{q'}(\G, \lambda)}  = a \|f'\|_{L^{q'}(\G, \lambda)}
&\leq a \,\, \lambda(\G)^{\frac{1}{q'} -\frac{1}{p'}} \|f'\|_{L^{p'}(\G, \lambda)} \\
&= \|f'\|_{L^{p'}(\G, \lambda)} \leq 1.
\end{align*} 
Therefore,
\begin{align*}
a \Big[ \int_\G f(x) \mu(\mathrm{d}x) - \int_\G f(x) \nu(\mathrm{d}x) \Big] = \int_\G g(x) \mu(\mathrm{d}x) - \int_\G g(x) \nu(\mathrm{d}x)  \leq \calS_q(\mu,\nu ). 
\end{align*}
Since this holds for any $ f\in W^{1,p'}(\G, \lambda)$ satisfying$\|f'\|_{L^{p'}(\G, \lambda)}\leq 1$, we conclude that 
\[
\calS_p(\mu,\nu ) \leq a^{-1}\, \calS_q(\mu,\nu ). 
\]
This completes the proof.
\end{proof}

 %%%%%%%%%%%%%%%%%%%
 \paragraph{For Proposition~\ref{prop:closed-form}.}
 
 \begin{proof}[Proof of Proposition~\ref{prop:closed-form}]
For $f\in W^{1,p'}(\G, \lambda)$,  we have by representation \eqref{FTC} that
\[
f(x) = f(z_0) + \int_{[z_0,x]} f'(y) \lambda(\mathrm{d}y).
\]
Let ${\bf{1}}_{[z_0,x]}(y)$ denote the indicator function of the shortest path $[z_0,x]$. That is, ${\bf{1}}_{[z_0,x]}(y)$ equals to $1$ if $y\in [z_0,x]$ and equals to $0$ otherwise.
Then  we obtain \begin{align*}
\int_\G f(x) \mu(\mathrm{d}x) &= f(z_0) \mu(\G) + \int_\G  \int_{[z_0,x]} f'(y) \lambda(\mathrm{d}y) \mu(\mathrm{d}x)\\
&= f(z_0) \mu(\G) + \int_\G  \int_{\G}  {\bf{1}}_{[z_0,x]}(y) \, f'(y) \lambda(\mathrm{d}y) \mu(\mathrm{d}x).
\end{align*}
By using Fubini's theorem to interchange the order of integration in the last expression, we further get \begin{align*}
\int_\G f(x) \mu(\mathrm{d}x) &= f(z_0) \mu(\G) + \int_\G  \int_{\G}  {\bf{1}}_{[z_0,x]}(y) \, f'(y)  \mu(\mathrm{d}x) \lambda(\mathrm{d}y)\\
&= f(z_0) \mu(\G) + \int_\G  \Big[\int_{\G}  {\bf{1}}_{[z_0,x]}(y) \,  \mu(\mathrm{d}x)\Big] f'(y)  \lambda(\mathrm{d}y) \\
& = f(z_0) \mu(\G)   + \int_{\G} f'(y)  \mu(\Lambda(y)) \, \lambda(\mathrm{d}y),
\end{align*}
where we have used the definition of $\Lambda(y)$ in \eqref{sub-graph} to obtain the last identity.

By exactly the same reason, we also have 
\begin{align*}
\int_\G f(x) \nu(\mathrm{d}x) 
 = f(z_0) \nu(\G)   + \int_{\G} f'(y)  \nu(\Lambda(y)) \, \lambda(\mathrm{d}y).
\end{align*}
Therefore, as $\mu(\G)=\nu(\G)$  we infer from Definition~\ref{def:distance} that  
\begin{align*}
\calS_{p}(\mu,\nu )  = 
\sup \limits_{f \in \mathbb{B}}~\int_{\G} f'(x) \big[ \mu(\Lambda(x)) -  \nu(\Lambda(x))\big] \, \lambda(\mathrm{d}x),
\end{align*}
where $\mathbb{B} \coloneqq  \Big\{ f\in W^{1,p'}(\G, \lambda):  \, \|f'\|_{L^{p'}(\G, \lambda)}\leq 1 \Big\}$.

Clearly, $\{ f': \, f\in \mathbb{B}\} \subset \{g\in L^{p'}(\G, \lambda): \, \|g\|_{L^{p'}(\G, \lambda)}\leq 1 \}$. On the other hand, for any $g\in L^{p'}(\G, \lambda)$ we have $g=f'$ with $f(x) \coloneqq  \int_{[z_0,x]} g(y) \lambda(\mathrm{d}y)\in W^{1,p'}(\G, \lambda)$. It follows that $\{ f': \, f\in \mathbb{B}\} = \{g\in L^{p'}(\G, \lambda): \, \|g\|_{L^{p'}(\G, \lambda)}\leq 1 \}$, and hence we can rewrite $\calS_{p}(\mu,\nu )$ as 
\begin{align}\label{Holder}
\calS_{p}(\mu,\nu )  =  
\sup \limits_{\|g\|_{L^{p'}(\G, \lambda)}\leq 1} \int_{\G} g(x) \big[ \mu(\Lambda(x)) -  \nu(\Lambda(x))\big] \, \lambda(\mathrm{d}x) =\Big[\int_{\G} | \mu(\Lambda(x)) -  \nu(\Lambda(x))|^p \, \lambda(\mathrm{d}x) \Big]^\frac1p,
\end{align}
which is the desired conclusion. Let us explain in details how to obtain the last identity in \eqref{Holder}. Firstly, by H\"older's inequality we  have
\begin{align*}
 \int_{\G} g(x) \big[ \mu(\Lambda(x)) -  \nu(\Lambda(x))\big] \, \lambda(\mathrm{d}x) 
 \leq \Big[\int_{\G} |g(x)|^{p'} \, \lambda(\mathrm{d}x) \Big]^{\frac{1}{p'}}\Big[\int_{\G} | \mu(\Lambda(x)) -  \nu(\Lambda(x))|^p \, \lambda(\mathrm{d}x) \Big]^\frac1p,
\end{align*}
and so
\begin{align*}
\sup \limits_{\|g\|_{L^{p'}(\G, \lambda)}\leq 1} \int_{\G} g(x) \big[ \mu(\Lambda(x)) -  \nu(\Lambda(x))\big] \, \lambda(\mathrm{d}x) \leq \Big[\int_{\G} | \mu(\Lambda(x)) -  \nu(\Lambda(x))|^p \, \lambda(\mathrm{d}x) \Big]^\frac1p.
\end{align*}
Secondly, by choosing 
\[g^*(x) = \frac{|r(x)|^{p-2} r(x)}{\|r\|_{L^p(\G, \lambda)}^{p-1}} 
\quad \mbox{with}\quad  r(x) \coloneqq \mu(\Lambda(x)) -  \nu(\Lambda(x))
\]
we see that $\|g^*\|_{L^{p'}(\G, \lambda)}= 1$ and $\int_{\G} g^*(x) \big[ \mu(\Lambda(x)) -  \nu(\Lambda(x))\big] \, \lambda(\mathrm{d}x) =  \Big[\int_{\G} | \mu(\Lambda(x)) -  \nu(\Lambda(x))|^p \, \lambda(\mathrm{d}x) \Big]^\frac1p$.
Thus we infer  that last identity in \eqref{Holder} holds true, and 
the function $g^*$ is  a maximizer for the  optimization problem in \eqref{Holder}. 
\end{proof}

%%%%%%%%%%%%%%%%%%%
\paragraph{For Corollary~\ref{cor:discrete}.}

 \begin{proof}[Proof of Corollary~\ref{cor:discrete}]
 We first recall that $\langle u,  v\rangle$ denotes the line segment in $\R^n$ connecting two points $u, v$, 
while $( u, v)$ means the same line segment but without its two end-points.
Then from Proposition~\ref{prop:closed-form} and as $\lambda$ has no atom, we get 
\[
\calS_{p}(\mu,\nu )^p 
=\sum_{e=\langle u,v\rangle\in E}   \int_{(u,v)} | \mu(\Lambda(x)) -  \nu(\Lambda(x))|^p \, \lambda(\mathrm{d}x).
\]
Since  $\mu$  and $\nu$ are supported on nodes, we can rewrite the above identity as
\begin{align*}
\calS_{p}(\mu,\nu )^p &=
\hspace{-0.3em}\sum_{e=\langle u,v\rangle\in E}   \int_{(u,v)} \hspace{-1em}| \mu(\Lambda(x)\setminus (u,v)) -  \nu(\Lambda(x)\setminus (u,v))|^p \, \lambda(\mathrm{d}x). 
\end{align*}
For $e=\langle u,v\rangle$ and $x\in (u,v)$, we observe that  $y\in \G\setminus (u,v)$ belongs to $\Lambda(x)$ if and only if $y\in \gamma_e$. It follows that $\Lambda(x)\setminus (u,v) =\gamma_e$, and thus 
\begin{align*}
\calS_{p}(\mu,\nu )^p 
=\hspace{-0.3em}\sum_{e=\langle u,v\rangle\in E}   \int_{(u,v)} \hspace{-1em}| \mu(\gamma_e) -  \nu(\gamma_e)|^p \, \lambda(\mathrm{d}x)
=\sum_{e\in E}    \big| \mu(\gamma_e) -  \nu(\gamma_e)\big|^p \lambda(e),
\end{align*}
which leads to the postulated result.
\end{proof}

%%%%%%%%%%%%%%%%%%%
\subsection{Proofs and Results for Section~\ref{sec:properties}}
\label{appendix:property}

%%%%%%%%%%%%%%%%%%%
\paragraph{For Lemma~\ref{lem:length-measure}.}

\begin{proof}[Proof of Lemma~\ref{lem:length-measure}]
Let  $[x,y]$ be a  shortest path connecting $x$ and $y$.  Assume that this path goes  through nodes $v_1, ..., v_k$, then obviously $\langle x,v_1\rangle$, $\langle v_1,v_2\rangle$, ...,$\langle v_k,y\rangle$ are corresponding shortest paths w.r.t.~its end-points. Therefore, it follows from Definition~\ref{def:measure} for $\lambda^*$ that 
\begin{align*}
\lambda^*([x,y]) &= \lambda^*(\langle x,v_1\rangle) + \lambda^*(\langle v_1,v_2\rangle) +\cdots + \lambda^*(\langle v_k,y\rangle)\\
& = d(x,v_1) + d(v_1, v_2) + \cdots + d(v_k,y) \\
& = d(x,y),
\end{align*}
where the last identity is due to the assumption that  $\G$ has no short cuts.
\end{proof}

%%%%%%%%%%%%%%%%%%%
\paragraph{For Corollary~\ref{cor:tree}.}

\begin{proof}[Proof of Corollary~\ref{cor:tree}]
This is a consequence of our Proposition~\ref{prop:closed-form} for $p=1$ and the results obtained 
in \citep{ref:evans2012phylogenetic, ref:le2019tree}. Indeed, when
$\G$ is a tree with root $z_0$, it is shown in \citep[Equation~(5)]{ref:evans2012phylogenetic} and in the proof of Proposition~1 in \citep{ ref:le2019tree} that the $1$-Wasserstein distance (see \eqref{equ:OTprob} for its definition) is given by 
\[
\vspace{-0.6em}
\calW_1(\mu,\nu)
= \int_{\G} | \mu(\Lambda(x)) -  \nu(\Lambda(x))| \, \lambda(\mathrm{d}x) 
\]
for any $\mu,\nu \in \calP(\G)$. 
By comparing this with our Proposition~\ref{prop:closed-form}, we conclude that $\calS_1(\mu,\nu) = \calW_1(\mu,\nu)$.
%We first recall that
%$\|g\|_{L^\infty(\G, \lambda^*)} = \inf \Big\{M \in \R: \, |g(x)| \leq M \mbox{ for $\lambda^*$-a.e. } x\in\G \Big\}$.
%Thus $\|g\|_{L^\infty(\G, \lambda^*)}\leq 1$ means that the function $g$ is bounded by $1$ almost everywhere on $\G$ w.r.t.~the measure $\lambda^*$. 
%Since any two points in a tree has a unique path connecting them and 
%$\lambda^*([x,y])  = d(x,y)$ by Lemma~\ref{lem:length-measure},
%we can see that
%the set $\Big\{ f\in W^{1,\infty}(\G, \lambda^*),  \, \|f'\|_{L^\infty(\G, \lambda^*)}\leq 1   \Big\}$ is the same as the set of continuous function $f:\G\to \R$ satisfying 
%$|f(x) - f(y)|\leq d(x,y)$ for all %points $x,y$ in the tree. Thanks to the dual formula \eqref{equ:KantorovichDualityProb} for $\calW_1$, it follows that $\calS_1(\mu,\nu) = \calW_1(\mu,\nu)$.
\end{proof}

\paragraph{For Lemma~\ref{w1-vs-sp}.}
%\footnote{We apologize for the erratum in the Lemma~\ref{w1-vs-sp} in the main text. Instead of $\calW_1(\mu,\nu) \leq \calS_p(\mu,\nu )$, it should be $\calW_1(\mu,\nu) \leq \lambda^*(\G)^{\frac{1}{p'}} \calS_p(\mu,\nu )$.}

\begin{proof}[Proof of Lemma~\ref{w1-vs-sp}]

%Recall that
%$\|g\|_{L^\infty(\G, \lambda^*)} = \inf \Big\{M \in \R: \, |g(x)| \leq M \mbox{ for $\lambda^*$-a.e. } x\in\G \Big\}$. As  the result obviously holds for $p=1$, we only need to consider the case  $1< p\leq \infty$. Let $f\in W^{1,\infty}(\G, \lambda^*)$ be such that $ \|f'\|_{L^\infty(\G, \lambda^*)}\leq 1$.
%Then we have 
%\[
%\|f'\|_{L^{p'}(\G, \lambda^*)} \leq \|f'\|_{L^{\infty}(\G, \lambda^*)}  \lambda^*(\G)^{\frac{1}{p'}}\leq \lambda^*(\G)^{\frac{1}{p'}}.
%\]
%Consequently, the function $\frac{f}{\lambda^*(\G)^{\frac{1}{p'}}}$ belongs to the space $\Big\{  g\in W^{1,p'}(\G, \lambda^*),  \, \|g'\|_{L^{p'}(\G, \lambda^*)}\leq 1   \Big\}$. Using this fact and Definition~\ref{def:distance} for $\calS_p(\mu,\nu )$, we obtain 
%\begin{align*}
% \int_\G f(x) \mu(\mathrm{d}x) - \int_\G f(x) \nu(\mathrm{d}x)  = \lambda^*(\G)^{\frac{1}{p'}}\Big[\int_\G \frac{f(x)}{\lambda^*(\G)^{\frac{1}{p'}}} \mu(\mathrm{d}x) - \int_\G \frac{f(x)}{\lambda^*(\G)^{\frac{1}{p'}}} \nu(\mathrm{d}x)\Big] \leq \lambda^*(\G)^{\frac{1}{p'}} \calS_p(\mu,\nu ). 
%\end{align*}
%Since this inequality is true for any function $f\in W^{1,\infty}(\G, \lambda^*)$ with  $ \|f'\|_{L^\infty(\G, \lambda^*)}\leq 1$, we infer that 
%$
%\calS_1(\mu,\nu )
%\leq \lambda^*(\G)^{\frac{1}{p'}} \calS_p(\mu,\nu )$.
%Due to $\calS_1(\mu,\nu)= \calW_1(\mu,\nu)$ from Corollary~\ref{cor:tree}, we thus conclude that
%\[
%\calW_1(\mu,\nu)
%\leq \lambda^*(\G)^{\frac{1}{p'}} %\calS_p(\mu,\nu ).
%\]

This is a direct consequence of Proposition~\ref{prop:upper} and Corollary~\ref{cor:tree}. Indeed, we obtain from Proposition~\ref{prop:upper} that $\calS_1(\mu,\nu) \leq \lambda^*(\G)^{\frac{1}{p'}} \calS_p(\mu,\nu )$ for any $1\leq p \leq \infty$ (notice that the case $p=1$ is trivial since $\frac{1}{p'} = 0$). Therefore, the conclusion follows as  $\calS_1(\mu,\nu)= \calW_1(\mu,\nu)$ by Corollary~\ref{cor:tree}.
\end{proof}

%%%%%%%%%%%%%%%%%%%
\paragraph{For Proposition~\ref{prop:isometry}.}

\begin{proof}[Proof of Proposition~\ref{prop:isometry}] The last statement is just a consequence of Corollary~\ref{cor:discrete}.
For the second statement, observe that the condition $\alpha\in  \cal K$ ensures that $a^i\geq 0$ for all $i$. Also  $\sum_{i=1}^n a^i =1$ since by inspection it is easy to see that
\begin{align*}
\sum_{e= \langle x_1,v\rangle: v\in N(x_1)} \alpha_e + \sum_{i=2}^n \sum_{e=\langle x_i, v\rangle: v\in N'(x_i)}\alpha_e 
= \sum_{i=2}^n \alpha_{\langle \hat x_i, x_i\rangle} .
\end{align*}
Therefore,  $ \rho\coloneqq \sum_{i=1}^n a^i \delta_{x_i}$ is a probability distribution on $V$. That is, $\rho \in \calP(V)$.

The second statement also implies that the map 
\eqref{feature-map} is onto.  Indeed, for any given $\alpha =(\alpha_e)_{e\in E}\in \calK$, let $ \rho= \sum_{i=1}^n a^i \delta_{x_i} \in \calP(V)$ be the corresponding measure given by the second statement in Proposition~\ref{prop:isometry}. Let $e\in E$ be arbitrary. Then  either $\gamma_e =\emptyset$ or $\gamma_e \neq \emptyset$. In the first case, we obviously have $\rho(\gamma_e) = 0 =\alpha_e$. On the other hand, for the second case if we let  $x_i$ be the node on the edge $e$ with the smaller distance to $z_0$, then $\gamma_e = \{x_i\} \cup \Big(\cup_{e' = \langle x_i, v\rangle: v\in N'(x_i)} \gamma_{e'}\Big)$ and this is the disjoint union. Thus,
\begin{align*}
\rho(\gamma_e) = \rho(\{x_i\})  + \sum_{e'=\langle x_i, v\rangle: v\in N'(x_i)}\rho(\gamma_{e'}) = a^i + \sum_{e=\langle x_i, v\rangle: v\in N'(x_i)}\alpha_{e'} = \alpha_e, 
\end{align*}
where the second equality is due to the induction process by repeating and tracing back to the base case $N'(x_i) =\emptyset$ to show that $\rho(\gamma_{e'}) = \alpha_{e'}$,  and the last equality is  by  \eqref{a^i}.  Thus the map \eqref{feature-map} is onto. 

%where {\color{blue}the second equality is due to the induction process by repeating and tracing back to the base case $N'(x_i) =\emptyset$ to show that $\rho(\gamma_{e'}) = \alpha_{e'}$},  and the last equality is  by  \eqref{a^i}.  Thus the map 
%\eqref{feature-map} is onto. 

To show that the map \eqref{feature-map} is one-to-one, assume that there exist $\rho_1, \, \rho_2 \in \calP(V)$ such that $\rho_1(\gamma_e \cap V)=\rho_2(\gamma_e \cap V)$ for every $e\in E$. Let
$ \alpha \coloneqq (\rho_1(\gamma_e\cap V) )_{e\in E}= (\rho_2(\gamma_e \cap V) )_{e\in E}$, and define $\rho \coloneqq \sum_{i=1}^n a^i \delta_{x_i}\in \calP(V)$
with $a^i$ being given by \eqref{a^i}. Since
\begin{align*}
\rho_1(\{x_1\}) &= 1-  \sum_{e= \langle x_1,v\rangle: v\in N(x_1)}  \rho_1(\gamma_e \cap V ), \\
\rho_1(\{x_i\}) &\coloneqq \rho_1(\gamma_{\langle \hat x_i, x_i\rangle} ) - \sum_{e=\langle x_i, v\rangle: v\in N'(x_i)}\rho_1(\gamma_e \cap V)\quad \mbox{for } i=2,...,n,
\end{align*}
we infer that $\rho_1(\{x_i\}) =a^i$ for all $i$. Due to the above choice of the distribution $\rho$, we therefore conclude that  $\rho_1 =\rho$. By exactly the same reasoning, we also have $\rho_2 =\rho$. Thus $\rho_1 =\rho_2$, and hence the map \eqref{feature-map} is one-to-one.
So the first statement in Proposition~\ref{prop:isometry} holds true, and the proof  is complete.
\end{proof}

%%%%%%%%%%%%%%%%%%%
\paragraph{For Proposition~\ref{prop:neg_def}}

\begin{proof}[Proof of Proposition~\ref{prop:neg_def}]
Let $\ell_p$ be the distance on $\R^m$ defined by: for  $x, z \in \mathbb{R}^{m}$,  $\ell_p(x,z) = \norm{x-z}_p = \left(\sum_{i=1}^m \left|x_{(i)} - z_{(i)} \right|^p\right)^{1/p}$ where $x_{(i)}$ is the $i^{th}$ coordinate of $x$.  We will first prove that for $1 \le p \le 2$, the $\ell_p$ distance and $\ell_p^p$ are negative definite.

For $a, b \in \mathbb{R}$, it is obvious that the function $(a, b) \mapsto (a - b)^2$ is negative definite. Consider $1 \le p \le 2$ and follow \citep[Corollary~2.10, pp.78]{berg1984harmonic}, the function $(a, b) \mapsto |a - b|^p$ is negative definite. It follows that $\ell_p^p$ is negative definite since it  is a sum of negative definite functions. Using this and  by applying \citep[Corollary~2.10, pp.78]{berg1984harmonic} for the function $\ell^p_p$, we also have that the function $\ell_p$ is  negative definite. 

We are now ready to prove the negative definiteness for $\calS_p$ and $\calS_p^p$. Let $m$ be the number of edges in the graph $\G$. Due to %Equation~\eqref{eq:SobolevTransport},
Corollary~\ref{cor:discrete},
$\lambda^{*}(e)^\frac1p \mu(\gamma_e) = w_e^\frac1p \mu(\gamma_e)$ with $e \in E$ can be regarded as a feature map for probability measure $\mu$ onto $\R^{m}_{+}$. Therefore, $\calS_p$ is equivalent to the $\ell_p$ distance between these feature maps (see also Proposition~\ref{prop:isometry}). Hence, $\calS_p$ and $\calS_p^p$ are negative definite for $1 \le p \le 2$.
\end{proof}

%%%%%%%%%%%%%%%%%%%
\paragraph{For Proposition~\ref{prop:slice-metric}.}
\begin{proof}[Proof of Proposition~\ref{prop:slice-metric}] 
By Lemma~\ref{lem:metrize} we know that $\calS_p^{z_0}$ is a metric on $\calP(\G)$ for a given unique-root node $z_0$. On the other hand, according to Definition~\ref{def:SlicedST}, $\calS_p^\eta$ is 
%an average over a sampling distribution $\eta$ on $Z_0$, %or 
a convex combination of the metric
$\calS_p^{z_0}$ with $z_0 \in \mathcal{Z}_0$\footnote{We assume that $\mathcal{Z}_0 \ne \emptyset$. This assumption is easily satisfied for general graph metric built from data points. See further discussion about the set $\mathcal{Z}_0$ (or the Assumption~\ref{a:unique} in the main text) in \S \ref{appsec:review}.}. Therefore, it follows immediately that $\calS_p^\eta$ is also a metric. Indeed, the nonnegativity and symmetry are obvious. Also, if $\calS_p^\eta(\mu,\nu)=0$ then we have $\calS_p^{z_0}(\mu,\nu) =0$ for every point  $z_0\in \mathcal{Z}_0$ satisfying $\eta(\{z_0\})>0$.  As $\sum_{ z_0\in \mathcal{Z}_0} \eta(\{z_0\}) = 1$, there must exists  a point $\tilde z_0\in \mathcal{Z}_0$ such that $\eta(\{\tilde z_0\})>0$. 
Thus we obtain $\calS_p^{\tilde z_0}(\mu,\nu) =0$, and hence   $\mu = \nu$ by Lemma~\ref{lem:metrize}. To check the triangle inequality, let $\mu,\nu, \sigma \in \calP(\G)$ be arbitrary. We then  use Definition~\ref{def:SlicedST} and  Lemma~\ref{lem:metrize} to get 
\begin{align*}
        \calS_p^\eta(\mu, \nu) 
        = \sum_{ z_0\in \mathcal{Z}_0} \eta(\{z_0\})\, \calS_p^{z_0}(\mu,\nu)
        &\leq \sum_{ z_0\in \mathcal{Z}_0} \eta(\{z_0\})\, \Big[ \calS_p^{z_0}(\mu,\sigma) + \calS_p^{z_0}(\sigma,\nu)\Big] \\
        &= \calS_p^\eta(\mu, \sigma) + \calS_p^\eta(\sigma, \nu).
    \end{align*} 
We thus conclude that $\calS_p^\eta$ is a metric  on $\calP(\G)$.
\end{proof}

%%%%%%%%%%%%%%%%%%%
\section{FUTHER RESULTS AND DISCUSSIONS}\label{appsec:review}

In this section, we give brief reviews about important aspects in our works, provide further experimental results and further discussions for our proposed Sobolev transport distance.

%%%%%%%%%%%%%%%%%%%
\subsection{Brief Reviews}

In this section, we briefly review about important aspects in our work and provide further experimental results.

\paragraph{For Kernels.} We review some important definitions (e.g., positive/negative definite kernels \citep{berg1984harmonic}) and theorems (e.g., Theorem 3.2.2 in \citet{berg1984harmonic}) about kernels used in our work.

\begin{itemize}

\item \textbf{Positive Definite Kernels \citep[pp.~66--67]{berg1984harmonic}.} A kernel function $k: \Omega \times \Omega \rightarrow \R$ is  positive definite if $\forall m \in \N^{*}, \forall x_1, x_2, ..., x_m \in \Omega$, we have 
\[
\sum_{i, j} c_i c_j k(x_i, x_j) \ge 0, \qquad \forall c_i \in \R.
\]

\item \textbf{Negative Definite Kernels \citep[pp.~66--67]{berg1984harmonic}.} A kernel function $k: \Omega \times \Omega \rightarrow \R$ is  negative definite if $\forall m \ge 2, \forall x_1, x_2, ..., x_m \in \Omega$, we have 
\[
\sum_{i, j} c_i c_j k(x_i, x_j) \le 0, \qquad \forall c_i \in \R \,\, \text{s.t.} \, \sum_i c_i = 0.
\]

\item \textbf{Theorem 3.2.2 in \citep[pp.~74]{berg1984harmonic} for Kernels.}
If $\kappa$ is a \textit{negative definite} kernel, then $\forall t > 0$, kernel 
\[
k_{t}(x, z) \coloneqq \exp{\left(- t \kappa(x, z)\right)}
\]
is positive definite.
\end{itemize}

\paragraph{For Persistence Diagrams and Definitions in Topological Data Analysis.} We refer the reader to \citet[\S2]{kusano2017kernel} for a review about mathematical framework for persistence diagrams (e.g., persistence diagrams, filtrations, persistent homology).

\paragraph{For the Integral Probability Metric.} Let $\mathfrak{F}$ be a class of real-valued bounded measurable functions on $\Omega$; $\mu, \nu$ be two Borel probability distributions on $\Omega$, then the integral probability metric $\mathcal{I}$ associated with $\mathfrak{F}$ \citep{muller1997integral} is defined as follow:
\[
\mathcal{I}_{\mathfrak{F}} \Let \sup_{f \in \mathfrak{F}} \left| \int_{\Omega} f(x) \mu(dx) - \int_{\Omega} f(z) \nu(dz) \right|.
\]

Some popular instances of the integral probability metrics are: (i) Dudley metric, (ii) Wasserstein metric, (iii) total variation metric, (iv) Kolmogorov metric, (v) maximum mean discrepancies, to name a few \citep{sriperumbudur2009integral, muller1997integral}.

\paragraph{For the 1-Wasserstein Distance.} Let $\mu$, $\nu$ be two Borel probability distributions on $\Omega$, $R(\mu, \nu)$ be the set of probability distributions $\pi$ on $\Omega \times \Omega$ such that $\pi(A \times \Omega) = \mu(A)$ and $\pi(\Omega \times B) = \nu(B)$ for all Borel sets $A$, $B$. The $1$-Wasserstein distance $\calW_1$ with a cost function $c$ is defined as follow:
\begin{equation}\label{equ:OTprob}
\calW_1(\mu, \nu) = \inf\left\{ \int_{\Omega \times \Omega} c(x, z) \pi(dx, dz) \mid \pi \in R(\mu, \nu) \right\}.
\end{equation}
Let $\mathcal{F}_c$ be the set of Lipschitz functions w.r.t. the cost function $c$, i.e. functions $f : \Omega \rightarrow \R$ such that $\left| f(x) - f(z)\right| \le c(x, z), \forall x, z \in \Omega$. Then, the dual of ~\eqref{equ:OTprob} is:
\begin{equation}\label{equ:KantorovichDualityProb}
\calW_1(\mu, \nu) = \sup_{f \in \mathcal{F}_c}\left\{\int_{\Omega} f(x) \mu(dx) - \int_{\Omega} f(z) \nu(dz) \right\}.
\end{equation}

%%%%%%%%%%%%%%%%%%%
\subsection{Further Discussions}

%\tam{Will discuss about set $Z_0$, and our assumption}

\paragraph{About the Assumption~\ref{a:unique}.} In our setting, the nodes in the graph are points in $\R^n$, edge weights are the distance (e.g., $\ell_2$ distance) between two corresponding nodes (i.e., points in $\R^n$). Therefore, consider any two nodes in the graph, there may be several paths connecting one node to the other node, and with a high probability, lengths of those paths are different. Hence, it is almost surely that every node in the graph can be regarded as unique-path root node.

In case, we have some special graph, e.g., a grid of nodes. There is no unique-path root node for such graph. However, we can easily \textit{adjust/approximate} such graph into a graph with unique-path root nodes by randomly perturbing each node of such graph in a ball (e.g., $\ell_2$ ball) with a small radius. 

\paragraph{About the Proposed Sobolev Transport Distance.} In our setting, we assume that we know the graph metric space (i.e., the graph structure) which supports of probability measures are living. Giving such graph, we define our Sobolev transport for probability measures supported on that graph metric space.

In our experiments (in Section~\ref{sec:numerical}), we evaluate our proposed Sobolev transport on (i) various graph structures (e.g., $\G_{\text{Log}}$ and $\G_{\text{Sqrt}}$) (ii) with different graph sizes (e.g., the number of nodes in the graphs $M = 10^2, 10^3, 10^4, 4 \times 10^4$. Performances of the Sobolev transport consistently compare favorably with those of the baseline approaches. 

The question about learning the optimal graph metric structure from data for the Sobolev transport is left for future work.

\paragraph{A Further Note on Implementation for the Sobolev Transport.} Following the closed-form solution of Sobolev transport for discrete probability measures supported on a graph metric space in Corollary~\ref{cor:discrete} and Equation~\eqref{eq:DiscreteSobolevTransport_Opt}, we need to compute the mass of $\mu, \nu$ on $\gamma_e$ for each edge $e \in E_{\mu, \nu}$. 

Recall that for any support $z$ of a probability measure, it only contributes to the $\gamma_e$ when $e$ belongs to the shortest path in $\G$ from the unique-path root node $z_0$ to the considered support $z$. Therefore, we only need to run the Dijkstra algorithm for shortest paths one time for the source $z_0$ and the destination $\left(V \setminus \{z_0\}
\right)$.\footnote{One can consider a set of all considered supports (exclude $z_0$) as the destination set for Dijkstra for a faster computation.} Then, we can index for each support $z$ in $\G$ for its contribution to each $\gamma_e$.\footnote{We only need to compute this step one time (i.e., it can be considered as the preprocessing process involving only the graph structure and nothing about the probability distributions, and is done only once regardless how many pairs $(\mu,\nu)$ that we have to measure. In this step by identifying shortest paths we calculate the set $\gamma_e$ for each edge $e\in E$.), see our Remark~\ref{rm:computeSobolev}.} 

Therefore, for a given probability measure $\mu$, we only need to consider each support of $\mu$ one time to compute $\mu(\gamma_e)$ for all edge $e$ in the graph $\G$ instead of a naive implementation where we need to consider all supports of $\mu$ for each $\gamma_e$ in $\G$.

\subsection{Further Experimental Results}

In this section, we provide further experimental results.

\paragraph{Further Results for Document Classification with Word Embedding for Different Values of $M$ (i.e., the Number of Nodes in the Graph).}

\begin{itemize}
    \item \textbf{For Graph $\G_{\text{Log}}$.} Similar to Figure~\ref{fg:DOC_10KLog_main} in the main text, we illustrate the SVM results and time consumption of kernel matrices for document classification with word-embedding for graph $\G_{\text{Log}}$ when $M=10^3$ and $M=10^2$ in Figures~\ref{fg:DOC_1K_Log_AccTime} and \ref{fg:DOC_100_Log_AccTime} respectively.

    \item \textbf{For Graph $\G_{\text{Sqrt}}$.} Similar to Figure~\ref{fg:DOC_10KPow_main} in the main text, we illustrate the SVM results and time consumption of kernel matrices for document classification with word-embedding for graph $\G_{\text{Sqrt}}$ when $M=10^3$ and $M=10^2$ in Figures~\ref{fg:DOC_1K_Pow_AccTime} and \ref{fg:DOC_100_Pow_AccTime} respectively.

\end{itemize}

\paragraph{Further Results for TDA for Different Values of $M$ (i.e., the Number of Nodes in the Graph).}

\begin{itemize}
    \item \textbf{For Graph $\G_{\text{Log}}$.} Similar to Figure~\ref{fg:TDA_mix10KLog_main} in the main text, we illustrate the SVM results and time consumption for TDA for graph $\G_{\text{Log}}$ when $M=10^3$ and $M=10^2$ in Figure~\ref{fg:TDA_Other_Log_AccTime}.

    \item \textbf{For Graph $\G_{\text{Sqrt}}$.} Similar to Figure~\ref{fg:TDA_mix10KPow_main} in the main text, we illustrate the SVM results and time consumption for TDA for graph $\G_{\text{Sqrt}}$ when $M=10^3$ and $M=10^2$ in Figure~\ref{fg:TDA_Other_Pow_AccTime}.
\end{itemize}

\paragraph{Further Results with Large Graph ($M=40000$).} We illustrate the SVM results and time consumption of kernel matrices for large graph with $M=40000$ for both $\G_{\text{Log}}$ and $\G_{\text{Sqrt}}$ in Figure~\ref{fg:LargeGraph_40K_AccTime}.

%%%%%%%%%%%%%%
% SLICE
\paragraph{Further Results for Slice Variants of Sobolev Transport and Tree-Wasserstein.} Similar as Figure~\ref{fg:DOC_10KLog_SLICE_main} in the main text, we illustrate further results for both document classification with word embedding and TDA for slice variants of Sobolev trransport and tree-Wasserstein: (i) for both $\G_{\text{Log}}$ and $\G_{Sqrt}$, (ii) for different values of $M$ (e.g., $10^2, 10^3, 10^4$).

\begin{itemize}
    \item \textbf{For Document Classification with Word Embedding.}
    \begin{itemize}
        \item \textbf{For Graph $\G_{\text{Log}}$.} We illustrate SVM results and time consumption of kernel matrices for sliced variants of Sobolev transport and tree-Wasserstein for document classification with word-embedding for graph $\G_{\text{Log}}$ when $M=10^3$ and $M=10^2$ in Figures \ref{fg:DOC_1K_Log_AccTime_SLICE} and \ref{fg:DOC_100_Log_AccTime_SLICE} respectively.
        
        \item \textbf{For Graph $\G_{\text{Sqrt}}$.} We illustrate SVM results and time consumption of kernel matrices for sliced variants of Sobolev transport and tree-Wasserstein for document classification with word-embedding for graph $\G_{\text{Log}}$ when $M=10^4$, $M=10^3$ and $M=10^2$ in Figures \ref{fg:DOC_10K_Pow_AccTime_SLICE}, \ref{fg:DOC_1K_Pow_AccTime_SLICE}, and \ref{fg:DOC_100_Pow_AccTime_SLICE} respectively.
    \end{itemize}
    
    \item \textbf{For TDA.}
    \begin{itemize}
        \item \textbf{For Graph $\G_{\textbf{Log}}$.} We illustrate SVM results and time consumption of kernel matrices for sliced variants of Sobolev transport and tree-Wasserstein for TDA for graph $\G_{\text{Log}}$ when $M=10^4$ for \texttt{Orbit} and $M=10^3$ for \texttt{MPEG7} in Figure~\ref{fg:TDA_mix10K1K_Log_AccTime_SLICE} (due to a small size of the dataset \texttt{MPEG7}); and with $M=10^3$ and $M=10^2$ for both datasets in Figure \ref{fg:Other_TDA_Log_AccTime_SLICE}.
        
        \item \textbf{For Graph $\G_{\textbf{Sqrt}}$.} We illustrate SVM results and time consumption of kernel matrices for sliced variants of Sobolev transport and tree-Wasserstein for TDA for graph $\G_{\text{Sqrt}}$ when $M=10^4$ for \texttt{Orbit} and $M=10^3$ for \texttt{MPEG7} in Figure~\ref{fg:TDA_mix10K1K_Pow_AccTime_SLICE} (due to a small size of the dataset \texttt{MPEG7}); and with $M=10^3$ and $M=10^2$ for both datasets in Figure \ref{fg:Other_TDA_Pow_AccTime_SLICE}.
    \end{itemize}
\end{itemize}

\paragraph{Further Results with Large Graph ($M=40000$) for sliced variants.} We illustrate the SVM results and time consumption of kernel matrices for sliced variants of Sobolev transport and tree-Wasserstein for large graphs where the number of nodes is $M=40000$ for both $\G_{\text{Log}}$ and $\G_{\text{Sqrt}}$ in Figure~\ref{fg:Other_TDA_Large_AccTime_SLICE}.

\paragraph{Further Results for Tree-Wasserstein Kernel.} We illustrate the SVM results for tree-Wasserstein kernel with the minimum spanning tree of the given graph, denote as $k_{\text{TW}^{\text{MST}}}$ for both graphs $\G_{\text{Log}}$ and $\G_{\text{Sqrt}}$ where the number of nodes is $M=10000$ on document classification in Figure~\ref{fg:Other_DOC_MST_AccTime}. The performances of $k_{\text{TW}^{\text{MST}}}$ improves those of $k_{\text{TW}}$ (with random trees from a given graph).

\paragraph{Discussions.} Through various tasks (e.g., document classification with work embedding and TDA), with various graph structure (e.g., $\G_{\text{Log}}$ and $\G_{\text{Sqrt}}$) with different graph sizes (e.g., the number of nodes in the graphs $M = 10^2, 10^3, 10^4, 4 \times 10^4$), the performances of the proposed Sobolev transport consistently compare favorably with those of other baselines. The Sobolev transport is several-order faster than the optimal transport with graph metric. Additionally, the Sobolev transport can leverage information from the graph which is more flexible and has more degree of freedom in applications than tree-Wasserstein (for tree structure). The question about learning the optimal graph structure from data is left for future work. We also think that local structures on supports such as graph structure in our work or tree structure in \citet{ref:le2019tree, pmlr-v130-le21a, le2021flow} play an important role to scale up problems in optimal transport, especially for large-scale applications.

% ======= DOC (10K, 1K, 100) -- NlogN

\begin{figure}
%  \vspace{-16pt}
  \begin{center}
    \includegraphics[width=0.7\textwidth]{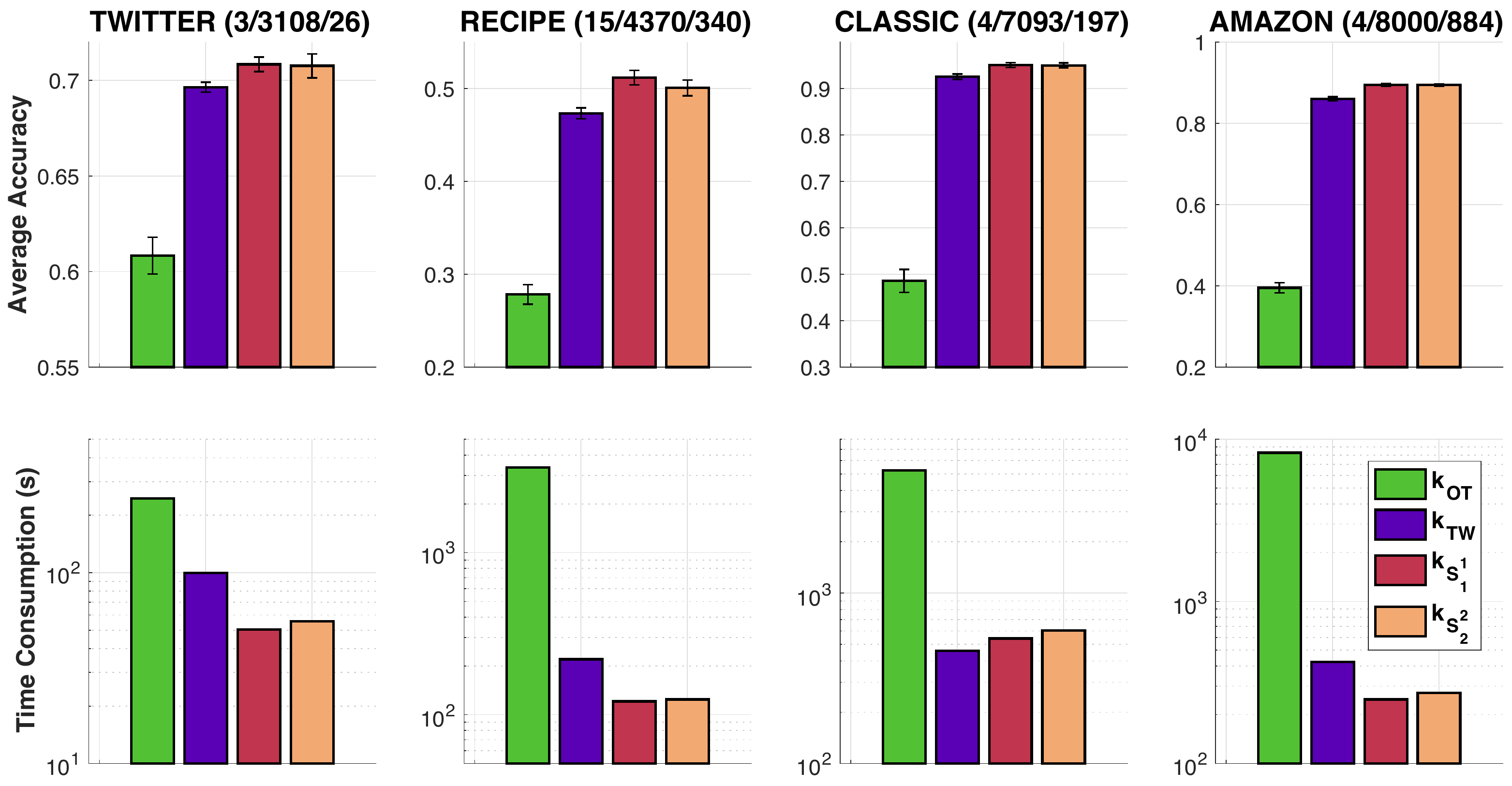}
  \end{center}
  \vspace{-6pt}
  \caption{SVM results and time consumption for kernel matrices with $\G_{\text{Log}}$ where $M = 10^3$.}
  \label{fg:DOC_1K_Log_AccTime}
 \vspace{-10pt}
\end{figure}

\begin{figure}
%  \vspace{-16pt}
  \begin{center}
    \includegraphics[width=0.7\textwidth]{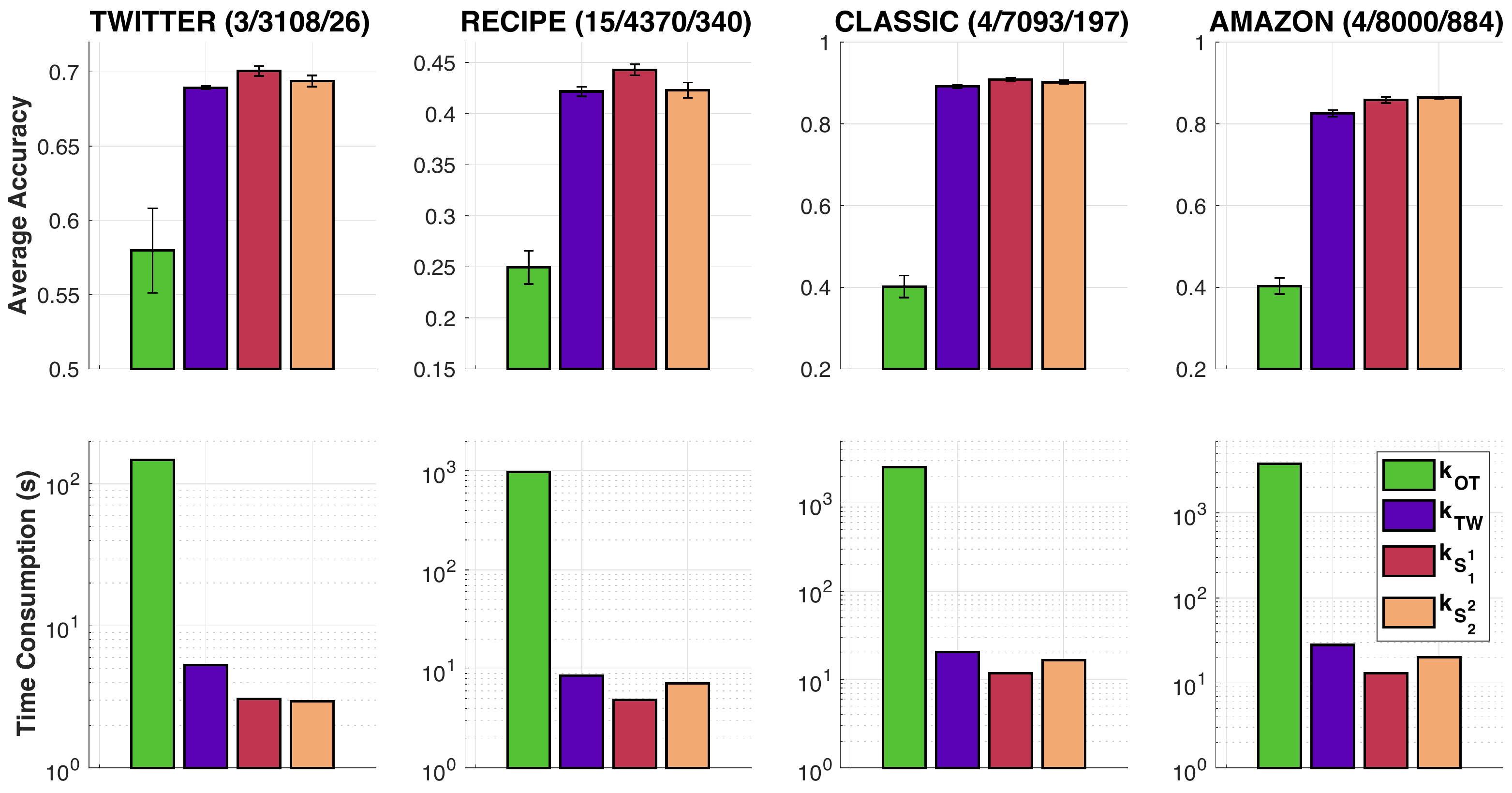}
  \end{center}
  \vspace{-6pt}
  \caption{SVM results and time consumption for kernel matrices with $\G_{\text{Log}}$ where $M = 10^2$.}
  \label{fg:DOC_100_Log_AccTime}
 \vspace{-10pt}
\end{figure}

% ======= DOC (10K, 1K, 100) -- N^(3/2)

\begin{figure}
%  \vspace{-16pt}
  \begin{center}
    \includegraphics[width=0.7\textwidth]{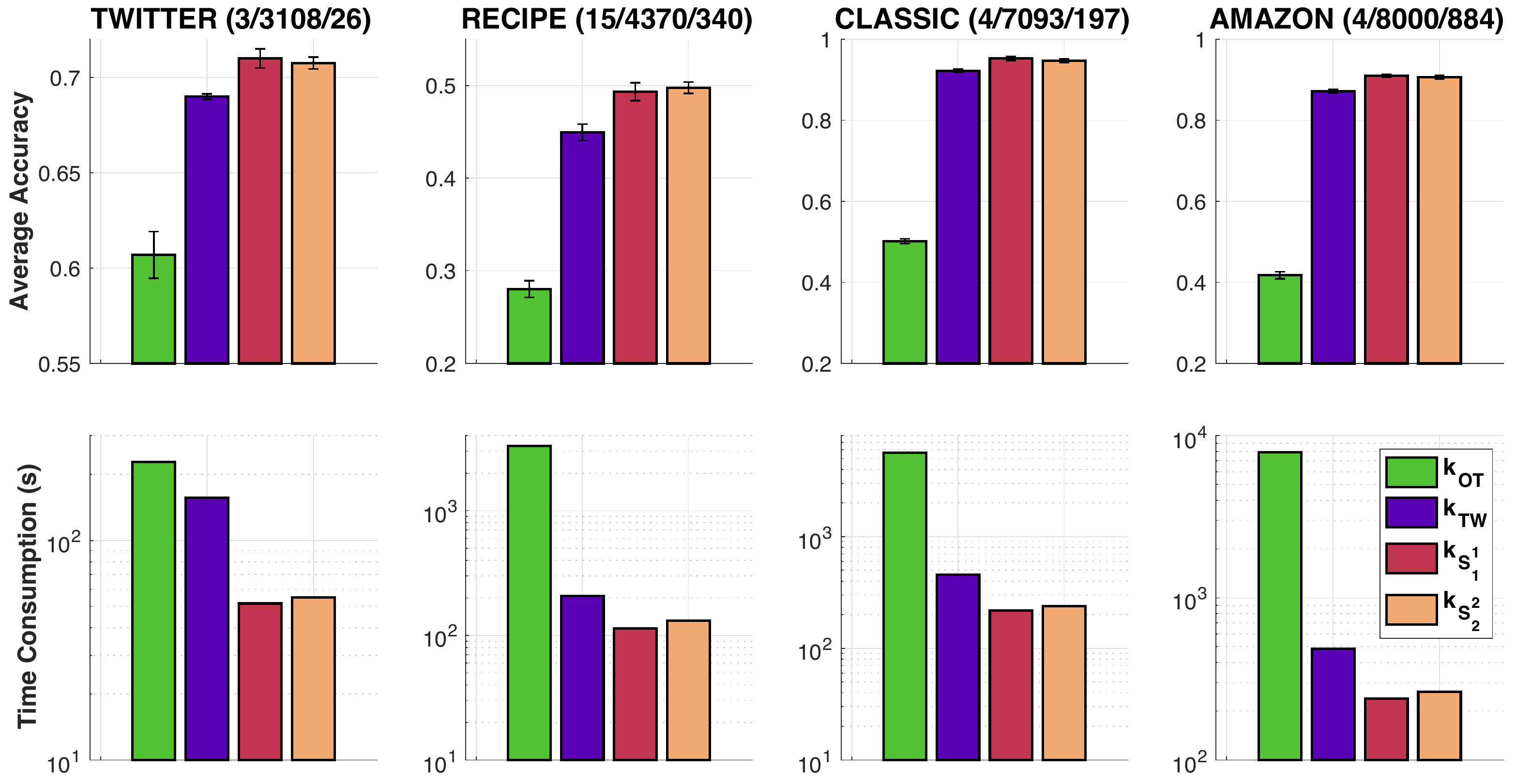}
  \end{center}
  \vspace{-6pt}
  \caption{SVM results and time consumption for kernel matrices with $\G_{\text{Sqrt}}$ where $M = 10^3$.}
  \label{fg:DOC_1K_Pow_AccTime}
 \vspace{-10pt}
\end{figure}

\begin{figure}
%  \vspace{-16pt}
  \begin{center}
    \includegraphics[width=0.7\textwidth]{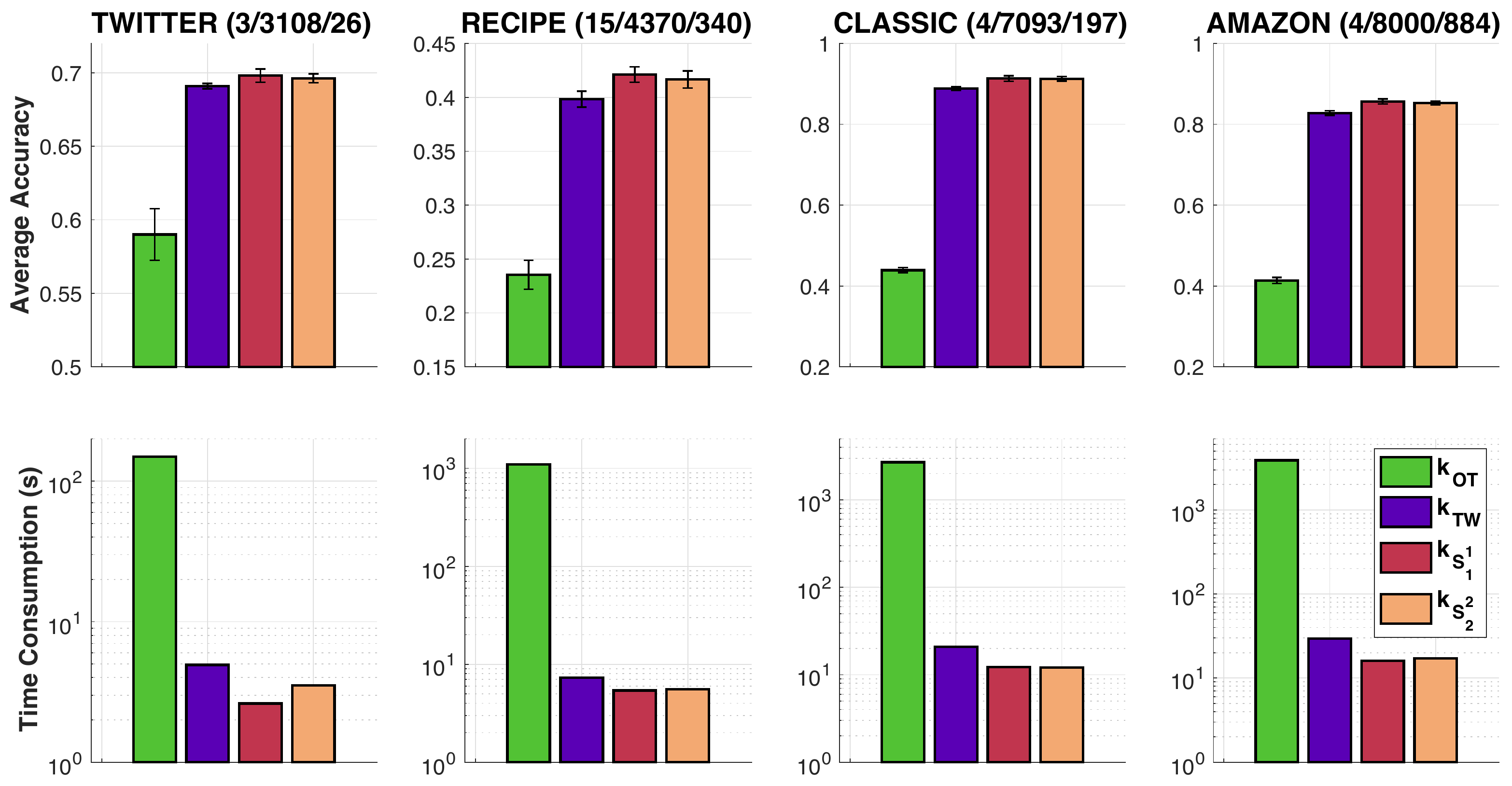}
  \end{center}
  \vspace{-6pt}
  \caption{SVM results and time consumption for kernel matrices with $\G_{\text{Sqrt}}$ where $M = 10^2$.}
  \label{fg:DOC_100_Pow_AccTime}
 \vspace{-10pt}
\end{figure}

%########## TDA ##########

\begin{figure}
    \centering
\begin{subfigure}[b]{0.48\textwidth}
%  \vspace{-16pt}
%  \begin{center}
    \includegraphics[width=0.9\textwidth]{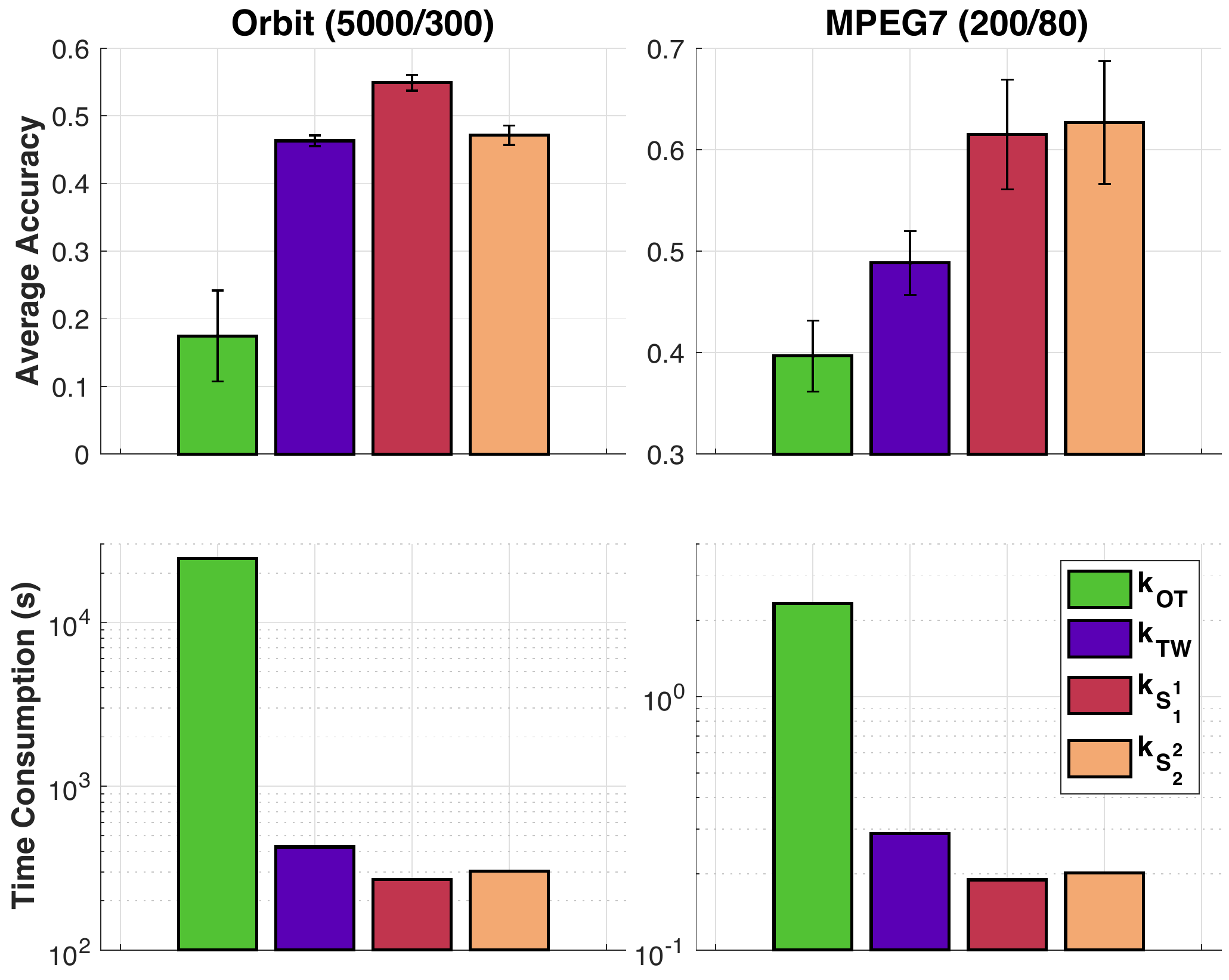}
%  \end{center}
  \vspace{-6pt}
  \caption{With $M=10^3$}
  \label{fg:TDA_1K_Log_AccTime}
 \vspace{-4pt}
\end{subfigure}
\hfill
\begin{subfigure}[b]{0.48\textwidth}
%  \vspace{-16pt}
%  \begin{center}
    \includegraphics[width=0.9\textwidth]{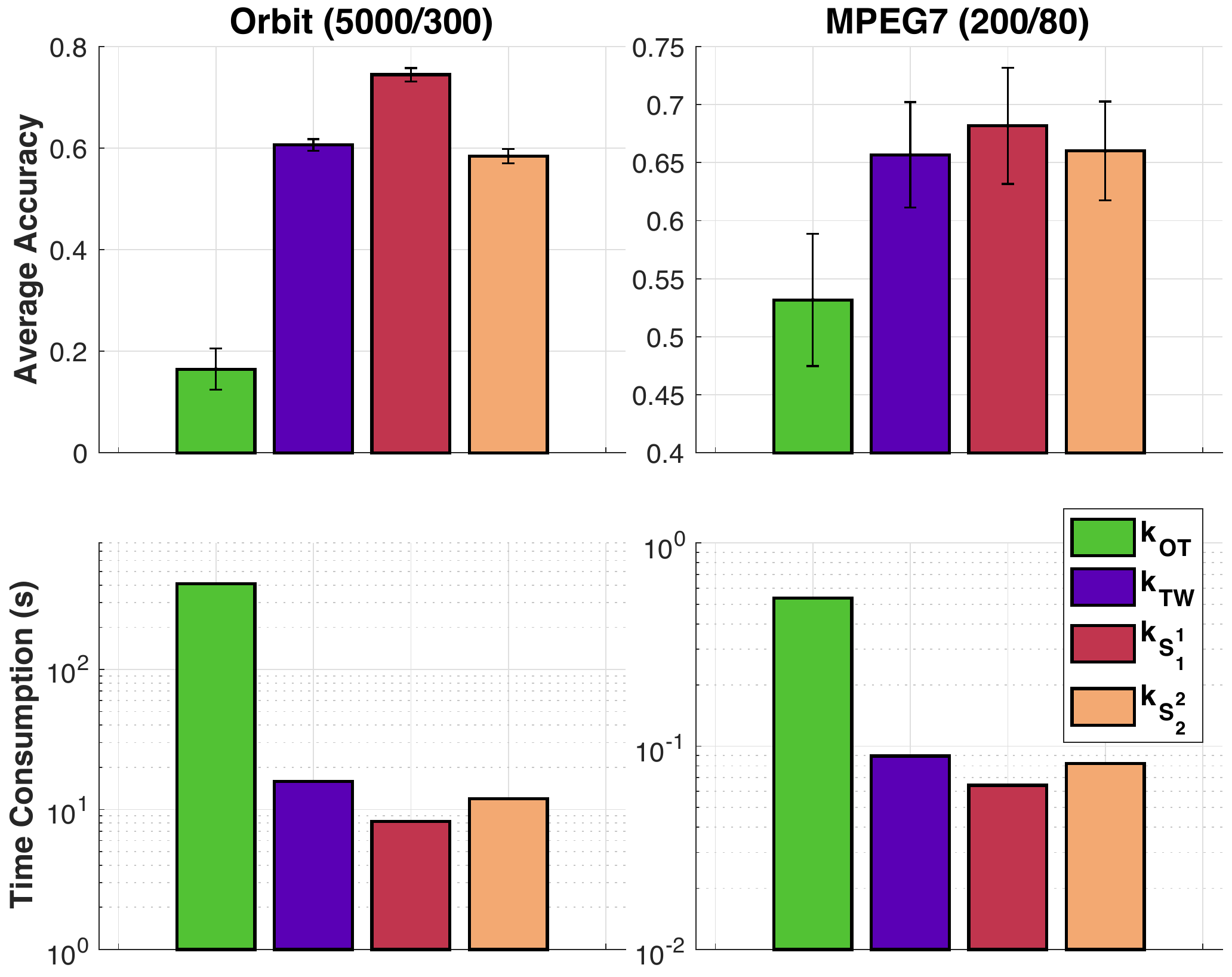}
%  \end{center}
  \vspace{-6pt}
  \caption{With $M=10^2$.}
  \label{fg:TDA_100_Log_AccTime}
 \vspace{-4pt}
\end{subfigure}
\caption{SVM results and time consumption for kernel matrices with $\G_{\text{Log}}$.}
\label{fg:TDA_Other_Log_AccTime}
\end{figure}

% ======= DOC (10K, 1K, 100) -- N^(3/2)

\begin{figure}
    \centering
\begin{subfigure}[b]{0.48\textwidth}
%  \vspace{-16pt}
%  \begin{center}
    \includegraphics[width=0.9\textwidth]{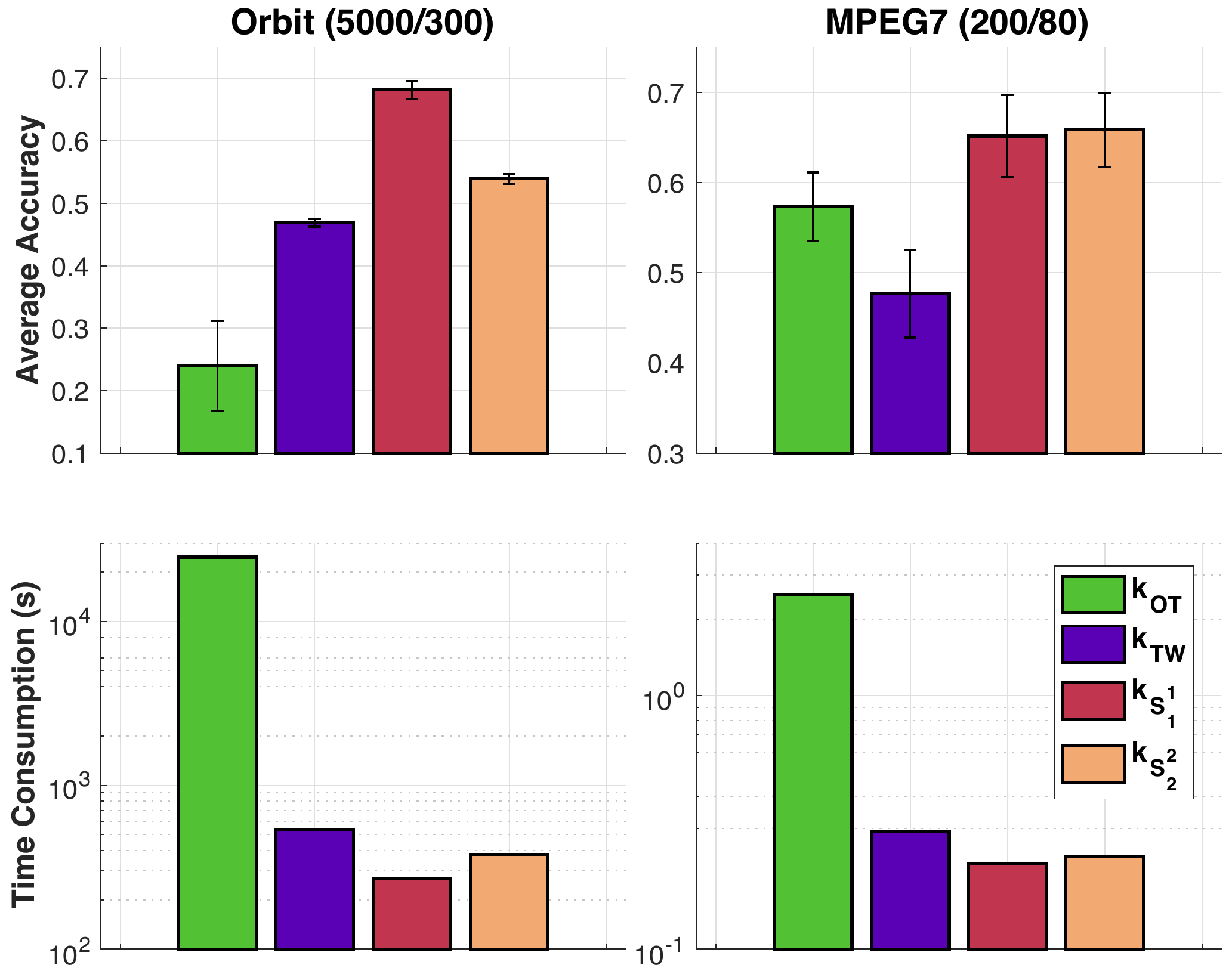}
%  \end{center}
  \vspace{-6pt}
  \caption{With $M=10^3$}
  \label{fg:TDA_1K_Log_AccTime}
 \vspace{-4pt}
\end{subfigure}
\hfill
\begin{subfigure}[b]{0.48\textwidth}
%  \vspace{-16pt}
%  \begin{center}
    \includegraphics[width=0.9\textwidth]{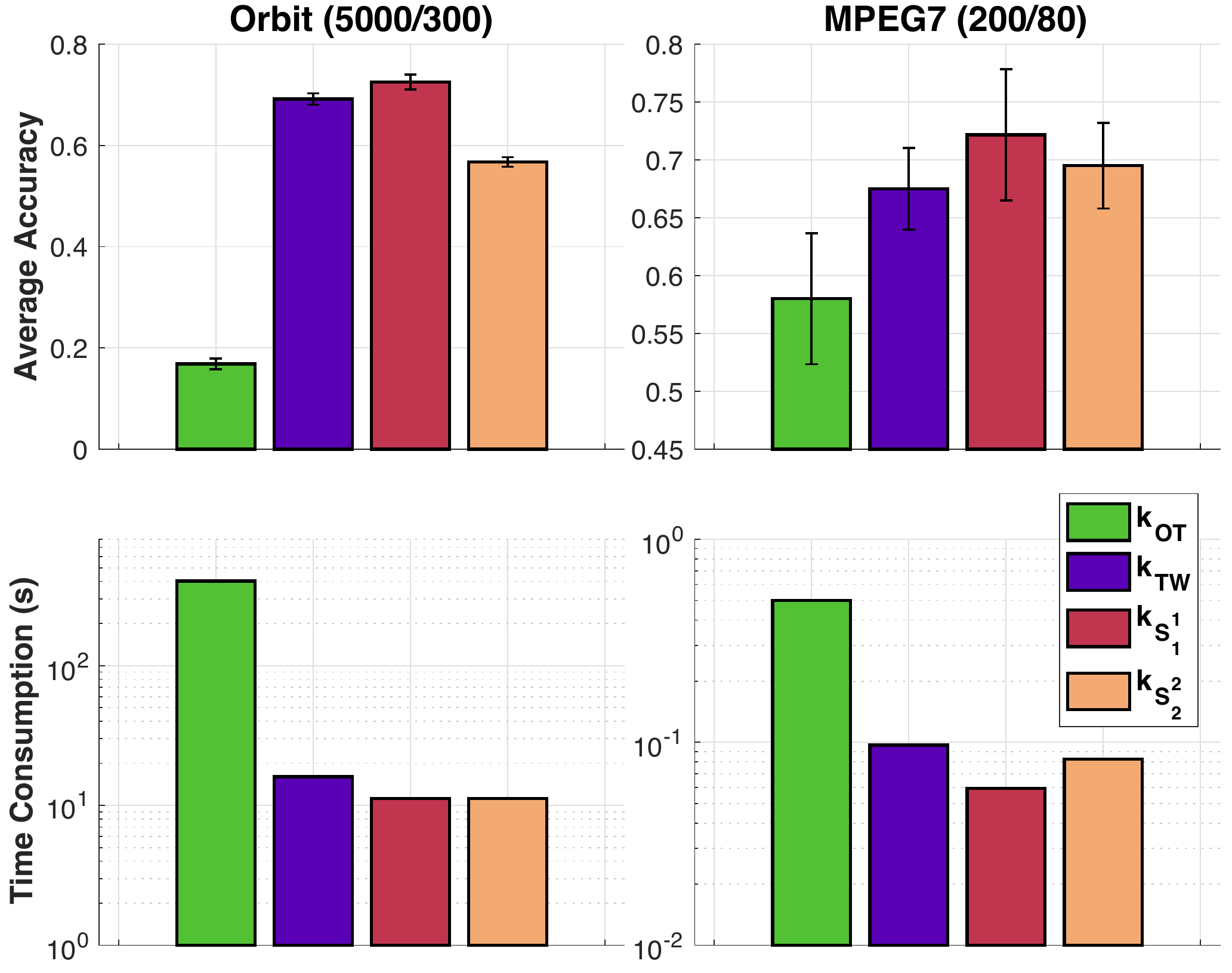}
%  \end{center}
  \vspace{-6pt}
  \caption{With $M=10^2$.}
  \label{fg:TDA_100_Log_AccTime}
 \vspace{-4pt}
\end{subfigure}
\caption{SVM results and time consumption for kernel matrices with $\G_{\text{Sqrt}}$.}
\label{fg:TDA_Other_Pow_AccTime}
\end{figure}

% ======= DOC -- LARGE SCALE (40K)
\begin{figure}
    \centering
\begin{subfigure}[b]{0.48\textwidth}
%  \vspace{-16pt}
%  \begin{center}
    \includegraphics[width=0.9\textwidth]{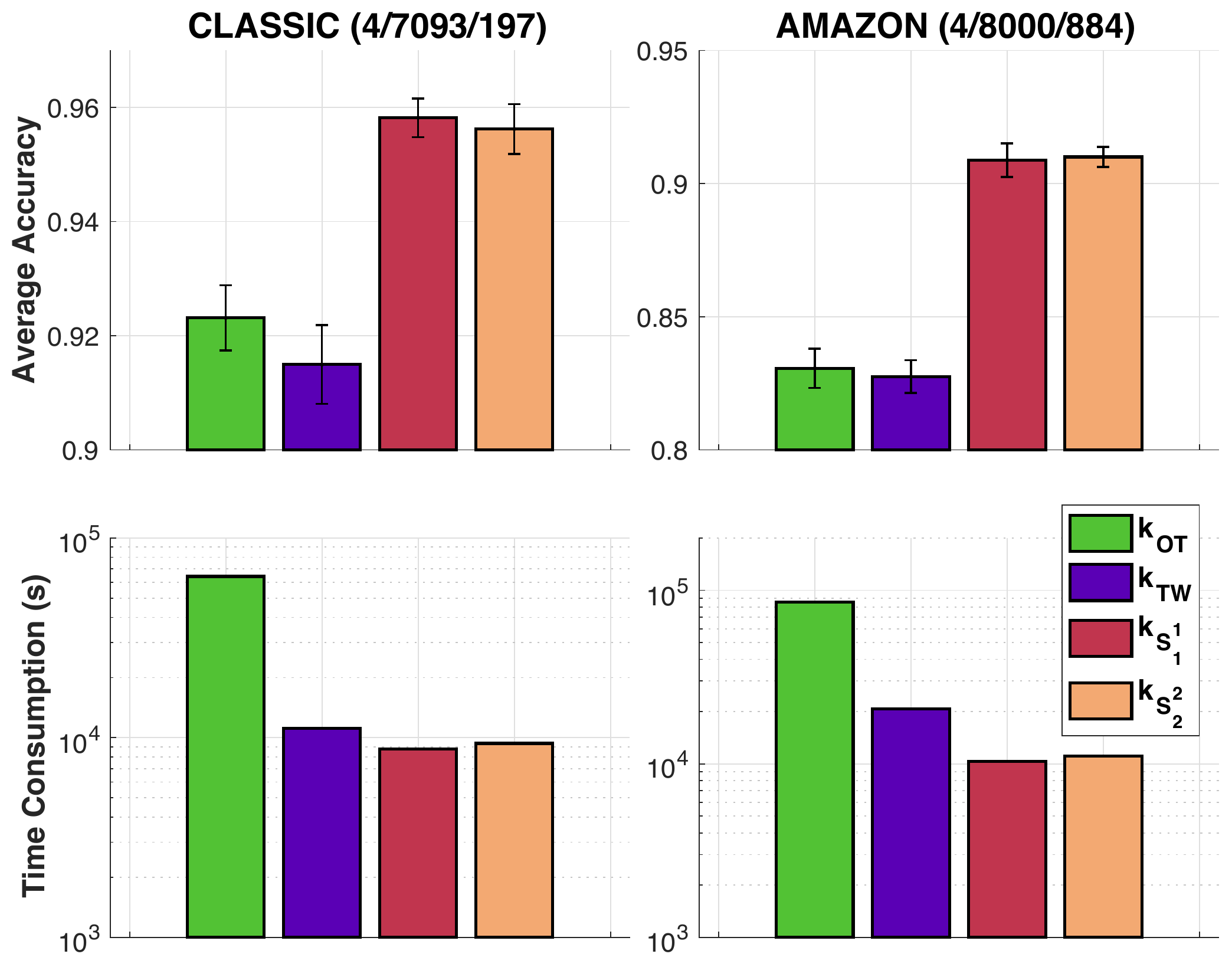}
%  \end{center}
  \vspace{-6pt}
  \caption{For graph $\G_{\text{Log}}$.}
  \label{fg:LargeGraph_40K_AccTime_Log}
 \vspace{-4pt}
\end{subfigure}
\hfill
\begin{subfigure}[b]{0.48\textwidth}
%  \vspace{-16pt}
%  \begin{center}
    \includegraphics[width=0.9\textwidth]{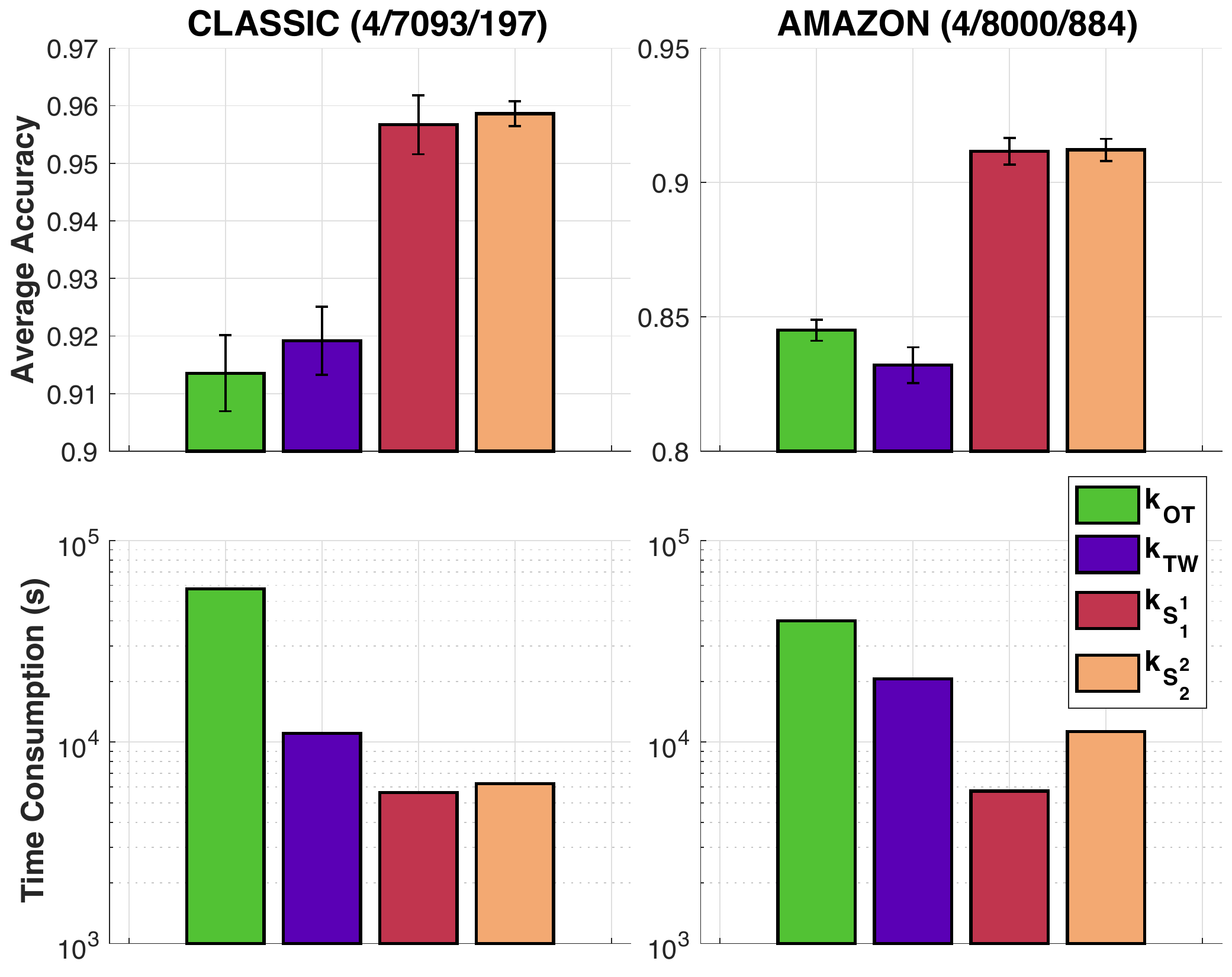}
%  \end{center}
  \vspace{-6pt}
  \caption{For graph $\G_{\text{Sqrt}}$.}
  \label{fg:LargeGraph_40K_AccTime_Pow}
 \vspace{-4pt}
\end{subfigure}
\caption{SVM results and time consumption for kernel matrices with large graphs where $M = 40000$.}
\label{fg:LargeGraph_40K_AccTime}
\end{figure}

% $$$$$$$$$$$$$$$$$$$$$$$$$$$$$$$$$
% SLICE

\begin{figure}
%  \vspace{-16pt}
  \begin{center}
    \includegraphics[width=0.65\textwidth]{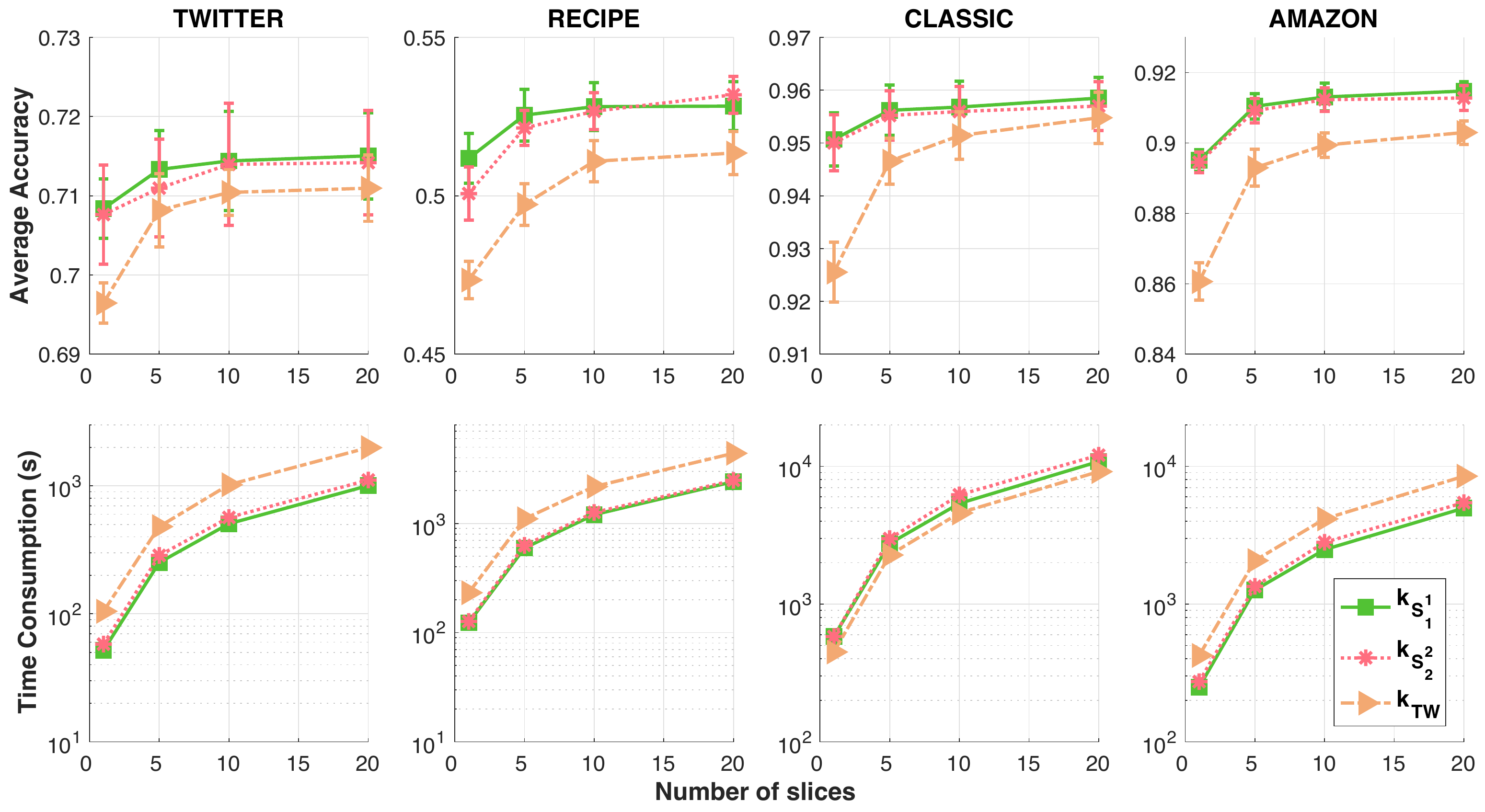}
  \end{center}
  \vspace{-6pt}
  \caption{SVM results and time consumption for kernel matrices of slice variants with $\G_{\text{Log}}$ ($M=10^3$).}
  \label{fg:DOC_1K_Log_AccTime_SLICE}
 \vspace{-10pt}
\end{figure}

\begin{figure}
%  \vspace{-16pt}
  \begin{center}
    \includegraphics[width=0.65\textwidth]{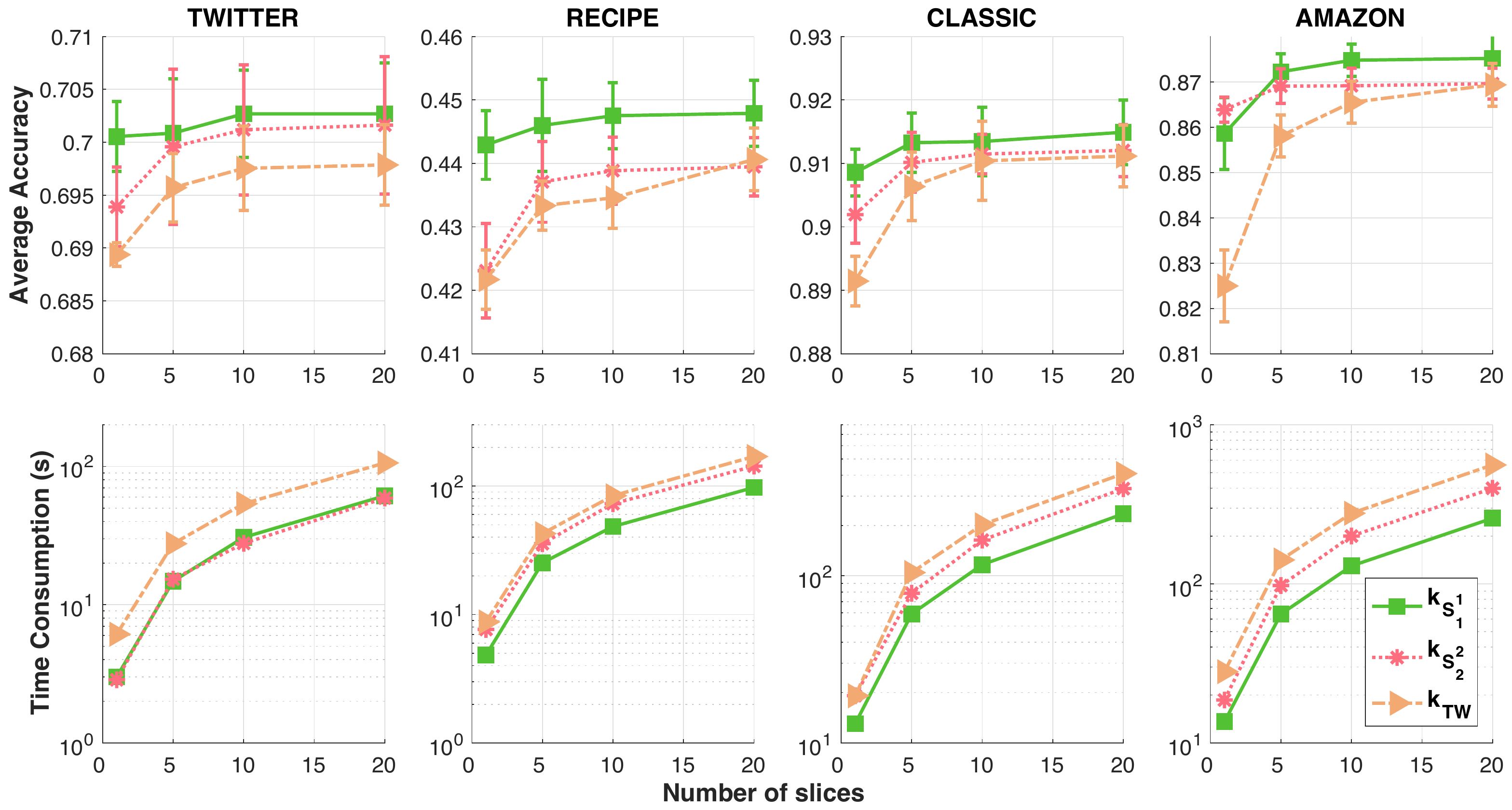}
  \end{center}
  \vspace{-6pt}
  \caption{SVM results and time consumption for kernel matrices of slice variants with $\G_{\text{Log}}$ ($M=10^2$).}
  \label{fg:DOC_100_Log_AccTime_SLICE}
 \vspace{-10pt}
\end{figure}

% ======= DOC (10K, 1K, 100) -- N^(3/2)
\begin{figure}
%  \vspace{-16pt}
  \begin{center}
    \includegraphics[width=0.65\textwidth]{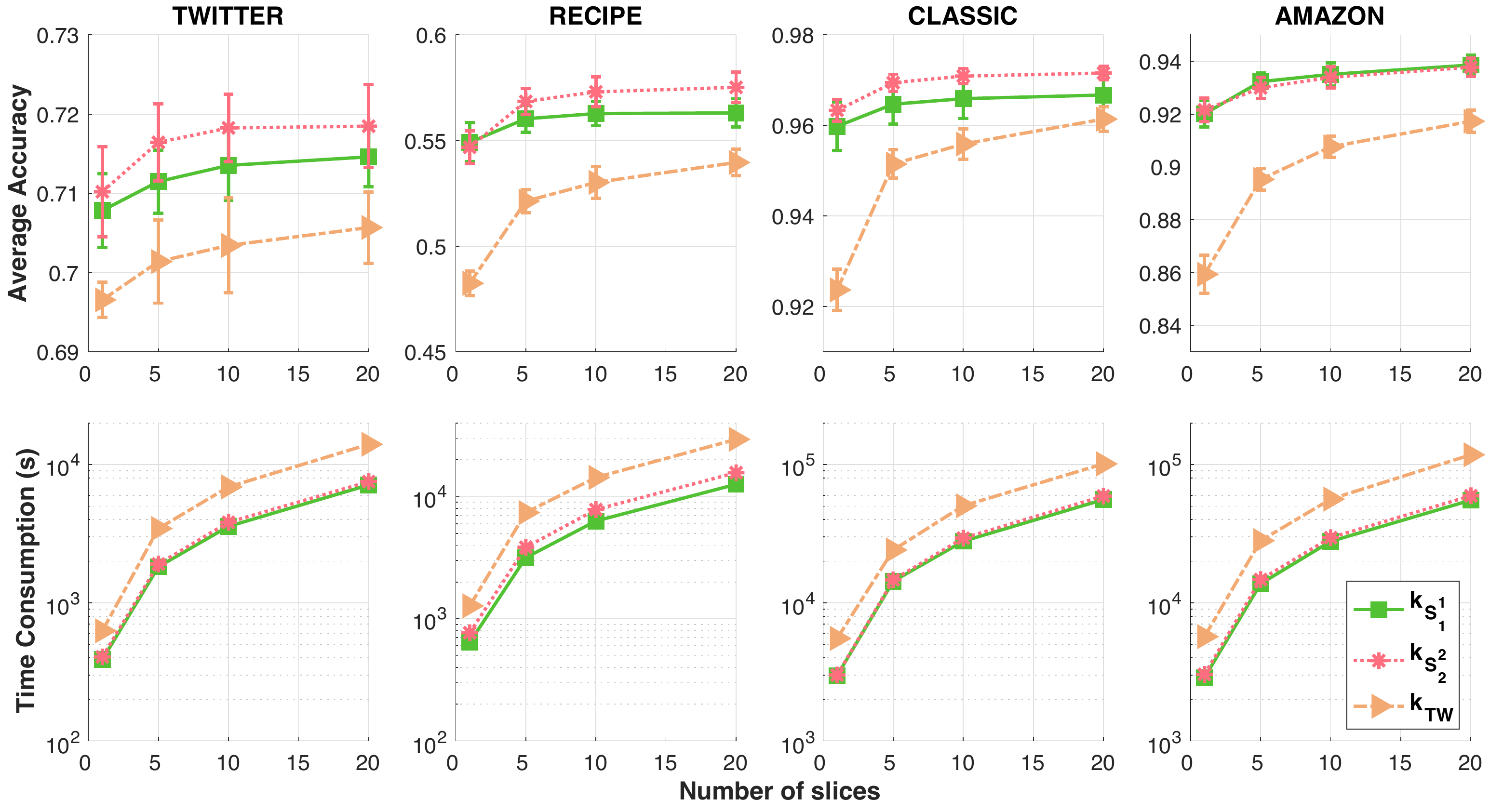}
  \end{center}
  \vspace{-6pt}
  \caption{SVM results and time consumption for kernel matrices of slice variants with $\G_{\text{Sqrt}}$ ($M=10^4$).}
  \label{fg:DOC_10K_Pow_AccTime_SLICE}
 \vspace{-10pt}
\end{figure}

\begin{figure}
%  \vspace{-16pt}
  \begin{center}
    \includegraphics[width=0.65\textwidth]{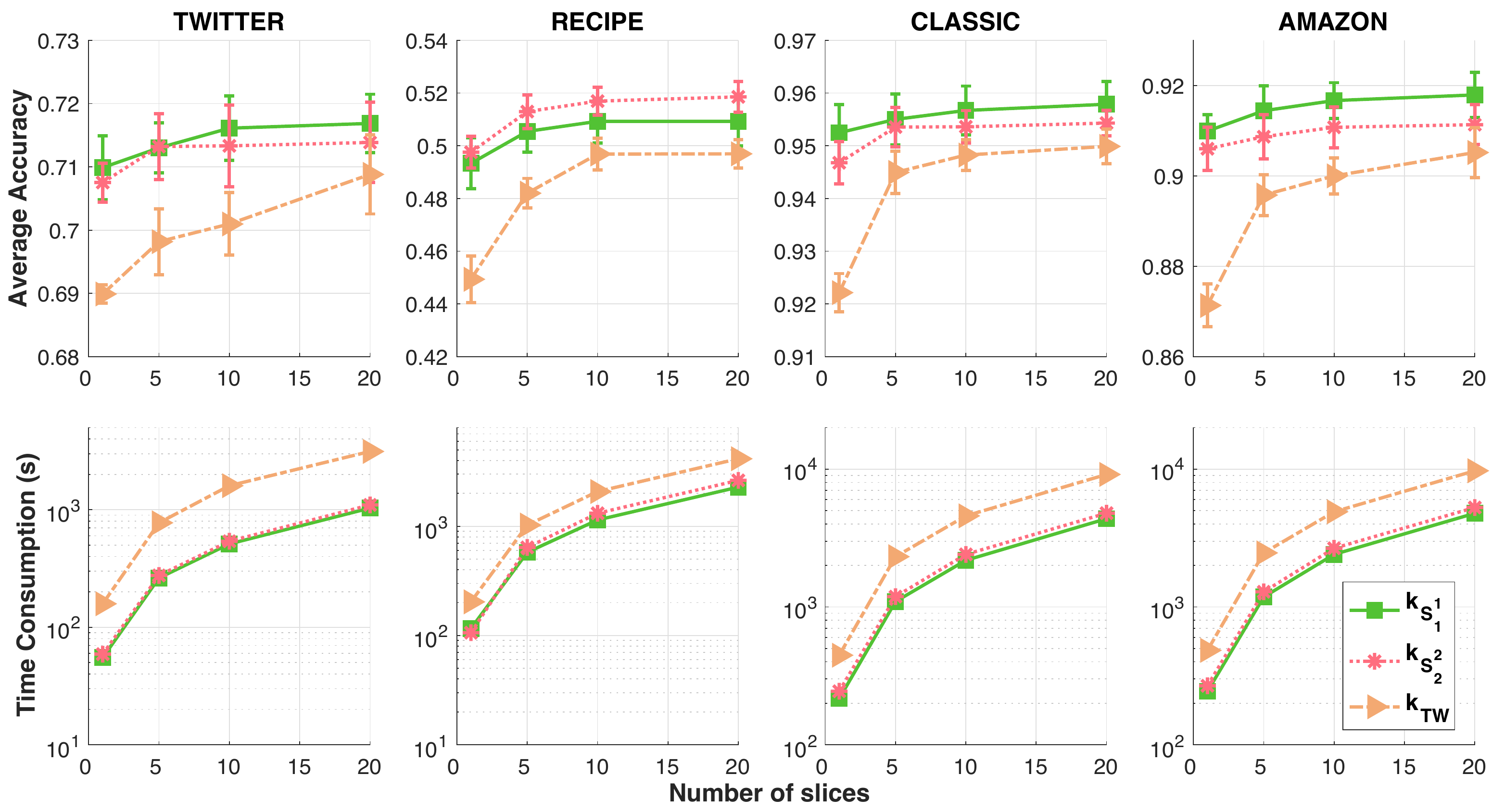}
  \end{center}
  \vspace{-6pt}
  \caption{SVM results and time consumption for kernel matrices of slice variants with $\G_{\text{Sqrt}}$ ($M=10^3$).}
  \label{fg:DOC_1K_Pow_AccTime_SLICE}
 \vspace{-10pt}
\end{figure}

\begin{figure}
%  \vspace{-16pt}
  \begin{center}
    \includegraphics[width=0.65\textwidth]{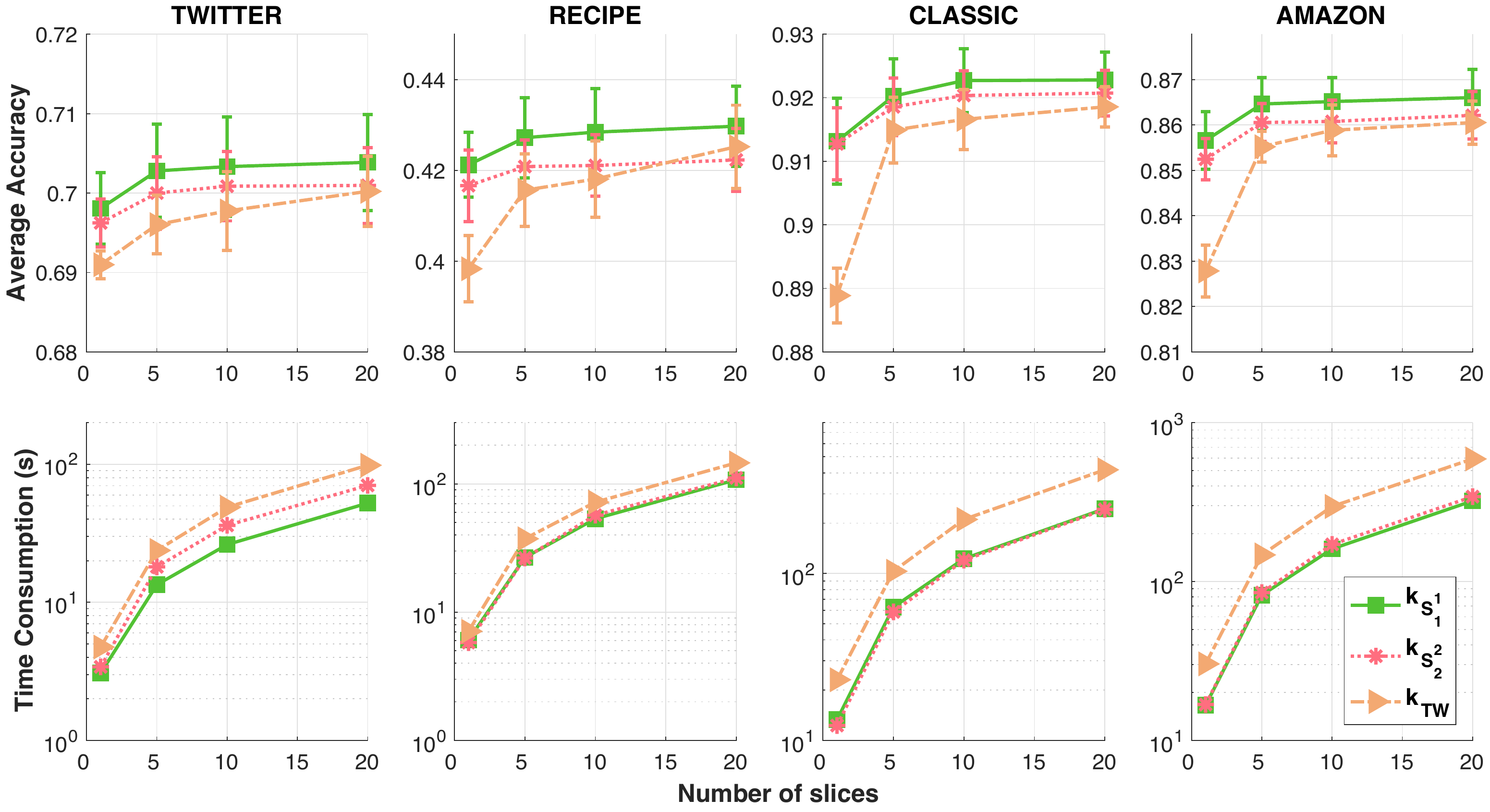}
  \end{center}
  \vspace{-6pt}
  \caption{SVM results and time consumption for kernel matrices of slice variants with $\G_{\text{Sqrt}}$ ($M=10^2$).}
  \label{fg:DOC_100_Pow_AccTime_SLICE}
 \vspace{-10pt}
\end{figure}

%~~~~~~~~ TDA (LOG)
\begin{figure}
%  \vspace{-16pt}
  \begin{center}
    \includegraphics[width=0.45\textwidth]{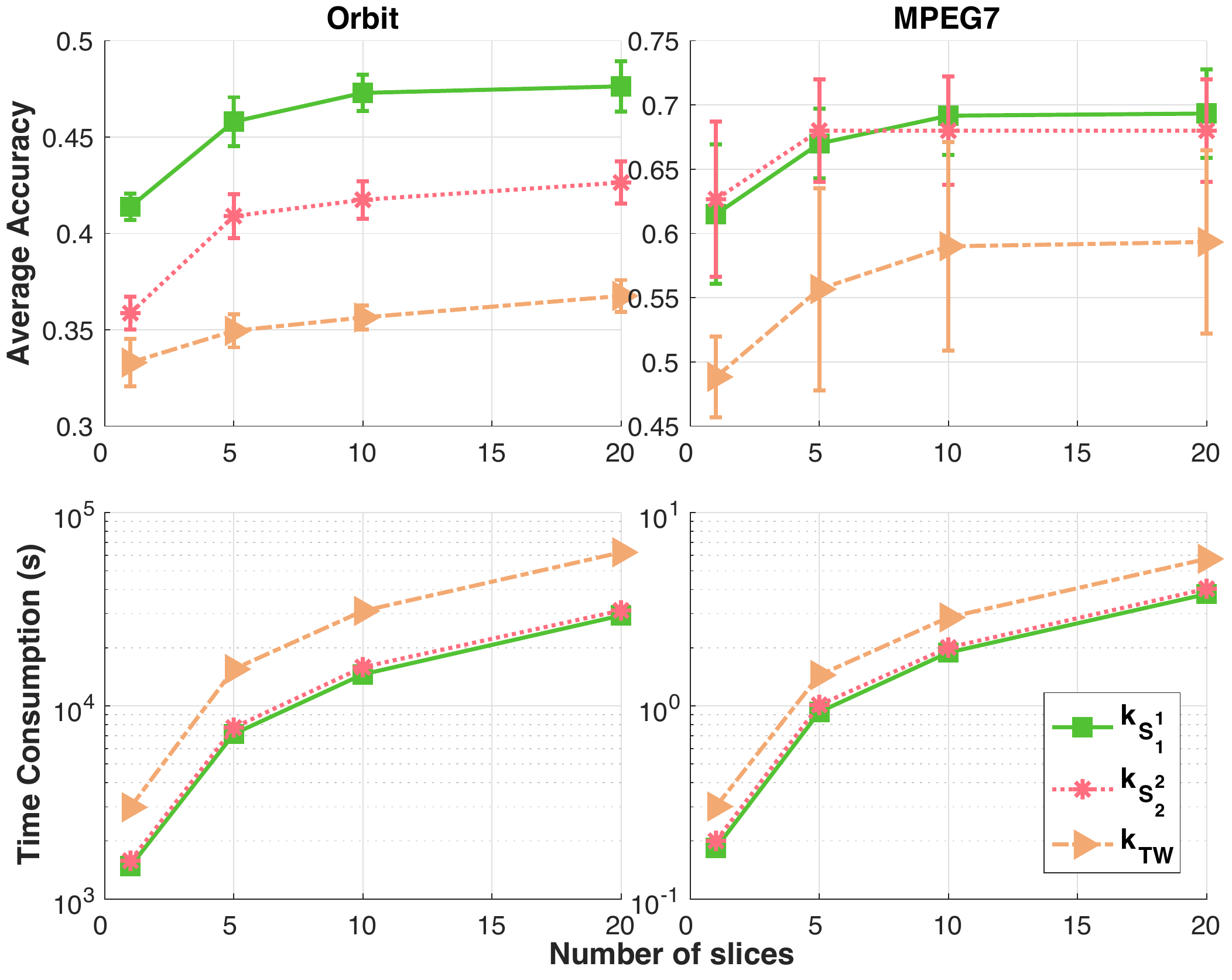}
  \end{center}
  \vspace{-6pt}
  \caption{SVM results and time consumption for kernel matrices of slice variants with $\G_{\text{Log}}$ where $M=10^4$ for \texttt{Orbit}, and $M=10^3$ for \texttt{MPEG7}.}
  \label{fg:TDA_mix10K1K_Log_AccTime_SLICE}
 \vspace{-10pt}
\end{figure}

\begin{figure}
    \centering
\begin{subfigure}[b]{0.48\textwidth}
%  \vspace{-16pt}
  %\begin{center}
    \includegraphics[width=0.85\textwidth]{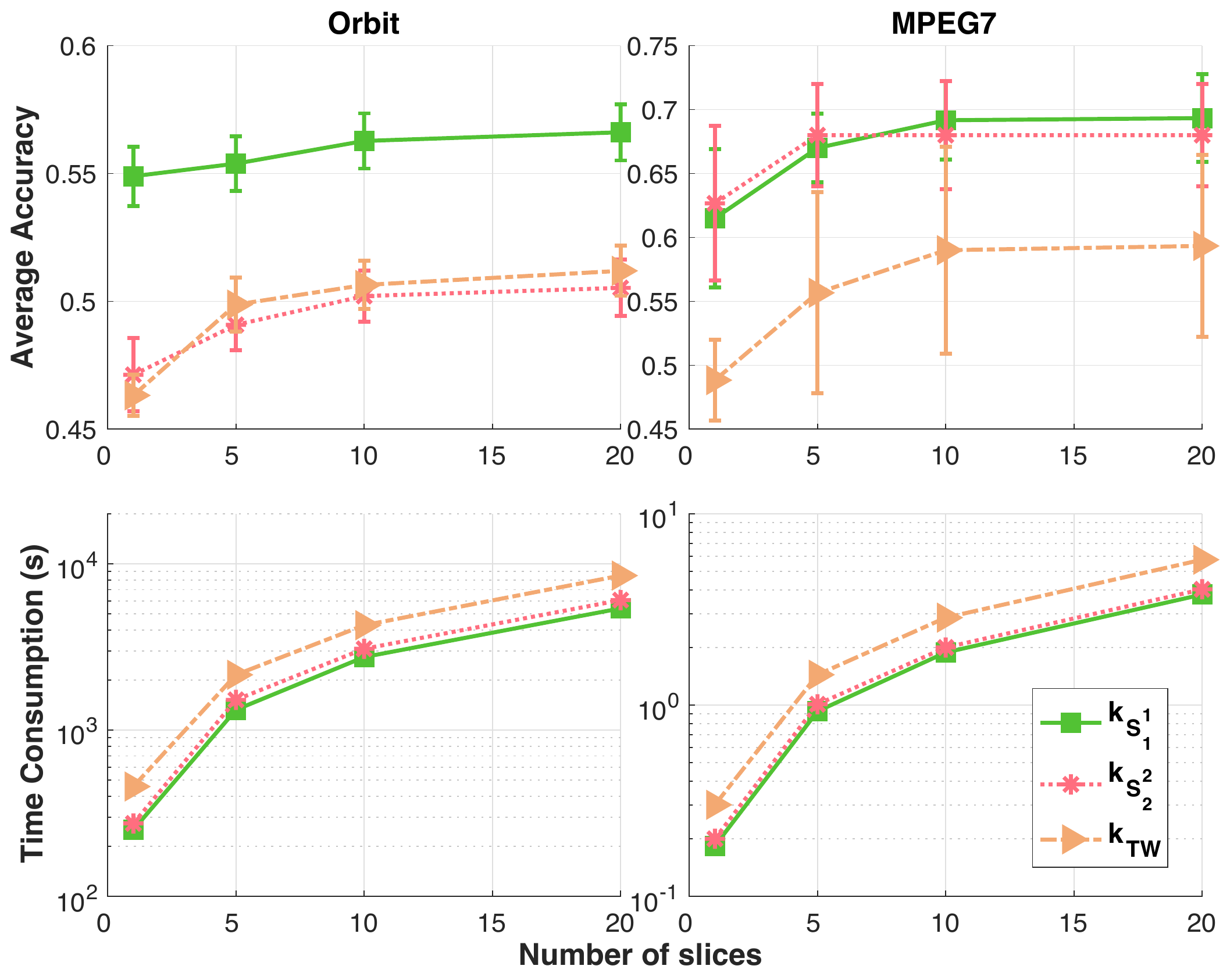}
%  \end{center}
  \vspace{-6pt}
  \caption{With $M=10^3$.}
  \label{fg:TDA_1K_Log_AccTime_SLICE}
 \vspace{-4pt}
\end{subfigure}
\hfill
\begin{subfigure}[b]{0.48\textwidth}
%  \vspace{-16pt}
 % \begin{center}
    \includegraphics[width=0.85\textwidth]{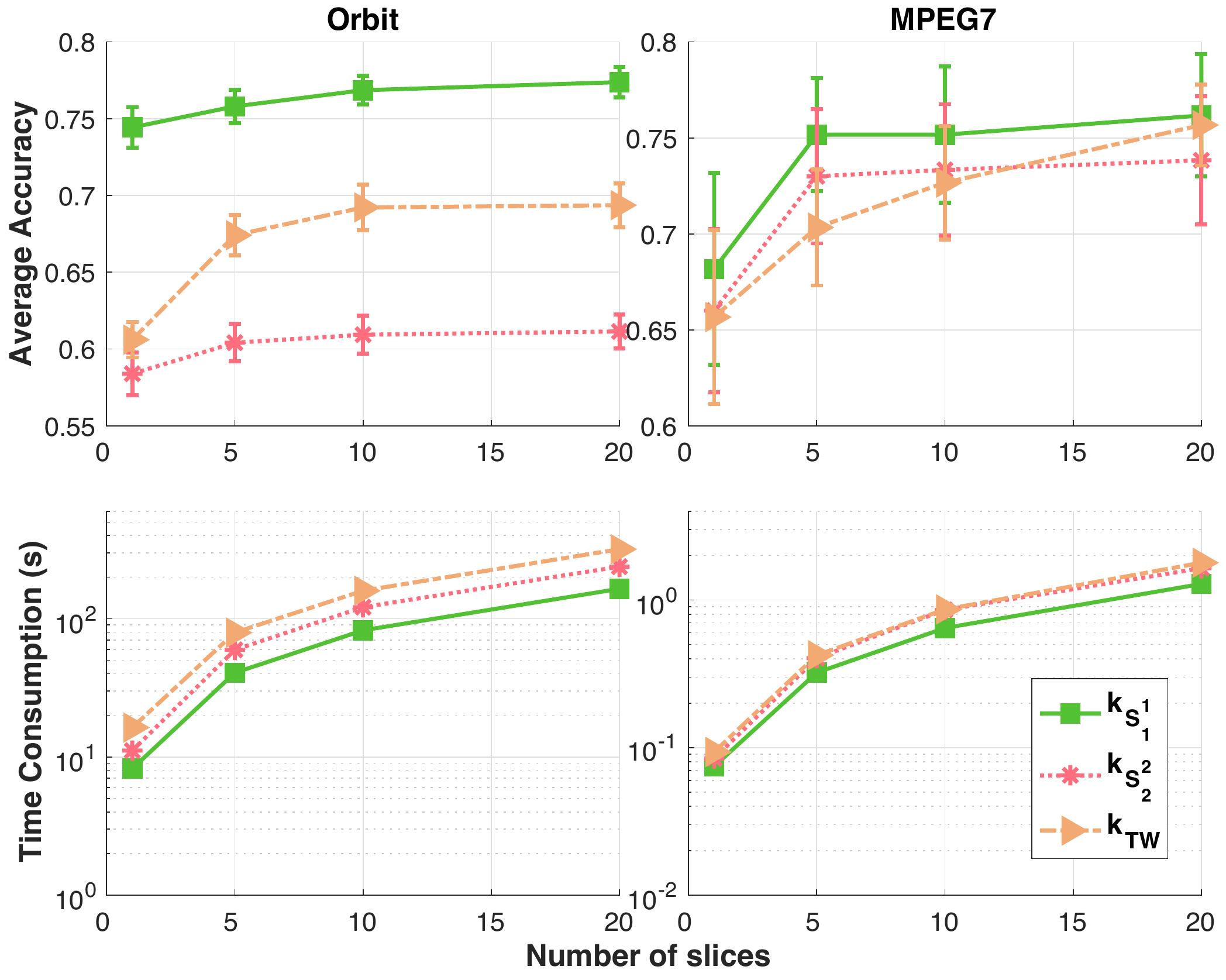}
 % \end{center}
  \vspace{-6pt}
  \caption{With $M=10^2$.}
  \label{fg:TDA_100_Log_AccTime_SLICE}
 \vspace{-4pt}
\end{subfigure}

    \caption{SVM results and time consumption for kernel matrices of slice variants with $\G_{\text{Log}}$ for TDA.}
    \label{fg:Other_TDA_Log_AccTime_SLICE}
\end{figure}

% ~~~~~~~~~~~~~~ TDA (SQRT)

\begin{figure}
%  \vspace{-16pt}
  \begin{center}
    \includegraphics[width=0.45\textwidth]{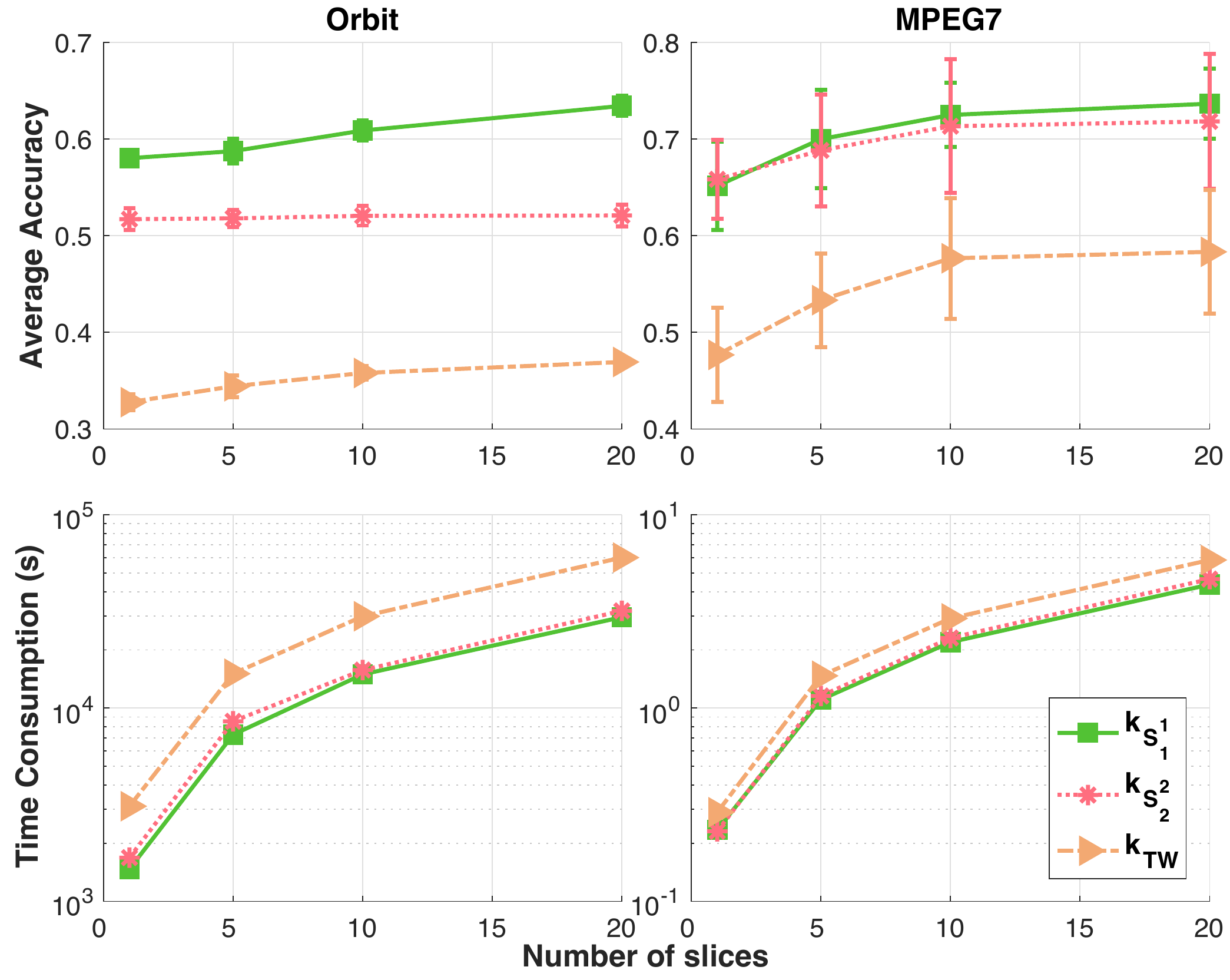}
  \end{center}
  \vspace{-6pt}
  \caption{SVM results and time consumption for kernel matrices of slice variants with $\G_{\text{Sqrt}}$ where $M=10^4$ for \texttt{Orbit}, and $M=10^3$ for \texttt{MPEG7}.}
  \label{fg:TDA_mix10K1K_Pow_AccTime_SLICE}
 \vspace{-10pt}
\end{figure}

\begin{figure}
    \centering
\begin{subfigure}[b]{0.48\textwidth}
%  \vspace{-16pt}
%  \begin{center}
    \includegraphics[width=0.85\textwidth]{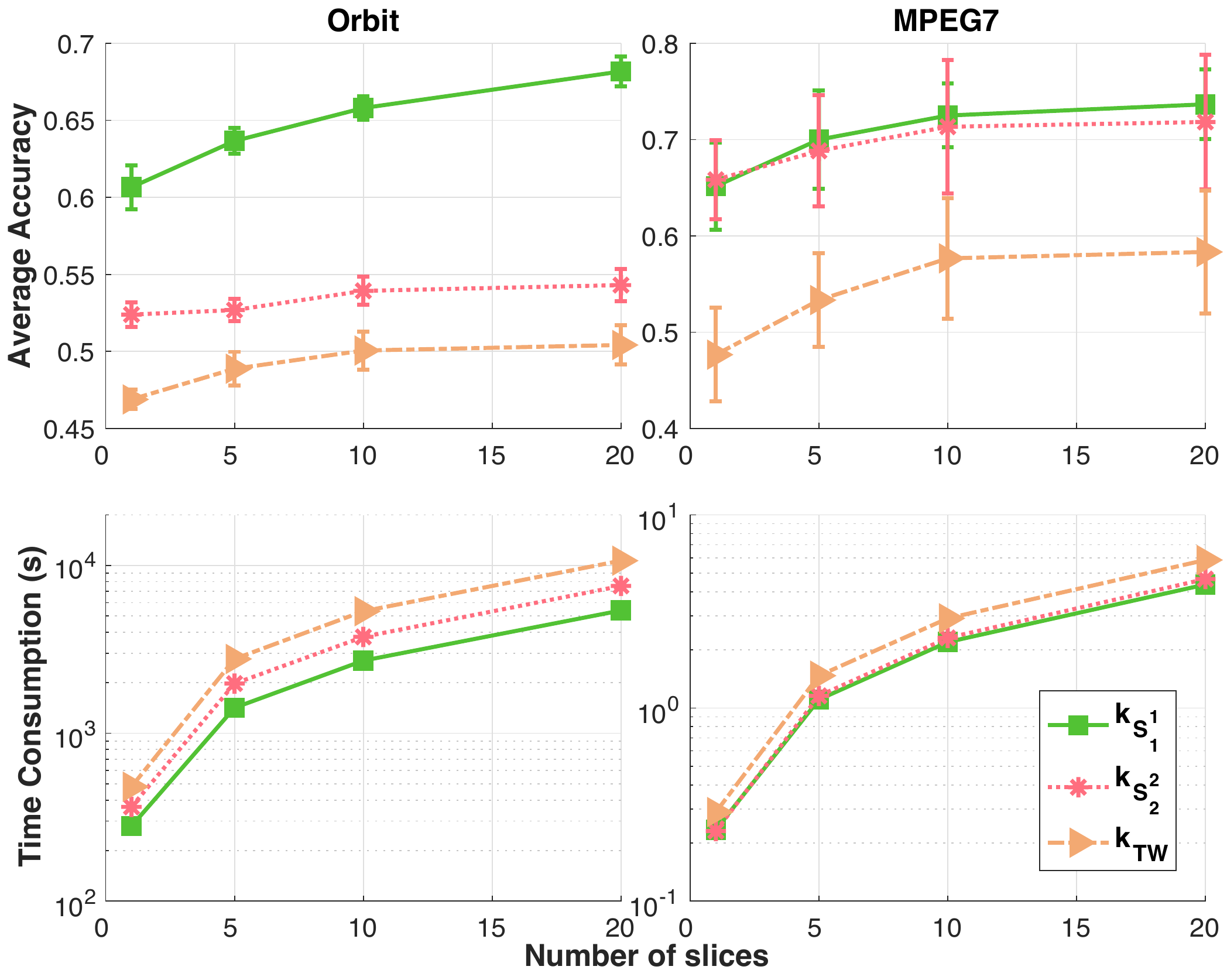}
%  \end{center}
  \vspace{-6pt}
  \caption{With $M=10^3$.}
  \label{fg:TDA_1K_Pow_AccTime_SLICE}
 \vspace{-4pt}
\end{subfigure}
\hfill
\begin{subfigure}[b]{0.48\textwidth}
%  \vspace{-16pt}
 % \begin{center}
    \includegraphics[width=0.85\textwidth]{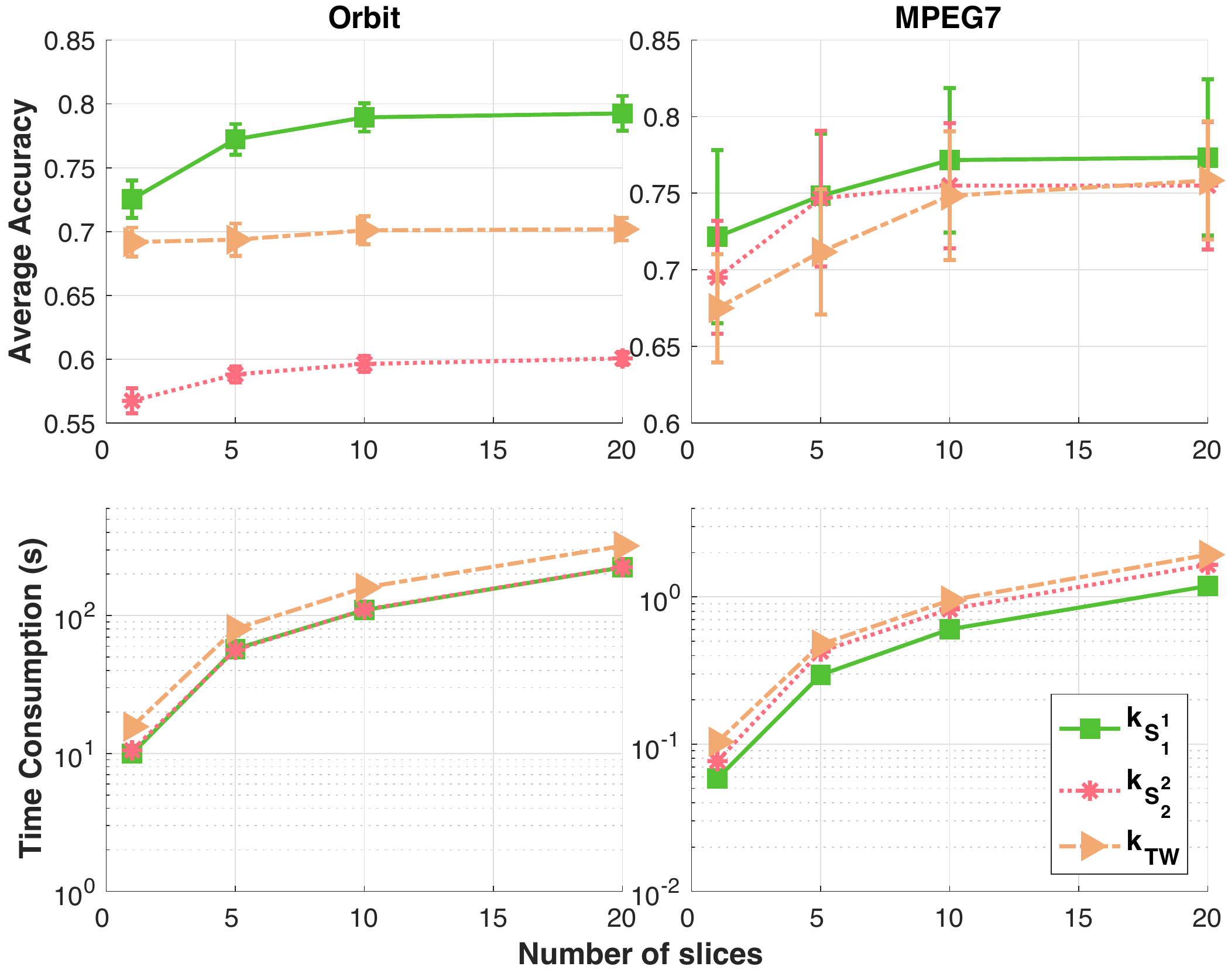}
 % \end{center}
  \vspace{-6pt}
  \caption{With $M=10^2$.}
  \label{fg:TDA_100_Pow_AccTime_SLICE}
 \vspace{-4pt}
\end{subfigure}

   \caption{SVM results and time consumption for kernel matrices of slice variants with $\G_{\text{Sqrt}}$ for TDA.}
    \label{fg:Other_TDA_Pow_AccTime_SLICE}
\end{figure}

% ======= DOC -- LARGE SCALE (40K)

\begin{figure}
    \centering
\begin{subfigure}[b]{0.48\textwidth}
%  \vspace{-16pt}
%  \begin{center}
    \includegraphics[width=0.85\textwidth]{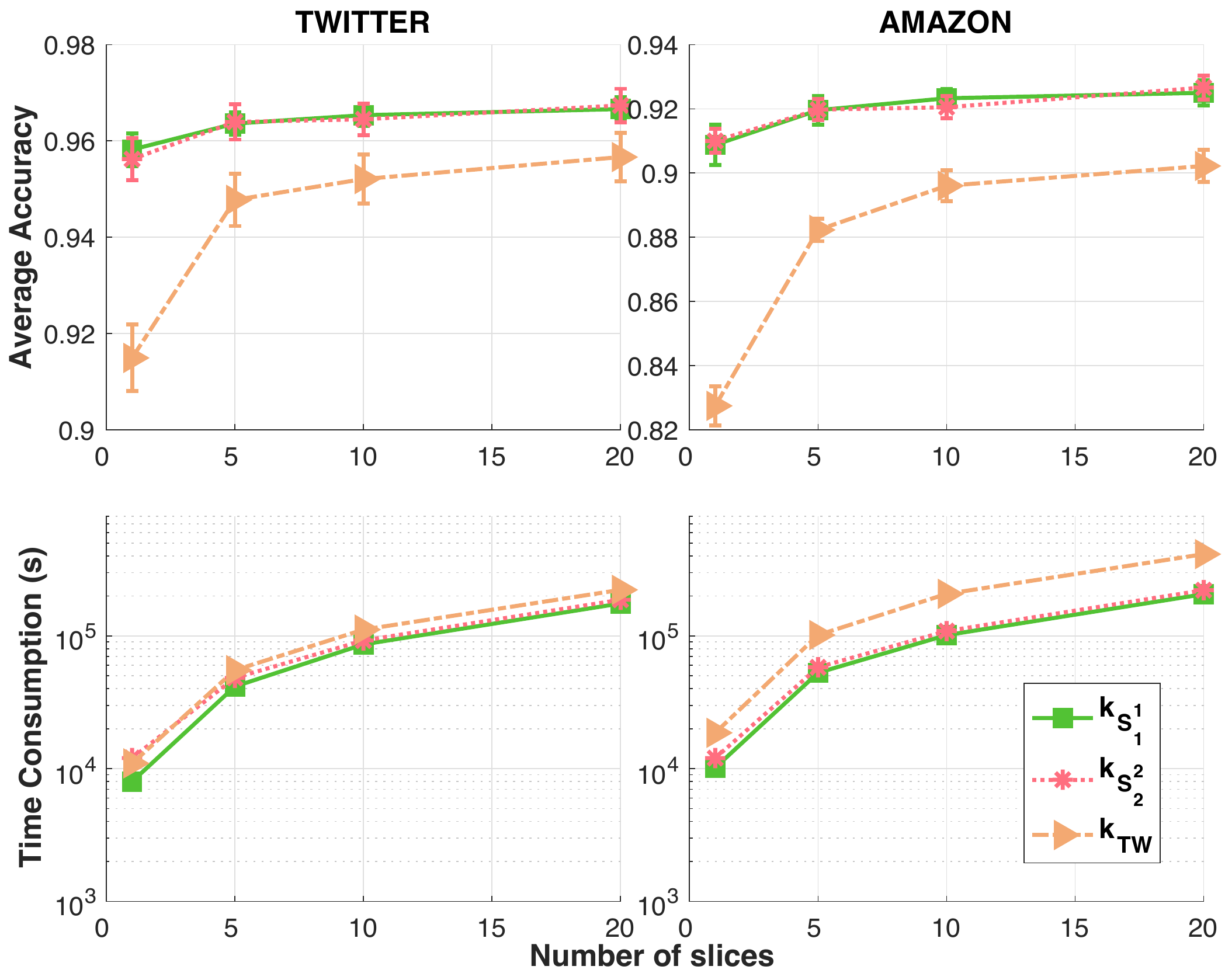}
 % \end{center}
  \vspace{-6pt}
  \caption{For graph $\G_{\text{Log}}$.}
  \label{fg:DOC_40K_Log_AccTime_SLICE}
 \vspace{-4pt}
\end{subfigure}
\hfill
\begin{subfigure}[b]{0.48\textwidth}
%  \vspace{-16pt}
%  \begin{center}
    \includegraphics[width=0.85\textwidth]{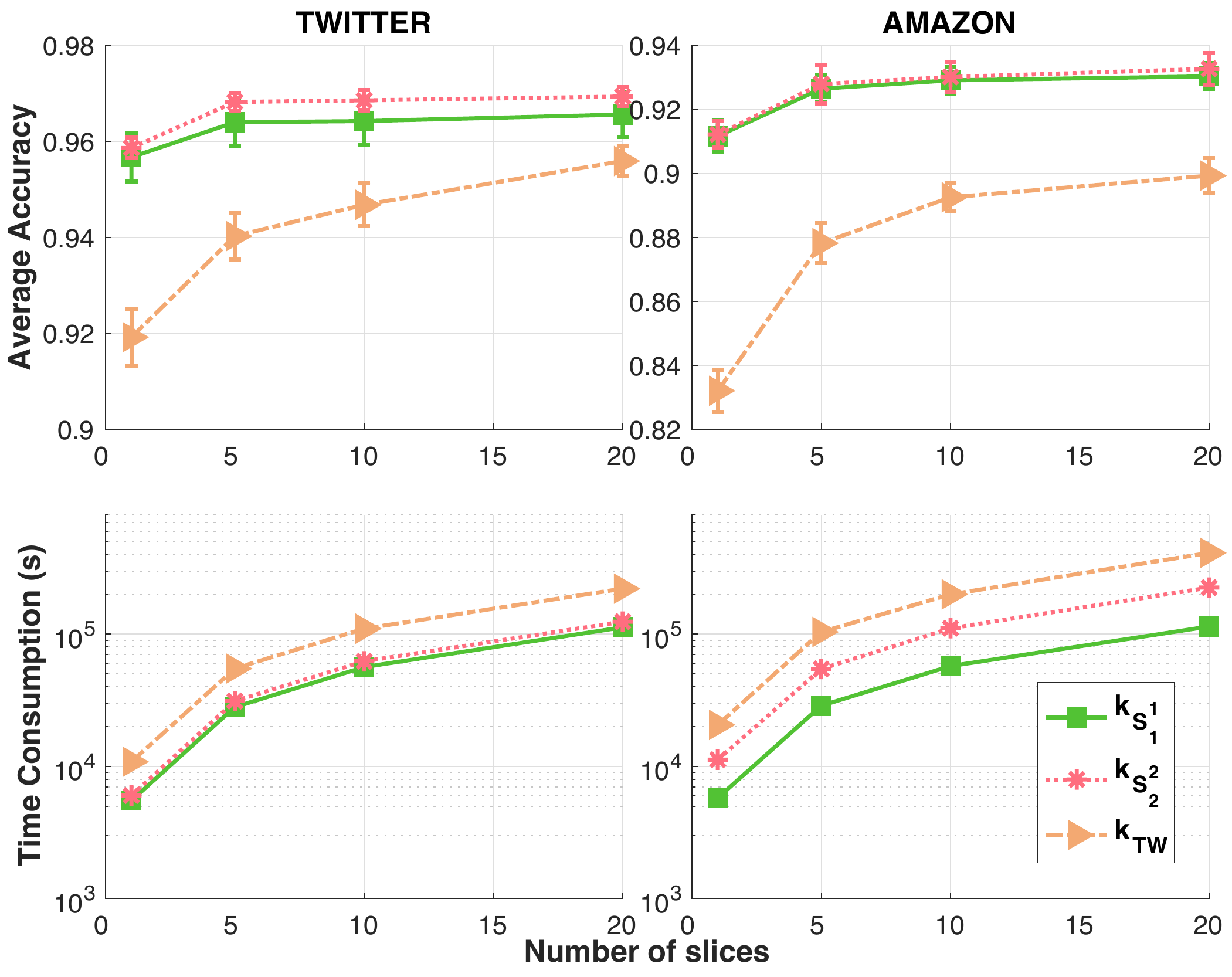}
%  \end{center}
  \vspace{-6pt}
  \caption{For graph $\G_{\text{Pow}}$.}
  \label{fg:DOC_40K_Pow_AccTime_SLICE}
 \vspace{-4pt}
\end{subfigure}
    \caption{SVM results and time consumption for kernel matrices of slice variants with $\G_{\text{Sqrt}}$ for TDA with a large graph where the number of nodes is $40000$.}
    \label{fg:Other_TDA_Large_AccTime_SLICE}
\end{figure}

% %%%% MST
 \begin{figure}
     \centering
 \begin{subfigure}[b]{0.48\textwidth}
 %  \vspace{-16pt}
 %  \begin{center}
     \includegraphics[width=0.93\textwidth]{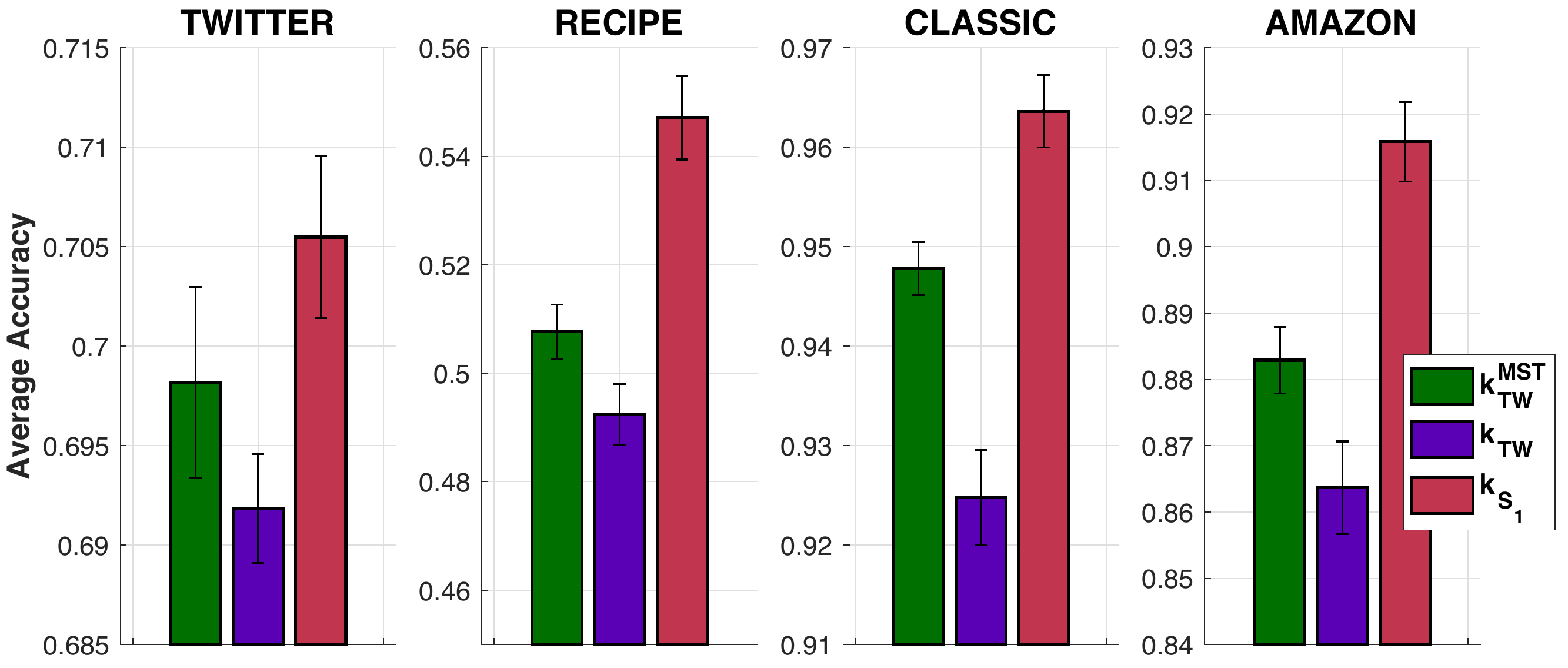}
  % \end{center}
   \vspace{-6pt}
   \caption{For graph $\G_{\text{Log}}$.}
   \label{fg:DOC_MST_40K_Log_AccTime}
  \vspace{-4pt}
 \end{subfigure}
 \hfill
 \begin{subfigure}[b]{0.48\textwidth}
 %  \vspace{-16pt}
 %  \begin{center}
     \includegraphics[width=0.99\textwidth]{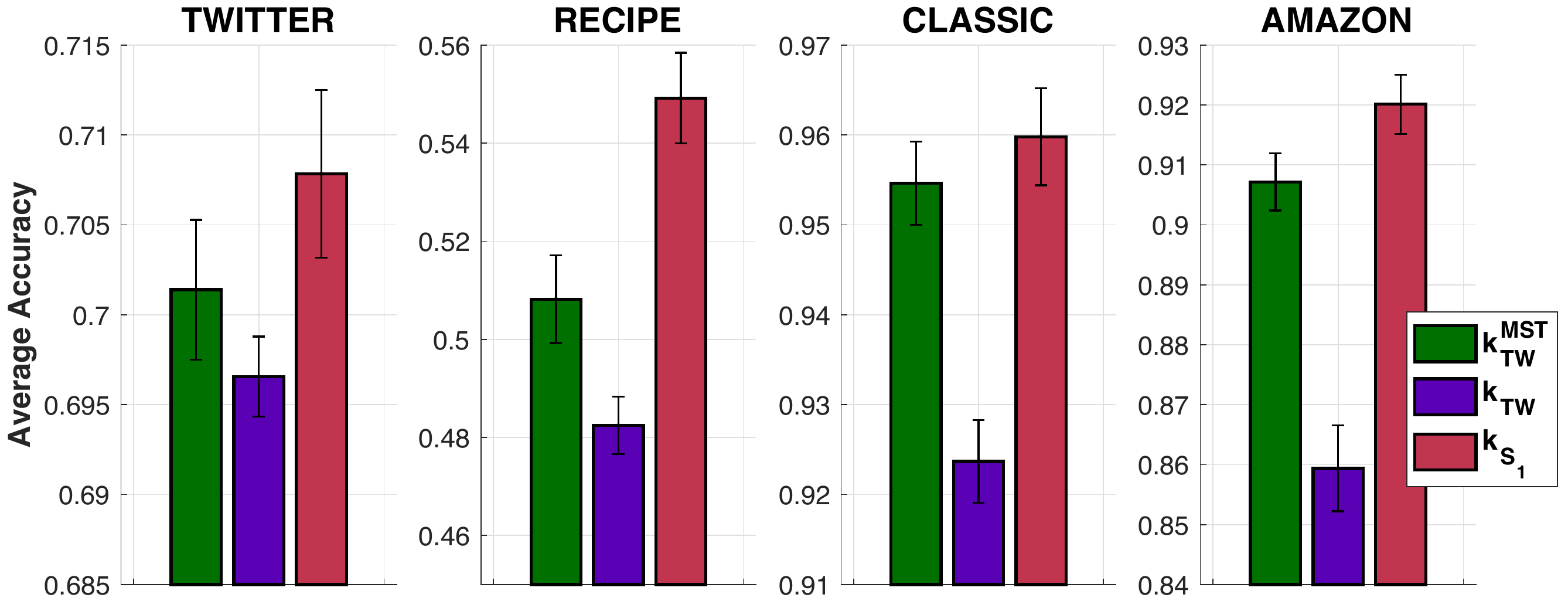}
 %  \end{center}
   \vspace{-6pt}
   \caption{For graph $\G_{\text{Pow}}$.}
   \label{fg:DOC_MST_40K_Pow_AccTime}
  \vspace{-4pt}
 \end{subfigure}
     \caption{SVM results for document classification with $M=10000$ graph nodes.}
     \label{fg:Other_DOC_MST_AccTime}
 \end{figure}

\end{document}